\icmltitlerunning{Online Learning with Imperfect Hints}
\newcommand{\norm}[1]{\left\Vert #1 \right\Vert}
\newcommand{\R}{\mathbb{R}}
\newcommand{\E}{\mathbb{E}}
\newcommand{\iprod}[1]{\langle #1 \rangle}
\newcommand{\cA}{\mathcal{A}}
\newcommand{\xbar}{\overline{x}}
\newcommand{\argmin}{\mathop{\text{argmin}}}
\newtheorem{theorem}{Theorem}[section]
\newtheorem{lemma}[theorem]{Lemma}
\newtheorem{proposition}[theorem]{Proposition}
\newtheorem{defn}[theorem]{Definition}
\newtheorem{corr}[theorem]{Corollary}
\begin{document}

\twocolumn[
\icmltitle{Online Learning with Imperfect Hints}



\icmlsetsymbol{equal}{*}

\begin{icmlauthorlist}
\icmlauthor{Aditya Bhaskara}{utah}
\icmlauthor{Ashok Cutkosky}{googlemtv,bu}
\icmlauthor{Ravi Kumar}{googlemtv}
\icmlauthor{Manish Purohit}{googlemtv}
\end{icmlauthorlist}

\icmlaffiliation{utah}{University of Utah, Salt Lake City Utah, USA}
\icmlaffiliation{googlemtv}{Google Research, Mountain View California, USA}
\icmlaffiliation{bu}{Boston University, Boston Massachusetts, USA}

\icmlcorrespondingauthor{Aditya Bhaskara}{bhaskaraaditya@gmail.com}
\icmlcorrespondingauthor{Ashok Cutkosky}{ashok@cutkosky.com}
\icmlcorrespondingauthor{Ravi Kumar}{ravi.k53@gmail.com}
\icmlcorrespondingauthor{Manish Purohit}{mpurohit@google.com}
\icmlkeywords{Online Learning, Hints, Optimism}

\vskip 0.3in
]



\printAffiliationsAndNotice{}  

\newcommand{\mymath}[1]{\centerline{$\displaystyle{#1}$}}

\newcommand{\alg}{\textsc{Alg}}
\newcommand{\xhat}{\overline{x}}
\newcommand{\inner}[2]{\langle #1, #2 \rangle}

\newcommand{\CA}{\mathcal{A}}
\newcommand{\CB}{\mathcal{B}}
\newcommand{\bB}{\mathbb{B}}
\newcommand{\regret}{\mathcal{R}}
\newcommand{\reals}{\mathbb{R}}
\newcommand{\bK}{\mathbb{K}}

\begin{abstract}
We consider a variant of the classical online linear optimization problem in which at every step, the online player receives a ``hint'' vector before choosing the action for that round. Rather surprisingly, it was shown that if the hint vector is guaranteed to have a positive correlation with the cost vector, then the online player can achieve a regret of $O(\log T)$, thus significantly improving over the $O(\sqrt{T})$ regret in the general setting. However, the result and analysis require the correlation property at \emph{all} time steps, thus raising the natural question: can we design online learning algorithms that are resilient to bad hints?  \\
In this paper we develop algorithms and nearly matching lower bounds for online learning with imperfect directional hints.  Our algorithms are oblivious to the quality of the hints, and the regret bounds interpolate between the always-correlated hints case and the no-hints case.  Our results also generalize, simplify, and improve upon previous results on optimistic regret bounds, which can be viewed as an additive version of hints.
\end{abstract}

\section{Introduction}

In the standard online convex optimization model~\cite{zinkevich2003online}, at each time step $t$, an algorithm first plays a point $x_t$ in a convex set, and then the system responds with a convex loss function.   The loss incurred by the algorithm is the function evaluated at the point $x_t$.  The performance of an algorithm is measured using the concept of regret.  The \emph{regret} of an algorithm is the difference between the total loss it incurs and the loss of the best fixed point it could have played (in hindsight); algorithms with sub-linear regret are hence desirable.  The framework of online convex optimization is quite powerful, general, and has been extensively studied.  Many important problems such as portfolio selection, learning from mixture of experts, matrix completion, recommendation systems, and certain online combinatorial optimization problems can be cast in this framework.  For a detailed exposition, see the books by~\citet{HazanBook} and~\citet{ShaiShalevBook}.

An important special case of online convex optimization is when the loss function is actually linear, i.e., the loss function is given by a \emph{cost vector}.  In this case, algorithms with regret $O(\sqrt{T})$, where $T$ is the number of steps, are known~\cite{zinkevich2003online, KV05}; furthermore, this bound is also optimal~\cite{CLBook}.  In fact, from a regret point of view, the linear case is the hardest since if the loss function is strongly convex, then there are algorithms achieving only $O(\log T)$ regret~\cite{hazan2007logarithmic}.  There has been some effort to better understand the regret landscape of linear loss functions, especially on how to circumvent the pessimistic $\Omega(\sqrt{T})$ barrier.

A particularly intriguing line of work was initiated by~\citet{DBLP:conf/colt/HazanM07}, who modeled a notion of predictability in online learning settings.   In their model, the algorithm knows the first coordinate of the cost vector at all time steps.  Under this assumption, they showed a regret bound of $O(d^2/\alpha \cdot \log T)$ when the convex set is the Euclidean ball, where $\alpha$ is the magnitude of the first coordinate that is known to the algorithm and $d$ is the dimension of the space.  Their work was subsequently generalized and extended by~\citet{DBLP:conf/nips/DekelFHJ17}, who considered a scenario when the online algorithm is provided with a \emph{directional hint} at each step; this hint is assumed to be always weakly but positively correlated with the cost vector.  They showed a regret bound of $O(d/\alpha \cdot \log T)$, where $\alpha$ is the amount of correlation present in the hint. 

The biggest drawback in these previous works is that they require the hints to be helpful at \emph{every} time step.  Clearly, this is a stringent requirement that may easily fail to hold.  This is especially so if the hints are provided by, say, a learning algorithm!   In such a scenario, one can only expect the hints to be good on average or have other probabilistic guarantees of goodness.  This means in particular that some of the hints could potentially be very misleading.  Since the algorithm is oblivious to the quality of each individual hint, it is desirable to have an algorithm that is both consistent and robust: utilize the good hints as well as possible to minimize regret, while at the same time not  be damaged too much by bad hints. Specifically, the algorithm should never incur worse than $O(\sqrt{T})$ regret, as otherwise the algorithm was better off not using any hints at all!  This type of ML-provided hints and their role in improving combinatorial online algorithms have generated a lot of recent interest for problems such as caching~\cite{sergei1, rohatgi, interleaved}, ski-rental~\cite{KPS18}, bipartite matching~\cite{KPSSV19}, and scheduling~\cite{KPS18, sergei2}.  This serves as  another motivation for our work.

\paragraph{Formulation.}
We consider the online convex optimization problem with a linear loss function in the presence of hints that can be imperfect.  At each time step $t$, the algorithm is provided with a hint vector $h_t$. After the algorithm plays a point $x_t$, a cost vector $c_t$ is revealed and the algorithm incurs a loss of $\inner{c_t}{x_t}$.  The hint vector $h_t$ ``typically'' gives non-trivial information about $c_t$.  Formally, given a parameter $\alpha$, a hint $h_t$ is said to be good if it satisfies $\iprod{c_t, h_t} \ge \alpha \norm{c_t}^2$ and bad otherwise. 

\paragraph{Our results.}
We design an algorithm that achieves a regret bound that smoothly interpolates between the two extreme cases when the hints $h_t$ are good at all time steps and when hints are arbitrarily wrong. In particular, for any $\alpha > 0$, we obtain a regret
of 
\mymath{
O\left(\dfrac{(1 + \sqrt{B})}{\alpha} \log(1 + T - B)\right),
}
where $B$ is the number of times steps when the hints are bad, i.e., $\inner{c_t}{h_t} < \alpha \norm{c_t}^2$.  The dependence on $B$ turns out to be nearly optimal as we will show in Section~\ref{sec:lbs}. We also generalize these results when the underlying feasible space is $(q,\mu)$-uniformly convex and show matching lower bounds. For the formal statements, see Theorems~\ref{thm:main:l2} and~\ref{thm:constrained_q}.

Surprisingly, our algorithm simultaneously also yields improved regret guarantees when the hint $h_t$ is viewed as an additive estimate of the cost vector: a hint is good if $\|c_t-h_t\|$ is small. This notion of hint was considered in~\citet{rakhlin2013online, hazan2010extracting, mohri2016accelerating, steinhardt2014adaptivity}, who gave regret bounds of the form $O\left(\sqrt{\sum_{t=1}^T \|c_t - h_t\|^2}\right)$. We achieve a regret $\tilde{O}\left(\sqrt{\sum_{t=1}^T (\|c_t-h_t\|^2 - \|h_t\|^2)}\right)$ (see Corollary~\ref{thm:optimism}). 
 
Even when restricted to the special case where the hints are all good, our result improves upon the regret bound of~\citet{DBLP:conf/nips/DekelFHJ17} in multiple ways.  First, our regret bound is \emph{dimension-free}, i.e., better by a factor of the dimension of the space.  Second, our algorithm is significantly faster: their work relied on expensive matrix calculations yielding $O(d^2)$ computation per round, while our algorithm runs in $O(d)$ time, matching simple gradient descent.  Third, our proofs are simpler as we rely on loss functions that are easily seen to be strongly convex (as opposed to proving exp-concavity).  Furthermore, for the case of $q>2$,~\citet{DBLP:conf/nips/DekelFHJ17} only obtained comparable regret bounds when all the hints are in the same direction. We generalize this in two ways, allowing different hints at each step and a small number of bad hints. 

Finally, we consider the {\em unconstrained} variant of online optimization, where the algorithm allowed to play any point $x_t \in \bB$, while achieving a regret that depends on $\norm{u}$ for {\em all} $u \in \bB$. This setting is discussed in Section~\ref{sec:unconstrained}.
\section{Preliminaries}
\label{sec:prelim}

Let $\bB$ be a real Banach space with norm $\|\cdot\|$ and let $\bB^*$ be its dual space with norm $\|\cdot\|_*$.  Let $\vec{c} = c_1, c_2, \ldots$ be cost vectors in $\bB^*$ such that $\|c_{t}\|_{*} \leq 1$.  In the classical online learning setting, $c_1, c_2, \ldots$ arrive one by one and at time $t$, an algorithm $\CA$ responds with a vector $x_{t} \in \bB$, \emph{before} $c_t$ arrives.  The \emph{regret} of the algorithm $\CA$ for a vector $u \in \bB$ is
\mymath{
\regret_\CA(u, \vec{c}, T) = \sum_{t = 1}^T \inner{c_t}{x_t - u},
}
where we use the $\inner{\cdot}{\cdot}$ notation to denote the application of a dual vector in $\bB^*$ to a vector in $\bB$.  (For instance if $\bB$ is the space $\reals^d$ with $\|\cdot\|$ being the $\ell_2$-norm, we have $\bB = \bB^*$ and $\inner{\cdot}{\cdot}$ will correspond to the standard inner product.)

We consider the case when there are \emph{hints} available to an algorithm.  Let $\vec{h} = h_1, h_2, \ldots$ be the hints, where each hint $h_t \in \bB$, $\| h_t \| \leq 1$,  is available to the algorithm $\CA$ at time $t$; this hint is available \emph{before} $\CA$ responds with $x_t$. The regret definition is the same and is denoted $\regret_\CA(u, \vec{c}, T ~\mid~ \vec{h})$.

The hints need not be perfect.  To capture this, let $\alpha > 0$ be a fixed \emph{threshold}. We define $G_{T, \alpha}$ to be the set of indices $t$ where the hint $h_t$ is good, i.e., has a large correlation with $c_t$. Similarly, we define $B_{T,\alpha}$ to be the set of indices where the hint is bad. Formally, we define:

{\centering
$\displaystyle
\begin{aligned}
G_{T, \alpha} &= \{t \leq T : \inner{c_t}{h_t} \geq \alpha \cdot \|c_t\|_*^2\}, \quad \text{ and }\\
B_{T, \alpha} &= \{t \leq T : \inner{c_t}{h_t} < \alpha \cdot \|c_t\|_*^2\}.
\end{aligned}
$
\par}

Let $B_T = B_{T,0}$, i.e., the time steps when $h_t$ is negatively correlated with $c_t$. We will also use a compressed-sum notation for indexed variables: $a_{1:t} = \sum_{i=1}^t a_i$.

Let $\bK=\{x\in \bB:\ \|x\|\le 1\}$.  
We consider two settings, a \emph{constrained} setting where we must choose $x_t\in \bK$ and an \emph{unconstrained} setting sans this restriction. In the former case, we will be concerned only with bounding $\regret_\CA(u,\vec{c}, T)$ for $u\in\bK$, while in the latter we will consider any $u\in \bB$.

Finally, we establish some notation about convex functions and spaces. For a convex function $f$, we use $\partial f(x)\subset \bB^*$ to denote the set of \emph{subgradients} of $f$ at $x$. We say that $f$ is \emph{$\mu$-strongly convex} with respect to the norm $\|\cdot\|$ if for all $x,y$ and $g\in \partial f(x)$, we have $f(y)\ge f(x) + \langle g, y-x\rangle + \frac{\mu}{2}\|x-y\|^2$. We say that the Banach space $\bB$ is $\mu$-strongly convex if the function $\frac{1}{2}\|x\|^2$ is $\mu$-strongly convex with respect to $\norm{\cdot}$ for some $\mu > 0$. We note this notion is equivalent to the definition of strong convexity of a space used in \citet{DBLP:conf/nips/DekelFHJ17}; e.g., see the discussion after Definition 4.16 in~\citet{pisier2011martingales}.
Further, a Banach space is \emph{reflexive} if the natural injection $i:\bB\to \bB^{**}$ given by $\langle i(x), c\rangle = \langle c ,x\rangle$ is an isomorphism of Banach spaces. Note that all finite-dimensional Banach spaces are reflexive. Throughout this paper, we assume that $\bB$ is reflexive and $\mu$-strongly convex.

A typical example is $\bB=\R^d$ with $\|\cdot\|$ equal to the standard $\ell_2$ norm. In this case $\bB$ is reflexive and $1$-strongly convex.

\section{Constrained Learning with Imperfect Hints}
\label{sec:constrained}

We first consider the \emph{constrained} setting of the problem in which the online algorithm must choose a point $x_t \in \bK$ at all time steps $t \leq T$. To illustrate our main ideas, we first focus on the case when the Banach space $\bB$ is $\mu$-strongly convex. Our techniques also extend to general $(q,\mu)$-uniformly convex spaces and we present this extension in Appendix~\ref{sec:constrained_q}.

\begin{theorem}\label{thm:main:l2}
Consider the online linear optimization problem over a Banach space with a $\mu$-strongly convex norm, where at every step we receive a hint vector $h_t$ and need to output a point $x_t \in \bK$.  Then there is an efficient algorithm that for any $\alpha>0$, achieves regret
\mymath{
O \left( \sqrt{ \sum_{t \in B_{T, \alpha}} \norm{c_t}_*^2} + \frac{r_T}{\mu \alpha} \log \big( 1 + \sum_{t \in G_{T, \alpha}} \norm{c_t}_*^2 \big) \right),
}
where $r_T = \sqrt{1 + \sum_{t \in B_T} |\iprod{c_t, h_t}|}$. 
\end{theorem}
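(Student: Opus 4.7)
The plan is to run an FTRL-style algorithm on surrogate losses
\[
\tilde f_t(x) \;=\; \langle c_t, x\rangle + \tfrac{\alpha}{2}\|c_t\|_*^2\,\langle h_t, x\rangle^2
\]
that absorb directional curvature from each hint. Concretely, at round $t$ the algorithm plays
\[
x_t \;=\; \argmin_{x \in \bK}\Bigl\{\sum_{s<t}\langle c_s,x\rangle + \psi_t(x) + \tfrac{\alpha}{2}\sum_{s<t}\|c_s\|_*^2\,\langle h_s,x\rangle^2\Bigr\},
\]
where $\psi_t(x)=\tfrac{1}{2\eta_t}\|x\|^2$ is a base regularizer with an AdaGrad-like step $\eta_t\asymp 1/\sqrt{\sum_{s<t}\|c_s\|_*^2}$. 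The identity
\[
\sum_t\langle c_t,x_t-u\rangle \;=\; \sum_t\bigl(\tilde f_t(x_t)-\tilde f_t(u)\bigr) + \tfrac{\alpha}{2}\sum_t\|c_t\|_*^2\bigl(\langle h_t,u\rangle^2-\langle h_t,x_t\rangle^2\bigr)
\]
reduces the true regret to the surrogate regret plus an absorbable $O(\alpha\sum_t\|c_t\|_*^2)$ term (using $\|h_t\|,\|u\|\le 1$).

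Next I would invoke the standard FTRL stability lemma: the surrogate regret is at most $\psi_{T+1}(u)+\sum_t\bigl(\tilde f_t(x_t)-\tilde f_t(x_{t+1})-D_{R_{t+1}}(x_{t+1},x_t)\bigr)$, where $R_t=\psi_t+\tfrac{\alpha}{2}\sum_{s<t}\|c_s\|_*^2\langle h_s,\cdot\rangle^2$ and $D_{R_{t+1}}$ is the associated Bregman divergence. By $\mu$-strong convexity of $\tfrac12\|\cdot\|^2$, the base $\psi_{t+1}$ is globally $(\mu/\eta_t)$-strongly convex in $\|\cdot\|$, and the hint piece of $R_{t+1}$ contributes additional curvature along past $h_s$; each per-round stability term then reduces to $\|\nabla\tilde f_t(x_t)\|_*^2/(2\sigma_t)$ for $\sigma_t$ the local curvature of $R_{t+1}$. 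I would then split $[T]$ into $G_{T,\alpha}$ and $B_{T,\alpha}$: for $t\in G_{T,\alpha}$, $\nabla\tilde f_t(x_t)$ aligns (up to constants) with $h_t$ and the accumulated curvature $\alpha\sum_{s\le t,\,s\in G}\|c_s\|_*^2$ dominates $\sigma_t$, so the log-sum inequality $\sum_t a_t/(c+a_{1:t})\le\log(1+a_{1:T}/c)$ with $a_t=\alpha\|c_t\|_*^2$ produces the $\tfrac{1}{\mu\alpha}\log\bigl(1+\sum_{t\in G_{T,\alpha}}\|c_t\|_*^2\bigr)$ contribution; for $t\in B_{T,\alpha}$ with $\langle c_t,h_t\rangle\in[0,\alpha\|c_t\|_*^2)$ the hint term is neutral and the AdaGrad base schedule yields the $O\bigl(\sqrt{\sum_{t\in B_{T,\alpha}}\|c_t\|_*^2}\bigr)$ contribution.

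The main obstacle is the ``very bad'' rounds $t\in B_T$, where $\langle c_t,h_t\rangle<0$: there $\nabla\tilde f_t(x_t)=c_t+\alpha\|c_t\|_*^2\langle h_t,x_t\rangle h_t$ can be noticeably larger than $c_t$, and the curvature injected along $h_t$ is effectively wasted because the true cost points the opposite way. The right bookkeeping is to show that each such round contributes at most $|\langle c_t,h_t\rangle|$ extra to the numerator of the stability term; pulling this out of the square root in the log-sum step inflates the effective log coefficient by exactly $r_T=\sqrt{1+\sum_{t\in B_T}|\langle c_t,h_t\rangle|}$, matching the theorem. Reflexivity and $\mu$-strong convexity of $\bB$ are used throughout to guarantee that the FTRL iterates exist and to pass between subgradient estimates in $\bB^*$ and norm-level stability bounds in $\bB$ via duality.
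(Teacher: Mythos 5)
Your proposal takes a genuinely different route from the paper, but it has several gaps that I do not believe can be repaired along the lines you sketch.

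\textbf{The residual is not absorbable.} Your identity gives a correction term
$\tfrac{\alpha}{2}\sum_t \|c_t\|_*^2\bigl(\langle h_t,u\rangle^2 - \langle h_t,x_t\rangle^2\bigr)$,
and you claim it is ``absorbable $O(\alpha\sum_t\|c_t\|_*^2)$''. But $\sum_t\|c_t\|_*^2$ can be $\Theta(T)$, and $\langle h_t,u\rangle^2$ is typically $\Theta(1)$ for the best competitor $u$, so this correction is $\Theta(\alpha T)$ in general; there is nothing that makes it vanish or be dominated by $\tfrac{1}{\mu\alpha}\log T$. This is exactly the trap that a ``quadratic-in-$h$'' surrogate runs into, and the paper avoids it by a different surrogate design: $\ell_t(x) = \langle c_t, x\rangle + \tfrac{|\langle c_t, h_t\rangle|}{2r_t}(\|x\|^2-1)$. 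The crucial point is that $\|x\|^2-1\le 0$ on $\bK$, so $\ell_t(u)\le \langle c_t,u\rangle$ at the comparator \emph{with no residual}. The other half, $\langle c_t,x_t\rangle \le \ell_t(\bar x_t)$ (with a small penalty only on rounds $t\in B_T$), is achieved not by the FTRL point $\bar x_t$ but by a deliberately shifted play $x_t = \bar x_t + \tfrac{\|\bar x_t\|^2-1}{2r_t}h_t$. Your proposal has no analogue of this shift, so the surrogate-to-real-loss bridge cannot close.

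\textbf{The directional quadratic does not give usable strong convexity.} Your surrogate's curvature is the rank-one piece $\alpha\|c_t\|_*^2\, h_t h_t^{\!*}$, which only curves along $\mathrm{span}(h_1,\dots,h_t)$. The FTRL stability term $\|\nabla\tilde f_t(x_t)\|_*^2/(2\sigma_t)$ needs $\sigma_t$ to be the \emph{uniform} strong-convexity constant of $R_{t+1}$, not the curvature along a subspace; the uniform piece is just $\mu/\eta_t\asymp \mu\sqrt{\textstyle\sum_{s<t}\|c_s\|_*^2}$, which yields $\sqrt{T}$, not $\log T$. Your claim that on good rounds $\nabla\tilde f_t(x_t)$ ``aligns (up to constants) with $h_t$'' is also not true: $\nabla\tilde f_t(x_t) = c_t + \alpha\|c_t\|_*^2\langle h_t, x_t\rangle h_t$, and $\langle c_t,h_t\rangle\ge\alpha\|c_t\|_*^2$ only says a small fraction of $c_t$ is along $h_t$; the orthogonal part of $c_t$ can be arbitrary. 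Trying to fix this by projecting gradients onto the accumulated hint subspace is exactly the Hazan--Megiddo route and reintroduces the dimension factor the paper is designed to remove. The paper sidesteps the whole issue by using the \emph{full-ball} quadratic $\tfrac{|\langle c_t,h_t\rangle|}{2r_t}\|x\|^2$, which is $\mu\tfrac{|\langle c_t,h_t\rangle|}{r_t}$-strongly convex in the actual norm $\|\cdot\|$, in every direction, and therefore plugs into the FTRL stability bound with no directionality argument and no dimension dependence.

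\textbf{The algorithm must be oblivious to $\alpha$.} The theorem (see the remark following it) asserts that a single algorithm, which never sees $\alpha$, simultaneously satisfies the bound for all $\alpha>0$. Your surrogate $\tilde f_t$ has $\alpha$ hard-wired into it, so you would get one algorithm per $\alpha$. The paper's construction only ever uses the observable quantities $|\langle c_t,h_t\rangle|$ and $r_t$; the threshold $\alpha$ appears purely in the \emph{analysis} (as an optimal choice of the FTRL regularization sequence $\lambda_t^*$, hidden behind the Hazan--Kale ``best-$\lambda^*$'' lemma, Lemma~\ref{thm:hazan31}), never in the algorithm. Your sketch contains no mechanism for this post-hoc optimization and so cannot deliver the stated ``for any $\alpha$'' guarantee.

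\textbf{The $r_T$ factor is asserted, not derived.} You say ``pulling this out of the square root in the log-sum step inflates the effective log coefficient by exactly $r_T$''; as written this is reverse-engineered from the target rather than argued. In the paper, $r_T$ enters through the definition of the surrogate's strong-convexity constant $\sigma_t = |\langle c_t,h_t\rangle|\mu/r_t$ and the monotonicity $r_t\le r_T$, which feeds directly into the denominator of the log-sum step (and separately, $\sum_{t\in B_T}|\langle c_t,h_t\rangle|/r_t\le 2r_T$ via the standard telescoping bound). Those two appearances of $r_T$ are where the factor actually comes from; your surrogate has no $r_t$ in it and so no place for that mechanism to live.

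In short: the paper's two key inventions---(i) the full-norm, nonpositive-on-$\bK$ quadratic surrogate, and (ii) the hint-shifted play that ties the surrogate's value at $\bar x_t$ to the true loss at $x_t$---are both missing from your proposal, and they are precisely what let it escape the residual-term blowup, the dimension dependence, and the need to know $\alpha$.
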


We remark about the order of quantifiers in the theorem. The bound holds for any $\alpha > 0$ and the algorithm itself is oblivious to $\alpha$. Thus, if we have $B$ bad hints (i.e., $|B_{T, \alpha}| = B$), then $r_T \le \sqrt{1+B}$ and the number of good steps is $T-B$, so we obtain the upper bound of $O(\frac{\sqrt{1+B}}{\alpha} \log (1+T-B))$.  Also, the bound is never larger than $\sqrt{T}$, because if $\alpha$ is large, $G_{T, \alpha} = \emptyset$, and thus the first term is the only one that remains, and it is $\le \sqrt{T}$.



\paragraph{Outline of the algorithm.} Our algorithm (denoted $\alg$) can be best viewed as a procedure that interacts with an ``inner'' online convex optimization subroutine, which we denote by $\cA$. At every step, $\alg$ receives a prediction $\xbar_t$ from $\cA$, which it modifies using the hint $h_t$, and produces $x_t$. Then the algorithm receives $c_t$, using which it produces a function $\ell_t$ (which depends on $h_t, c_t$, and an additional parameter $r_t$ that $\alg$ maintains). This function, along with relevant parameters, are passed to $\cA$. The key properties that we show are: (a) the regret of $\alg$ can be related to the regret of the procedure $\cA$, and (b) the functions $\ell_t$ are strongly convex, and thus the regret of $\cA$ can be bounded efficiently using known techniques. The parameter $r_t$ encapsulates the ``confidence in hints'' seen so far. 

Algorithms~\ref{alg:constrained-simple} and~\ref{alg:constrained-A} describe the procedures $\alg$ and $\cA$. Intuitively, given a prediction $\bar x_t$, we should be able to improve the loss $\langle c_t, x_t\rangle$ by playing instead $x_t =\bar x_t - h_t$; assuming the hint $h_t$ is positively correlated with $c_t$. However, there are two immediate problems with this approach. First, if $h_t$ is negatively correlated with $c_t$ then we have actually increased the loss.  Second, this addition operation may cause $x_t$ to leave the set $\bK$, which is not allowed. We address both concerns by setting $x_t = \bar x_t -  \delta_{r_t}(x_t) h_t$, where $\delta_{r_t}(x_t) = \dfrac{1 - \|x\|^2}{2r_t}$ is a carefully chosen scale factor.

\begin{algorithm}[t]
   \caption{OLO with imperfect hints (Procedure $\alg$)}
   \label{alg:constrained-simple}
   \begin{algorithmic}
   \INPUT Hints $h_t$ followed by cost vectors $c_t$
    \STATE Define $\lambda_0 =1/\mu$ and $r_0 =1$.
    \FOR{$t=1\dots T$}
    	\STATE Get hint $h_t$
    	\STATE Get  $\xbar_t$ from procedure $\cA$, and set 
    	
    	\mymath{ 
    	x_t = \xbar_t + \frac{(\norm{\xbar_t}^2-1)}{2r_t} h_t
    	}
	\STATE Play $x_t$ and receive cost $c_t \in \bB^*$
	\IF{$\iprod{c_t, h_t} < 0$}
		\STATE Set $r_{t+1} = \sqrt{r_t^2 + |\iprod{c_t, h_t}|}$
	\ELSE
		\STATE Set $r_{t+1} = r_t$
	\ENDIF  
	\STATE Define $\sigma_t = \frac{|\iprod{c_t, h_t}|\mu}{r_t}$
	\STATE Define $\lambda_t$ as the solution to:
	
	\mymath{
	\lambda_t = \frac{\norm{c_t}_*^2}{\sigma_{1:t} + \mu \lambda_{1:t}}
	}
	Send the loss function $\ell_{h_t, r_t, c_t}(x)$, $\lambda_t$ to procedure $\cA$ // (loss function defined in (\ref{eq:surrogate-loss}))
	\ENDFOR
\end{algorithmic}
\end{algorithm}

\begin{algorithm}[t]
\caption{FTRL with adaptive rate (Procedure $\cA$)}
\label{alg:constrained-A}
\begin{algorithmic}
	\INPUT Convex functions $\ell_t$, parameters $\lambda_t$
	\STATE At $t=1$ return $\xbar_1 = 0$
	\FOR{$t=2\dots T$}
		\STATE  Output $\xbar_t := \argmin_{\norm{x} \le 1} \ell_{1:t-1}(x) + \frac{\lambda_{0:t-1}}{2} \norm{x}^2$
		\STATE  Receive loss $\ell_t$ and parameter $\lambda_t$
	\ENDFOR
\end{algorithmic}
\end{algorithm}

The \emph{surrogate loss function} used in the algorithm is:
\begin{equation}\label{eq:surrogate-loss}
\ell_{h_t, r_t, c_t} (x) = \iprod{ c_t, x} + \frac{|\iprod{c_t, h_t}|}{2r_t} (\norm{x}^2 -1).
\end{equation}

It is clear from the description that as the algorithm proceeds, $r_t$ is monotone increasing and hence $r_t \geq 1$ for all $t$. We first demonstrate that Algorithm \ref{alg:constrained-simple} always plays a feasible point, i.e., $x_t \in \bK$ for all $t$.

\begin{lemma}\label{lem:always-feasible}
For any $t$, $\norm{x_t} \le 1$. In other words, the point $x_t$ played by Algorithm \ref{alg:constrained-simple} is always feasible.
\end{lemma}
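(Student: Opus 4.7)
The plan is to argue directly from the formula $x_t = \bar x_t + \frac{\|\bar x_t\|^2 - 1}{2r_t} h_t$ via the triangle inequality, after first collecting three basic facts about the quantities on the right-hand side.

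First, I would observe that $\|\bar x_t\| \le 1$: procedure $\cA$ (Algorithm~\ref{alg:constrained-A}) outputs $\bar x_t$ as the argmin of a convex objective over the unit ball $\{x : \|x\| \le 1\}$, so feasibility is automatic. Next, $\|h_t\| \le 1$ is part of the problem setup. Finally, $r_t \ge 1$ for all $t$, because $r_0 = 1$ and the update rule only ever replaces $r_t$ by $\sqrt{r_t^2 + |\langle c_t, h_t\rangle|} \ge r_t$.

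With these in hand, set $a := \|\bar x_t\| \in [0,1]$. Since $\|\bar x_t\|^2 - 1 \le 0$, the vector added to $\bar x_t$ has norm $\frac{1 - a^2}{2 r_t} \|h_t\|$, so the triangle inequality gives
\[
\|x_t\| \;\le\; \|\bar x_t\| + \frac{1 - \|\bar x_t\|^2}{2 r_t}\,\|h_t\| \;\le\; a + \frac{1 - a^2}{2 r_t}.
\]
Now I would use $r_t \ge 1$ and factor $1 - a^2 = (1-a)(1+a)$:
\[
a + \frac{1 - a^2}{2 r_t} \;\le\; a + \frac{(1-a)(1+a)}{2} \;\le\; a + (1-a) \;=\; 1,
\]
where the second inequality uses $1 + a \le 2$. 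This yields $\|x_t\| \le 1$, i.e., $x_t \in \bK$, as required.

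There is no real obstacle here; the lemma is essentially a sanity check that the scale factor $\delta_{r_t}(\bar x_t) = (1 - \|\bar x_t\|^2)/(2 r_t)$ was chosen precisely small enough to absorb the worst-case direction of $h_t$ while still permitting the triangle-inequality slack $1 - a$. The only thing to keep track of is the sign convention (the coefficient of $h_t$ is non-positive, so one is effectively \emph{subtracting} a non-negative multiple of $h_t$), which makes the triangle inequality the correct tool rather than a more delicate alignment argument.
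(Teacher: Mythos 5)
Your proof is correct and follows essentially the same route as the paper: triangle inequality, then using $\|\bar x_t\| \le 1$, $r_t \ge 1$, and the factorization $1 - \|\bar x_t\|^2 = (1-\|\bar x_t\|)(1+\|\bar x_t\|)$. The only cosmetic difference is that you make explicit the step $1 + a \le 2$, which the paper leaves implicit.
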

\begin{proof}
From the description of $\cA$, $\norm{\xbar_t} \le 1$. Thus since $r_t \ge 1$ and by the triangle inequality, we have
\begin{align*}
    \norm{x_t} &\le \norm{\xbar_t} + \frac{(1- \norm{\xbar_t}^2)}{2} \norm{h_t}\\
    &\le \norm{\xbar_t}  + \frac{(1-\norm{\xbar_t}^2)}{2}\\
    &= \norm{\xbar_t} + \frac{(1-\norm{\xbar_t})(1+\norm{\xbar_t})}{2}.
\end{align*}
This is clearly $\le 1$, as $\norm{\xbar_t} \le 1$.
\end{proof}

We next establish some basic properties of the surrogate loss function.
\begin{lemma}\label{lem:surrogate-properties}
Let $\ell_t$ denote $\ell_{h_t, r_t, c_t}$ defined in~\eqref{eq:surrogate-loss}.  This function satisfies:
\begin{enumerate}[nosep]
\item If $\iprod{c_t, h_t} \ge 0$, then  $\ell_t (\xbar_t) = \iprod{c_t, x_t}$.
\item If $\iprod{c_t, h_t} < 0$, then

\mymath{
\iprod{c_t, x_t} \le \ell_t (\xbar_t) + \frac{|\iprod{c_t, h_t}|}{r_t}.
}
\item For all $u \in \bB$ with $\norm{u} \le 1$, $\ell_t (u) \le \iprod{c_t, u}$.
\item $\ell_t (x)$ is $\frac{|\iprod{c_t, h_t}|\mu}{r_t}$-strongly convex.
\item $\ell_t(x)$ is $2 \norm{c_t}_*$-Lipschitz.
\end{enumerate}
\end{lemma}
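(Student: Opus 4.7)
The plan is to verify each of the five claims by direct substitution into the definition of $\ell_{h_t,r_t,c_t}$, using the formula for $x_t$ and the ambient assumptions. The unifying identity is
\[
\iprod{c_t, x_t} = \iprod{c_t, \xbar_t} + \frac{(\norm{\xbar_t}^2-1)}{2r_t}\iprod{c_t, h_t},
\]
which follows immediately from the update rule for $x_t$ in Algorithm~\ref{alg:constrained-simple}.

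For part (1), when $\iprod{c_t,h_t}\ge 0$ we have $|\iprod{c_t,h_t}|=\iprod{c_t,h_t}$, so $\ell_t(\xbar_t)$ equals exactly the right-hand side of the identity above. For part (2), when $\iprod{c_t,h_t}<0$, the difference $\iprod{c_t,x_t}-\ell_t(\xbar_t)$ simplifies to $\frac{(\norm{\xbar_t}^2-1)\iprod{c_t,h_t}}{r_t}$. Since $\norm{\xbar_t}\le 1$ (so $\norm{\xbar_t}^2-1\le 0$) and $\iprod{c_t,h_t}<0$, this product is a nonnegative number bounded by $\frac{|\iprod{c_t,h_t}|}{r_t}$. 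Part (3) is immediate: for $\norm{u}\le 1$, the quadratic penalty $\frac{|\iprod{c_t,h_t}|}{2r_t}(\norm{u}^2-1)$ is non-positive.

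For part (4), strong convexity of $\ell_t$ comes entirely from the quadratic term, since the linear term $\iprod{c_t,x}$ contributes nothing. The space $\bB$ is $\mu$-strongly convex, meaning $\tfrac{1}{2}\norm{x}^2$ is $\mu$-strongly convex with respect to $\norm{\cdot}$; multiplying by the positive scalar $\frac{|\iprod{c_t,h_t}|}{r_t}$ scales the strong-convexity constant linearly, giving the claim. For part (5), a subgradient of $\ell_t$ at a point $x$ with $\norm{x}\le 1$ has the form $c_t + \frac{|\iprod{c_t,h_t}|}{r_t} g$, where $g\in\partial(\tfrac{1}{2}\norm{x}^2)$. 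Using the standard bound $\norm{g}_*\le \norm{x}\le 1$ for subgradients of $\tfrac{1}{2}\norm{\cdot}^2$, together with $|\iprod{c_t,h_t}|\le \norm{c_t}_*\norm{h_t}\le \norm{c_t}_*$ and $r_t\ge 1$, the dual norm of the subgradient is at most $\norm{c_t}_* + \norm{c_t}_* = 2\norm{c_t}_*$.

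None of the five items presents a serious obstacle; the only mildly delicate point is part (5), where one must remember that we only need Lipschitzness on $\bK$ (not all of $\bB$) so that $\norm{g}_*\le 1$ is available, and that $r_t\ge 1$ by the monotone update rule — both facts already observed right after the definition of the algorithm. The remaining steps are elementary algebraic manipulations.
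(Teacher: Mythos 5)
Your proof is correct and follows essentially the same route as the paper: verify parts (1)–(3) by direct substitution using $\norm{\xbar_t}\le 1$ and $r_t\ge 1$, part (4) by scaling the strong convexity of $\tfrac12\norm{\cdot}^2$ and adding a linear term, and part (5) by bounding the subgradient on $\bK$. The only cosmetic difference is in part (5): the paper establishes the $2$-Lipschitzness of $\norm{\cdot}^2$ on the unit ball via the elementary factorization $\norm{x}^2-\norm{y}^2=(\norm{x}+\norm{y})(\norm{x}-\norm{y})$, whereas you invoke the duality-map characterization $\norm{g}_*=\norm{x}$ for $g\in\partial(\tfrac12\norm{x}^2)$; both yield the same bound and the step is equally valid.
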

\begin{proof}
The first three properties are immediate from the definitions of $\ell_t, x_t$ and the fact that $\norm{\xbar_t} \le 1$ and $r_t \ge 1$. The fourth one follows from the fact that $\frac{1}{2}\norm{x}^2$ is $\mu$-strongly convex, and that adding a convex function to a strongly convex function preserves strong convexity.  The last property is also a consequence of the fact that $\norm{x}^2$ is $2$-Lipschitz inside the unit ball (which follows from $\norm{x}^2 - \norm{y}^2 = (\norm{x}+\norm{y})(\norm{x} - \norm{y})$ ) and since $r_t \ge 1$.
\end{proof}

This implies the following lemma, which is crucial for our argument. It relates the regret of $\alg$ with the regret of FTRL (procedure $\cA$). Recall the definition of $B_T$ from before (the time steps when the hints are negatively correlated with the cost vector).

\begin{lemma}\label{lemma:alg-regret}
Let $u \in \bB$ satisfy $\norm{u} \le 1$, and  let $\ell_t$ be shorthand for $\ell_{h_t, r_t, c_t}$ as before. Then 
\begin{equation}\label{eq:alg-regret-bound}
\regret_{\alg} (u, \vec{c}, T)  \le \regret_{\cA} (u, \vec{\ell}, T) + \sum_{t \in B_T} \frac{|\iprod{c_t, h_t}|}{r_t}.
\end{equation}
\end{lemma}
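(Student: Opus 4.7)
The plan is to prove the inequality term by term, using Lemma~\ref{lem:surrogate-properties} on a per-round basis and then summing. The definition of $B_T$ gives a natural dichotomy: either $\iprod{c_t, h_t} \ge 0$ (so $t \notin B_T$) or $\iprod{c_t, h_t} < 0$ (so $t \in B_T$), and parts (1) and (2) of Lemma~\ref{lem:surrogate-properties} handle these two cases respectively.

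For each round, I would rewrite $\iprod{c_t, x_t - u} = \iprod{c_t, x_t} - \iprod{c_t, u}$ and control each piece separately. For the $\iprod{c_t, x_t}$ piece, property (1) gives the exact identity $\iprod{c_t, x_t} = \ell_t(\xbar_t)$ when $t \notin B_T$, while property (2) gives $\iprod{c_t, x_t} \le \ell_t(\xbar_t) + |\iprod{c_t, h_t}|/r_t$ when $t \in B_T$. For the $-\iprod{c_t, u}$ piece, property (3) uniformly yields $-\iprod{c_t, u} \le -\ell_t(u)$, since $\|u\| \le 1$ by assumption. Adding these gives, in every round,
\begin{equation*}
\iprod{c_t, x_t - u} \;\le\; \ell_t(\xbar_t) - \ell_t(u) + \mathbf{1}[t \in B_T]\cdot \frac{|\iprod{c_t, h_t}|}{r_t}.
\end{equation*}

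Summing over $t = 1, \dots, T$ then yields the claimed inequality, since the left-hand side sums to $\regret_\alg(u, \vec c, T)$, the first two terms sum to $\regret_\cA(u, \vec\ell, T)$ by definition (recall $\cA$ plays $\xbar_t$ against the losses $\ell_t$), and the indicator restricts the final sum to $t \in B_T$. There is no real obstacle here: the work has already been done in establishing parts (1)--(3) of Lemma~\ref{lem:surrogate-properties}, and the present lemma is essentially a bookkeeping consequence. The only thing to watch is ensuring the comparator $u$ is feasible for property (3), which is exactly the hypothesis $\|u\|\le 1$.
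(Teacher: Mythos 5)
Your proof is correct and follows essentially the same route as the paper's: both decompose the per-round regret into $\iprod{c_t,x_t}$ and $-\iprod{c_t,u}$, apply property (3) of Lemma~\ref{lem:surrogate-properties} to the comparator term, and apply properties (1)--(2) to the played point depending on the sign of $\iprod{c_t,h_t}$, then sum. No meaningful difference in approach or substance.
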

\begin{proof}
By definition, $\regret_{\alg} (u, \vec{c}, T) = \sum_t \iprod{c_t, x_t} - \iprod{c_t, u} \le \sum_t \iprod{c_t, x_t} - \ell_t(u)$, by Property 3 in Lemma~\ref{lem:surrogate-properties}. Now using the first two properties, we have that when the hints are positively correlated, i.e., $\iprod{c_t, h_t} \ge 0$, we have $\iprod{c_t, x_t} = \ell_t(\xbar_t)$, and otherwise (i.e., $t \in B_T$) we have $\iprod{c_t, x_t} \leq \ell_t(\xbar_t) + \frac{|\iprod{c_t, h_t}|}{r_t}$. This completes the proof of the lemma.
\end{proof}

We bound the first term in~\eqref{eq:alg-regret-bound} using known results for FTRL, and the second term by the following simple lemma.
\begin{lemma}\label{lem:alg-regret-second}
From our definition of $r_t$, we have
\mymath{
\sum_{t \in B_T} \frac{|\iprod{c_t, h_t}|}{r_t} \le 2 \sqrt{\sum_{t \in B_T} |\iprod{c_t, h_t}|}.
}
\end{lemma}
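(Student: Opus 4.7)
The plan is to establish a per-step telescoping bound. Set $a_t := |\iprod{c_t, h_t}|$ and, for $t \in B_T$, let $A_t := \sum_{s \in B_T,\, s \le t} a_s$ and $A_t^- := A_t - a_t$. The update rule for $r_t$ gives $r_t^2 = 1 + A_t^-$ whenever $t \in B_T$. The key local inequality I will prove is
\[
\frac{a_t}{r_t} \le 2\bigl(\sqrt{A_t} - \sqrt{A_t^-}\bigr), \qquad \text{for every } t \in B_T,
\]
so that summing telescopes (since $\sqrt{A_t^-} = 0$ for the first bad step) to yield
\[
\sum_{t \in B_T} \frac{a_t}{r_t} \;\le\; 2\sqrt{\sum_{t \in B_T} a_t},
\]
which is exactly the lemma.

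To verify the local inequality, I would rewrite the right-hand side using $\sqrt{A_t}-\sqrt{A_t^-} = \tfrac{a_t}{\sqrt{A_t}+\sqrt{A_t^-}}$, reducing the problem to showing $\sqrt{A_t}+\sqrt{A_t^-} \le 2 r_t$. Both terms on the left will be bounded by $r_t$ separately. Clearly $\sqrt{A_t^-} \le \sqrt{1 + A_t^-} = r_t$. For the other, $\sqrt{A_t} = \sqrt{A_t^- + a_t} \le \sqrt{A_t^- + 1} = r_t$, provided $a_t \le 1$. This last fact holds because $a_t = |\iprod{c_t, h_t}| \le \norm{c_t}_* \cdot \norm{h_t} \le 1$ by the standing normalization on cost vectors and hints.

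The main (minor) obstacle is resisting the temptation to telescope $\tfrac{a_t}{r_t}$ directly against $r_{t+1} - r_t$: a short calculation using $r_{t+1}^2 - r_t^2 = a_t$ gives $\tfrac{a_t}{r_t} \ge 2(r_{t+1}-r_t)$, which is the \emph{wrong} direction for our purposes (the corresponding \emph{upper} bound $\tfrac{a_t}{r_{t+1}} \le 2(r_{t+1}-r_t)$ has $r_{t+1}$ rather than $r_t$ in the denominator, and can be off by a constant factor). Telescoping against $\sqrt{A_t}$ instead, which essentially strips the ``$+1$'' from $r_t^2$, is the right move; the uniform bound $a_t \le 1$ is precisely what makes that stripping harmless and recovers the constant $2$.
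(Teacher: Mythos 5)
Your proof is correct and follows essentially the same route as the paper: the paper notes that $r_t = \sqrt{1 + \sum_{\tau \in B_T, \tau < t} |\iprod{c_\tau,h_\tau}|}$, observes that $|\iprod{c_t,h_t}|\le 1$, and then invokes the standard adaptive-step-size inequality $\sum_t z_t/\sqrt{z_{1:t}} \le 2\sqrt{z_{1:T}}$ as a black box. You have simply unrolled that cited lemma into its underlying telescoping argument and made explicit (via $\sqrt{A_t}+\sqrt{A_t^-}\le 2r_t$) the role of the bound $a_t\le 1$ in harmlessly absorbing the ``$+1$'' inside $r_t^2$, which the paper's one-line justification ``since all the terms are $\le 1$'' glosses over.
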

\begin{proof}
From our algorithm, note that $r_t$ is precisely $\sqrt{1 + \sum_{\tau < t, \tau \in B_T} |\iprod{c_t, h_t}| }$. Thus, since all the terms $|\iprod{c_t, h_t}|$ are $\le 1$, we can use the fact that for all non-negative real numbers $\{z_i\}_{i=1}^m$, 
\mymath{
\sum_{t=1}^m \frac{z_t}{\sqrt{z_{1:t}}} \le 2 \sqrt{z_{1:m}},
}
to the numbers $|\iprod{c_t, h_t}|$ for $t \in B_T$.  This implies the  lemma.  (The fact above is standard in the analysis of FTRL; for instance, see Lemma 4 of~\citet{mcmahan2017survey}.)
\end{proof}

It remains to bound the regret of the FTRL procedure $\cA$.  We now use the general techniques presented in~\citet{mcmahan2017survey, hazan2008adaptive} to do this. 

\begin{lemma}\label{lemma:alg-regret-first}
Suppose we run procedure $\cA$ using our choice of $\ell_t, \lambda_t, \sigma_t$.  Then for any $\alpha > 0$ and $\norm{u} \le 1$, the regret $\regret_{\cA}(u, \vec{\ell}, T)$ is at most
\begin{align*}
\tfrac{1}{2\mu} + 4 &\left( \tfrac{\sqrt{ \sum_{t \in B_{T, \alpha}} \norm{c_t}_*^2}}{\mu} + \tfrac{r_T \log ( 1 + \mu \sum_{t \in G_{T, \alpha}} \norm{c_t}_*^2 )}{\alpha\mu} \right), 
\end{align*}
where $r_T = \sqrt{1+\sum_{t \in B_T} |\iprod{c_t, h_t}|}$.
\end{lemma}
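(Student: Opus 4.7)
The plan is to apply a standard FTRL regret bound for procedure $\cA$, identify the resulting gradient-squared sum with a quantity controlled by $\lambda_{1:T}$, and then bound $\lambda_{1:T}$ by splitting the steps into good and bad.

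First, I would invoke the standard FTRL regret bound for adaptive regularization with strongly convex losses (along the lines of \cite{mcmahan2017survey,hazan2008adaptive}). Since $R_t(x)=\frac{\lambda_{0:t-1}}{2}\|x\|^2$ is $\mu\lambda_{0:t-1}$-strongly convex and each $\ell_s$ is $\sigma_s$-strongly convex by Property~4 of Lemma~\ref{lem:surrogate-properties}, the cumulative regularized objective at time $t$ is $(\sigma_{1:t}+\mu\lambda_{0:t})$-strongly convex, which gives
\begin{align*}
\regret_{\cA}(u,\vec{\ell},T)\;\le\;\frac{\lambda_{0:T}}{2}\|u\|^2 \;+\; \sum_{t=1}^{T}\frac{\|g_t\|_*^2}{2(\sigma_{1:t}+\mu\lambda_{0:t})}
\end{align*}
for any subgradient $g_t\in\partial \ell_t(\bar x_t)$. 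By Property~5 of Lemma~\ref{lem:surrogate-properties}, $\|g_t\|_*\le 2\|c_t\|_*$. The defining equation $\lambda_t(\sigma_{1:t}+\mu\lambda_{1:t})=\|c_t\|_*^2$ together with $\mu\lambda_{0:t}\ge \mu\lambda_{1:t}$ implies that each summand above is at most $\lambda_t$; using $\|u\|\le 1$ and $\lambda_0=1/\mu$, the regret is therefore bounded by $\frac{1}{2\mu}$ plus a constant multiple of $\lambda_{1:T}$, so the task reduces to controlling $\lambda_{1:T}$.

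Next, I would split $\lambda_{1:T}=\sum_{t\in G_{T,\alpha}}\lambda_t+\sum_{t\in B_{T,\alpha}}\lambda_t$. For good $t$, since $\iprod{c_t,h_t}\ge \alpha\|c_t\|_*^2$ and $r_t\le r_T$, we have $\sigma_t\ge \mu\alpha\|c_t\|_*^2/r_T$, and hence $\sigma_{1:t}\ge \frac{\mu\alpha}{r_T}\sum_{s\le t,\,s\in G_{T,\alpha}}\|c_s\|_*^2$. Combining this lower bound with the slack $\mu\lambda_{0:t}\ge 1$ from $\lambda_0=1/\mu$ and applying the standard integral estimate $\sum_t a_t/(C+a_{1:t})\le \log(1+a_{1:T}/C)$ should yield $\sum_{t\in G_{T,\alpha}}\lambda_t \le \frac{r_T}{\mu\alpha}\log\bigl(1+\mu\sum_{G_{T,\alpha}}\|c_t\|_*^2\bigr)$, matching the logarithmic term in the lemma.

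For bad steps, where $\sigma_t$ may be small or zero, I would use a self-confident, AdaGrad-style telescoping argument. Dropping $\sigma_{1:t}\ge 0$ gives $\lambda_t\le \|c_t\|_*^2/(\mu\lambda_{1:t})$. Define the restricted partial sum $\lambda_{1:t}^B:=\sum_{s\le t,\,s\in B_{T,\alpha}}\lambda_s\le \lambda_{1:t}$; then $\mu\lambda_{1:t}^B\lambda_t\le \|c_t\|_*^2$ for $t\in B_{T,\alpha}$, and the elementary identity $(\lambda_{1:t}^B)^2-(\lambda_{1:t-1}^B)^2=\lambda_t(\lambda_{1:t}^B+\lambda_{1:t-1}^B)\le 2\lambda_t\lambda_{1:t}^B$ bounds the increment by $2\|c_t\|_*^2/\mu$ on bad steps and by exactly zero on good steps. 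Telescoping over all $t$ then gives $\sum_{t\in B_{T,\alpha}}\lambda_t=\lambda_{1:T}^B\le \sqrt{(2/\mu)\sum_{B_{T,\alpha}}\|c_t\|_*^2}$, and assembling the good- and bad-step contributions with the $\frac{1}{2\mu}$ term yields the claimed bound up to absolute constants. The main subtlety is that good and bad steps interleave arbitrarily, so a naive telescope over all $t$ would introduce spurious good-step contributions and destroy the $\sqrt{\sum_{B_{T,\alpha}}\|c_t\|_*^2}$ scaling; working with the restricted partial sum $\lambda_{1:t}^B$, which is constant on good steps, is the key trick that makes the increment vanish there and closes the telescope cleanly.
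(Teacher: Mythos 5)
Your overall plan — invoke the adaptive FTRL bound, reduce to $\lambda_{1:T}$ via the fixed-point equation, then split $\lambda_{1:T}$ into good- and bad-step contributions — is natural, and your treatment of the bad steps is both correct and a genuinely different (and arguably cleaner) argument than the paper's. The restricted-partial-sum telescope with $\lambda_{1:t}^B$, which is constant on good rounds so that only bad rounds contribute to the increment, does indeed close cleanly and gives $\lambda_{1:T}^B\le\sqrt{(2/\mu)\sum_{t\in B_{T,\alpha}}\|c_t\|_*^2}$. The paper instead gets this term by invoking the self-confident tuning lemma of Hazan--Kale (reproduced as Lemma~\ref{thm:hazan31}) and then taking $\lambda_1^*=\sqrt{1+Q_\alpha}$; your telescope avoids that machinery for this part.

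However, the good-step bound has a genuine gap, and it comes from the reduction to $\lambda_{1:T}$ itself. You assert that ``the slack $\mu\lambda_{0:t}\ge 1$'' lets you run the integral estimate with a constant $C=1$ in the denominator. But once you have passed to the fixed-point equation $\lambda_t(\sigma_{1:t}+\mu\lambda_{1:t})=\|c_t\|_*^2$, the denominator is $\sigma_{1:t}+\mu\lambda_{1:t}$ — the regularizer $\lambda_0$ \emph{does not appear}. There is no guaranteed additive constant there; $\mu\lambda_{1:t}$ can be arbitrarily small when the costs are small, and the only available lower bound on $\sigma_{1:t}$ at the $i$-th good round is $\tfrac{\mu\alpha}{r_T}w_{1:i}$, which vanishes with $w_1$. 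Consequently the claimed inequality
\[
\sum_{t\in G_{T,\alpha}}\lambda_t\;\le\;\frac{r_T}{\mu\alpha}\log\Bigl(1+\mu\sum_{t\in G_{T,\alpha}}\|c_t\|_*^2\Bigr)
\]
is in fact \emph{false}. Concretely, take $\mu=1$, $\alpha=\tfrac12$, $h_t=c_t$ with $\|c_t\|=\epsilon$ for all $t$ (so every hint is good, $r_T=1$), and $T=1/\epsilon$. Solving the recurrence, $\lambda_{1:T}\approx\sqrt{2T}\,\epsilon=\sqrt{2\epsilon}$, whereas the right-hand side is $2\log(1+T\epsilon^2)=2\log(1+\epsilon)\approx 2\epsilon$, which is much smaller for small $\epsilon$. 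The lemma itself remains true in this example only because its $\tfrac{1}{2\mu}$ term absorbs the $O(\sqrt{\epsilon})$; but your intermediate claim does not hold, and the route through $\lambda_{1:T}$ cannot reproduce the log term with the stated constant. What makes the paper's argument go through is precisely Lemma~\ref{thm:hazan31}: it lets one replace the implicitly defined $\lambda_t$ by the one-shot choice $\lambda_1^*=\sqrt{1+Q_\alpha}$, $\lambda^*_{t>1}=0$, which injects the additive $\mu\sqrt{1+Q_\alpha}\ge\mu$ into every denominator. That additive $\mu$ is exactly the missing $C$ your integral estimate requires (and simultaneously yields $\tfrac{Q_\alpha}{\mu\sqrt{1+Q_\alpha}}\le\sqrt{Q_\alpha}/\mu$ on the bad steps). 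You would need to insert this comparison step before running the good-step sum; your bad-step telescope can then either be retained as an alternative or dropped in favor of the same fixed-$\lambda^*$ choice.
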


\begin{proof}
Note that $\ell_t=\ell_{h_t, r_t, c_t}$ is $\sigma_t$-strongly convex as we observed earlier, so that the function $\ell_{1:t}(x) + \frac{\lambda_{0:t-1}}{2}\|x\|^2$ is $\sigma_{1:t}+\mu \lambda_{0:t-1}$-strongly convex. Then, using the analysis of the FTRL procedure (Theorem 1 of~\citet{mcmahan2017survey}), we set $g_t$ to be an arbitrary subgradient of $\ell_t$ at $\bar x_t$ and obtain:
\begin{align}
\regret_{\cA}(u, \vec{\ell}, T) &\le \frac{\lambda_{0:T}}{2} \norm{u}^2 + \frac{1}{2} \sum_t \frac{\norm{g_t}_*^2}{\sigma_{1:t} + \mu \lambda_{0:t-1}}\notag\\
\intertext{Since $\ell_t$ is $2\norm{c_t}_*$-Lipschitz (Lemma~\ref{lem:surrogate-properties}), we have that $\norm{g_t}_*^2\le 4\norm{c_t}_*$, so the regret is:}
&\le \frac{\lambda_{0}}{2} + 2\left(\lambda_{1:T} +  \sum_t \frac{\norm{c_t}_*^2}{\sigma_{1:t} + \mu\lambda_{0:t-1}}\right)\notag\\
\intertext{Next, observe that since $\norm{c_t}_*\le 1$, we must have $\lambda_t\le \frac{1}{\mu}=\lambda_0$ for all $t$. Therefore the regret is}
&\le  \frac{1}{2\mu} + 2 \left(  \sum_{t=1}^T \lambda_t + \frac{\norm{c_t}_*^2}{\sigma_{1:t} + \mu\lambda_{1:t}}    \right).
\end{align}
Now, we can use our choice of $\lambda_t$ to appeal to the result of~\citet{hazan2008adaptive}; see Lemma 3.1 of their paper. We also reproduce a slightly more general version of this result in Lemma \ref{thm:hazan31} for completeness. This lets us replace our choice of $\lambda_t$ with {\em any other choice} up to constants, yielding:
\[  \regret_{\cA}(u, \vec{\ell}, T) \le \frac{1}{2\mu} + 4 \cdot \min_{\lambda_t^*}  \left\{ \sum_{t=1}^T \lambda_t^* + \frac{\norm{c_t}_*^2}{\sigma_{1:t} + \mu\lambda_{0:t}^*} \right\}. \]

Let us now show how to pick $\lambda_t^*$ that depend on the parameter $\alpha > 0$, thus giving the bound in the lemma.  Define 
$Q_\alpha =  \sum_{t \in B_{T, \alpha}} \norm{c_t}_*^2$, 
i.e., the total squared norm at time steps where the desired correlation condition between the hint and the cost vector is not met.  Now set $\lambda_1^* = \sqrt{1 + Q_\alpha}$ and $\lambda_t^* = 0$ for $t>1$.  Then $\regret_{\cA}(u, \vec{\ell}, T)$ is at most:
\[  \frac{1}{2\mu} + 4 \left( \sqrt{1 + Q_\alpha} + \sum_{t=1}^T \frac{\norm{c_t}_*^2}{\sigma_{1:t} + \mu\sqrt{1+Q_\alpha}} \right) .\]
We can separate the sum into $t \in B_{T, \alpha}$ and indices outside (i.e., in $G_{T, \alpha}$). This gives:
\[ \sum_{t=1}^T \frac{\norm{c_t}_*^2}{\sigma_{1:t} + \mu\sqrt{1+Q_\alpha}} \le \frac{Q_\alpha}{\mu\sqrt{1+ Q_\alpha}} + \sum_{t \in G_{t, \alpha}} \frac{\norm{c_t}_*^2}{1+ \sigma_{1:t}}. \]
The first term is clearly $\le \sqrt{Q_\alpha}/\mu$. To analyze the second term, we use the fact that for any $t \in G_{T, \alpha}$, we have 
\mymath{
\sigma_t \ge \frac{\alpha\mu \norm{c_t}_*^2 }{ r_t}\ge \frac{\alpha\mu \norm{c_t}_*^2}{ r_T},
}
where in the last step we used the monotonicity of $r_t$.  Thus by denoting the numbers $\{ \norm{c_t}_*^2 \}_{t \in G_{T, \alpha}}$ by $w_1, w_2, \dots, w_m$ (in order), we have
\begin{align*}
\sum_{t \in G_{t, \alpha}} \frac{\norm{c_t}_*^2}{1 + \sigma_{1:t}} &\le \frac{r_T}{\alpha\mu} \sum_{i \in [m]} \frac{w_i}{\frac{r_T}{\alpha\mu} + w_{1:i}} \\
&\le \frac{r_T}{\alpha\mu} \int_{r_T/\alpha\mu}^{w_{1:m} + (r_T/\alpha\mu)} \frac{dz}{z}.
\end{align*}
Since $\frac{r_T}{\alpha\mu} \ge \frac{1}{\mu}$, we can bound this by $\frac{r_T}{\alpha\mu} \log (1 + \mu w_{1:m})$.  Recalling the definition of $r_T$, the proof follows.
\end{proof}

Theorem~\ref{thm:main:l2} now follows immediately from Lemmas~\ref{lemma:alg-regret}, \ref{lem:alg-regret-second}, and \ref{lemma:alg-regret-first}.


\paragraph{Remark.} The  regret bound in Theorem~\ref{thm:main:l2} has two important terms. The first term depends on the sum of the squared norm of the cost vectors over all the time indices $t \in B_{T,\alpha}$ when the hint vector was not strongly correlated with the cost. As we show in Section \ref{sec:lbs}, such a dependence is unavoidable. The second term is 
\ifdefined\isarxiv
\begin{align*}
    &\dfrac{1}{\alpha}\cdot\sqrt{1+\sum_{t \in B_T} |\iprod{c_t, h_t}|}\log\big(1 + \mu \sqrt{\sum_{t \in G_{T,\alpha}}\|c_t\|_*^2}\ \big)\leq \dfrac{\sqrt{1 + |B_T|}}{\alpha} \log(1 + \mu|G_{T,\alpha}|)
\end{align*}
\else
\begin{align*}
    &\dfrac{1}{\alpha}\cdot\sqrt{1+\sum_{t \in B_T} |\iprod{c_t, h_t}|}\log\big(1 + \mu \sqrt{\sum_{t \in G_{T,\alpha}}\|c_t\|_*^2}\ \big)\\
    &\leq \dfrac{\sqrt{1 + |B_T|}}{\alpha} \log(1 + \mu|G_{T,\alpha}|).
\end{align*}
\fi
In the special case when all hints are $\alpha$-correlated, we have $|B_T| = |B_{T,\alpha}| = 0$ and $|G_{T,\alpha}| = T$, which improves upon regret bounds of~\citet{DBLP:conf/nips/DekelFHJ17} since we drop the dependence on the dimension.

In Appendix~\ref{sec:constrained_q},
we show that our algorithm directly extends to the case when the underlying Banach space $\bB$ is $(q,\mu)$-uniformly convex for $q>2$ to yield a regret bound of $O\left(T^{\frac{q-2}{q-1}}\right)$.

\subsection{Recovering and improving optimistic bounds}

In this section we relate our notion of hints in the constrained setting to the idea of \emph{optimistic} regret. For simplicity, we focus on the case that $\bB$ is a Hilbert space and $\|\cdot\|$ is the Hilbert space norm (or, for concreteness, that $\bB=\R^d$ and $\|\cdot\|$ is the $\ell_2$ norm). In this setting we can write $\bB=\bB^*$ and $\|\cdot\|=\|\cdot\|_*$. Recall that prior optimistic algorithms (e.g., \cite{rakhlin2013online}) achieve regret bounds of the form:
\mymath{
\regret(u,\vec{c}, T)=O\left(\sqrt{\sum_{t=1}^T \|c_t - h_t\|^2}\right).
}
Interestingly, in the \emph{unconstrained} case, \citet{DBLP:conf/colt/Cutkosky19a} achieves regret
\[
\tilde O\left(\sqrt{\max\left(1, \sum_{t=1}^T \|c_t - h_t\|^2-\|h_t\|^2\right)}\right),
\]
which sacrifices a logarithmic factor to improve $\|c_t - h_t\|^2$ to $\|c_t - h_t\|^2- \|h_t\|^2$. However, their construction failed to achieve such a result when there are constraints. Here, we show that in fact our same algorithm \emph{with no modifications} obtains this refined optimistic bound when constrained to the unit ball. Specifically, we have the following result:
\begin{corr}\label{thm:optimism}
Let $\bB$ be a Hilbert space. Then Algorithm \ref{alg:constrained-simple} guarantees regret on the unit ball $\bK$:
\ifdefined\isarxiv
\[
\regret(u, \vec{c}, T)\le \frac{1}{2} + \Big(8+16\log \Big( 1 +  T \Big)\Big)  \sqrt{1+\sum_{t=1}^T\max\left(\|c_t-h_t\|^2-\|h_t\|^2,0\right)}.
\]
\else
\mymath{
    \frac{1}{2} + \Big(8+16\log \Big( 1 +  T \Big)\Big)  \sqrt{Z}.
}
where $Z=1+\sum_{t=1}^T\max\left(\|c_t-h_t\|^2-\|h_t\|^2,0\right)$.
\fi
\end{corr}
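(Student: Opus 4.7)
The plan is to apply Theorem~\ref{thm:main:l2} with a carefully chosen value of $\alpha$, and convert the quantities appearing there into the ``optimistic'' quantity $Z := 1+\sum_{t=1}^T \max(\|c_t-h_t\|^2-\|h_t\|^2,0)$ using the Hilbert-space identity
\[
\|c_t-h_t\|^2 - \|h_t\|^2 = \|c_t\|^2 - 2\langle c_t, h_t\rangle.
\]
Since $\bB$ is a Hilbert space we have $\mu=1$, and we will choose $\alpha=\tfrac{1}{4}$. The reason for this choice (rather than the seemingly more natural $\alpha=1/2$, which is the borderline at which the identity above equals $0$) is that we need a positive lower bound on $\|c_t\|^2 - 2\langle c_t,h_t\rangle$ in terms of $\|c_t\|^2$ for bad steps, so that the sum $\sum_{t\in B_{T,\alpha}}\|c_t\|^2$ appearing in Theorem~\ref{thm:main:l2} can be controlled by $Z$.

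Concretely, I would first observe that for any $t\in B_{T,1/4}$ we have $\langle c_t,h_t\rangle < \tfrac{1}{4}\|c_t\|^2$, hence
\[
\|c_t-h_t\|^2-\|h_t\|^2 \;=\; \|c_t\|^2-2\langle c_t,h_t\rangle \;>\; \tfrac{1}{2}\|c_t\|^2 \;>\;0.
\]
Summing over $t\in B_{T,1/4}$ yields $\sum_{t\in B_{T,1/4}}\|c_t\|^2 \le 2(Z-1)\le 2Z$. Next, since $B_T\subseteq B_{T,1/4}$, for $t\in B_T$ we have $\langle c_t,h_t\rangle<0$ and $\|c_t-h_t\|^2-\|h_t\|^2 = \|c_t\|^2 + 2|\langle c_t,h_t\rangle|$, so summing gives $\sum_{t\in B_T}|\langle c_t,h_t\rangle|\le (Z-1)/2$ and thus $r_T=\sqrt{1+\sum_{t\in B_T}|\langle c_t,h_t\rangle|}\le \sqrt{Z}$. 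Finally, using $\|c_t\|\le 1$ gives $\sum_{t\in G_{T,1/4}}\|c_t\|^2\le T$.

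Plugging these three bounds into the inequality of Lemma~\ref{lemma:alg-regret-first} (with $\mu=1$, $\alpha=1/4$) and adding the $2\sqrt{\sum_{t\in B_T}|\langle c_t,h_t\rangle|}\le \sqrt{2Z}$ term from Lemma~\ref{lem:alg-regret-second} as in Lemma~\ref{lemma:alg-regret}, one obtains
\[
\regret_{\alg}(u,\vec c,T)\;\le\;\tfrac{1}{2}+4\sqrt{2Z}+16\sqrt{Z}\log(1+T)+\sqrt{2Z}.
\]
Since $5\sqrt{2}<8$, this collapses to the desired $\tfrac{1}{2}+(8+16\log(1+T))\sqrt{Z}$.

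The only mildly subtle step is the choice $\alpha=1/4$: the naive $\alpha=1/2$ is the threshold at which $\|c_t-h_t\|^2-\|h_t\|^2$ changes sign, but at that value one gets no lower bound on the right-hand side of the identity in terms of $\|c_t\|^2$, so the conversion of $\sum_{t\in B_{T,\alpha}}\|c_t\|^2$ into a multiple of $Z$ breaks. Any $\alpha<1/2$ works; $\alpha=1/4$ gives clean constants. Everything else is routine algebra on top of the already-proved Theorem~\ref{thm:main:l2} (or, more precisely, the Lemmas feeding it), and no modification of Algorithm~\ref{alg:constrained-simple} itself is required.
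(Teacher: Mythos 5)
Your proof is correct and follows essentially the same route as the paper: set $\alpha=\tfrac14$, use the Hilbert-space identity $\|c_t-h_t\|^2-\|h_t\|^2=\|c_t\|^2-2\langle c_t,h_t\rangle$ to convert both $\sum_{t\in B_{T,\alpha}}\|c_t\|^2$ and $r_T$ into multiples of $Z$, and bound $\sum_{t\in G_{T,\alpha}}\|c_t\|^2\le T$ before plugging into the lemmas feeding Theorem~\ref{thm:main:l2}. The one place you improve on the paper's exposition is the line bounding $\sum_{t\in B_{T,\alpha}}\|c_t\|^2$: the paper's displayed inequality has a spurious square root on the right-hand side, while your version $\sum_{t\in B_{T,1/4}}\|c_t\|^2\le 2(Z-1)\le 2Z$ is the correct statement and is what the final arithmetic actually uses.
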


\begin{proof}
Recall that in a Hilbert space, $\mu=1$ and $q=p=2$. Then,
looking at the regret bound of Theorem \ref{thm:main:l2}, we have
\ifdefined\isarxiv
\begin{align*}
    \regret_{\alg}(u,\vec{c}, T)&\le \frac{1}{2} + 4\sqrt{ \sum_{t \in B_{T, \alpha}} \norm{c_t}^2}+2r_T+\frac{4r_T}{\alpha} \log \Big( 1 +  \sum_{t \in G_{T, \alpha}} \norm{c_t}^2 \Big),
\end{align*}
\else
\begin{align*}
    \regret_{\alg}(u,\vec{c}, T)&\le \frac{1}{2} + 4\sqrt{ \sum_{t \in B_{T, \alpha}} \norm{c_t}^2}\\
    &\quad+2r_T+\frac{4r_T}{\alpha} \log \Big( 1 +  \sum_{t \in G_{T, \alpha}} \norm{c_t}^2 \Big),
\end{align*}
\fi
where $r_T = \sqrt{1+\sum_{t \in B_T} |\iprod{c_t, h_t}|}$.

Next, notice that for any $t\in B_T$, we have
\begin{align*}
    |\langle c_t, h_t\rangle|&= -\langle c_t, h_t\rangle\\
    &\le \frac{1}{2}\|c_t\|^2-\langle c_t, h_t\rangle\le \frac{1}{2}(\|c_t-h_t\|^2-\|h_t\|^2).
\end{align*}
Therefore,

\mymath{
r_T\le \sqrt{1+ \frac{1}{2}\sum_{t=1}^T \max(\|c_t - h_t\|^2 - \|h_t\|^2,0)}.
}

Further, if we set $\alpha=\frac{1}{4}$, then for any $t\in B_{T,\alpha}$, we have
\begin{align*}
    \|c_t\|^2&\le \|c_t\|^2 +\|c_t\|^2 - 4\langle c_t, h_t\rangle\\
    &= 2(\|c_t-h_t\|^2 -\|h_t\|^2).
\end{align*}
Therefore,

\mymath{
\sum_{t \in B_{T, \alpha}} \norm{c_t}^2\le \sqrt{2\sum_{t=1}^T \max\left(\|c_t-h_t\|^2-\|h_t\|^2,0\right)}.
}

Putting all this together and over-approximating constants, we can conclude the proof.
\end{proof}
\section{Lower Bounds}
\label{sec:lbs}
We now show that the regret bounds achieved by our algorithms are near-optimal.  Recall that the regret bound had two terms: one corresponding to hints that are negatively correlated with $c_t$, and one corresponding to hints that are positively correlated, but not ``correlated enough''.  Our first lower bound shows that even the second term is necessary.

\subsection{Bad hints are uncorrelated}

Assume that we are in Euclidean space with the standard $\ell_2$ norm, where the algorithm needs to play a point in the unit ball.  We show the following:

\begin{theorem}\label{thm:lb-uncorrelated}
There exists a sequence of hint vectors $h_1, h_2, \dots$ and cost vectors $c_1, c_2, \dots$ with the following properties:
(a) $\iprod{c_t, h_t} \ge 0$ for all $t$, 
(b) for all but $B$ time steps, we have $\iprod{c_t, h_t} = \norm{c_t}$ (i.e., hints are perfect), and 
(c) any online learning algorithm that plays given the hints incurs an expected regret of $\Omega(\sqrt{B})$.
\end{theorem}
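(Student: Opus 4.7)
The plan is to use the classical Rademacher-lower-bound construction, restricted to the $B$ ``bad'' steps, and to make the remaining $T-B$ steps trivial so that all regret is accumulated on the bad steps. Work in $\R^2$ with the standard basis $e_1,e_2$. For the $B$ bad steps, set $h_t = e_1$ and $c_t = \varepsilon_t e_2$ where $\varepsilon_t \in \{+1,-1\}$ is an independent uniform sign. For the remaining $T-B$ ``good'' steps, set $c_t = 0$ and $h_t$ arbitrary (e.g.\ $h_t = 0$). This immediately satisfies the requirements of the theorem: $\iprod{c_t,h_t} = 0 \ge 0$ on every bad step, while on every good step $\iprod{c_t,h_t} = 0 = \|c_t\|$, so these are perfect in the required sense. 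Also $\|c_t\|_* \le 1$ and $\|h_t\| \le 1$ everywhere.

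Next I would lower bound the expected regret against this random instance. On the $T-B$ trivial steps, every player incurs loss $0$, so we need only analyze the $B$ bad steps. Let $\mathcal{F}_{t-1}$ denote the history (including the hints $h_s$ for all $s\le t$ and the outcomes $c_s$ for $s<t$). Since $\varepsilon_t$ is drawn independently of $\mathcal{F}_{t-1}$ and the algorithm's choice $x_t$ is $\mathcal{F}_{t-1}$-measurable (the hint $h_t = e_1$ carries no information), we have $\E[\iprod{c_t, x_t} \mid \mathcal{F}_{t-1}] = \E[\varepsilon_t]\,\iprod{e_2, x_t} = 0$. Hence the algorithm's expected cumulative loss is $0$. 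On the other hand, the comparator $u^\star = -\mathrm{sgn}\!\left(\sum_{t\in B}\varepsilon_t\right) e_2 \in \bK$ achieves loss $-\bigl|\sum_{t\in B}\varepsilon_t\bigr|$, so the best-in-hindsight loss is at most $-\E\bigl|\sum_{t\in B}\varepsilon_t\bigr| = -\Theta(\sqrt{B})$ by the Khintchine inequality (or a direct application of the CLT / Berry–Esseen bound). Subtracting gives an expected regret of $\Omega(\sqrt{B})$ against the random sequence.

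Finally I would invoke Yao's principle to pass from a random instance to a deterministic one. For any (possibly randomized) online algorithm, the expected regret on our random $(\vec h, \vec c)$ is $\Omega(\sqrt{B})$; averaging over the algorithm's internal randomness and applying an averaging argument shows there exists a deterministic realization of $(\vec h, \vec c)$ on which the algorithm's expected regret (over its own coins) is $\Omega(\sqrt{B})$. This realization still satisfies properties (a) and (b) because those properties hold with probability $1$ under our construction.

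The only delicate point is ensuring that the ``good'' steps do not contribute negative regret that could cancel the positive $\Omega(\sqrt{B})$ contribution from the bad steps; this is handled by choosing $c_t = 0$ so that both the algorithm and every comparator trivially score $0$ there. Everything else is routine: the random-sign lower bound is standard, and the construction is manifestly consistent with $\|c_t\|_* \le 1$, $\|h_t\|\le 1$, and the two hint-quality conditions stated in the theorem.
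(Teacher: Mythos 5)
Your proof is correct and uses essentially the same idea as the paper: both proofs run the classical Rademacher hard instance over the $B$ bad steps (random $\pm$ signs in a direction orthogonal to the hint, so $\langle c_t, h_t\rangle = 0$), the algorithm has zero expected loss on those steps because the hint carries no information about the sign, yet the best fixed point in hindsight gains $\E\bigl|\sum_t \varepsilon_t\bigr| = \Theta(\sqrt{B})$. The only difference is how you handle the remaining $T-B$ ``good'' rounds: the paper keeps them nontrivial by setting $h_t = c_t = z/\|z\|$ with $z = c_{1:B}$ (a genuine perfect hint with $\|c_t\|=1$), whereas you simply set $c_t = 0$, which makes $\langle c_t,h_t\rangle = 0 = \|c_t\|$ hold vacuously. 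Both choices satisfy the theorem's hypotheses since the paper never requires $\|c_t\|>0$, and both give the same $\Omega(\sqrt{B})$ bound; your version is a hair simpler, while the paper's version is arguably more illustrative because the bad hints interfere with an otherwise genuinely informative hint sequence rather than a trivial one. Your final appeal to Yao's lemma to extract a deterministic instance matches the remark the paper makes immediately after the proof.
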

\begin{proof}
We consider the following example in two dimensions, with orthogonal unit vectors $e_1$ and $e_2$.  For the first $B$ time steps, suppose that $h_t = e_2$, and $c_t = \pm e_1$, where the sign is chosen uniformly at random at each step.  Now, let $z = c_1 + \cdots + c_B$.  For the rest of the time steps, suppose that $h_t = c_t = z/\norm{z}$.   In other words, we have the standard one-dimensional ``hard instance'' in the first $B$ steps (which incurs an expected regret of $\sqrt{B}$), appended with time steps where the hints are perfect. 

Any online algorithm incurs an expected loss $0$ on the first $B$ steps (and loss $-(t-B)$ on the rest of the steps), while we have the expected $\norm{z} = \sqrt{B}$, and so playing the vector $- z/\norm{z}$ at all the time steps incurs a total loss of $-(t-B) - \sqrt{B}$.  Thus the expected regret is $\sqrt{B}$.
\end{proof}

The proof above (as well as the ones that follow) exhibit a distribution over instances for which any deterministic algorithm incurs an expected regret of $\sqrt{B}$.  Applying Yao's lemma (e.g.,~\cite{MotwaniRaghavan}), the regret lower bound therefore applies to randomized algorithms as well.

\subsection{Bad hints are spread out over time}

Theorem \ref{thm:lb-uncorrelated} is taking advantage of an adversarial distribution of bad hints. By placing all the useless hints at the beginning of the game, we force the algorithm to experience high regret that it cannot recover from. It turns out such overtly adversarial distributions are not necessary: even if the bad hints are randomly distributed, the algorithm must still suffer high regret. (We note that in this case, we no longer have $\iprod{c_t, h_t} \ge 0$ for all rounds.)

\begin{theorem}\label{thm:randomlowerbound}
Consider the one-dimensional problem with domain being the unit interval $[-1,1]$. Suppose $h_t=1$ for all $t$ and that each $c_t$ takes value $p-1$ with probability $p$ and value $p$ with probability $1-p$, for $p=B/T$ and $B\le T/4$. Then the expected number of bad hints is $B$ and the expected regret of any algorithm is at least $\sqrt{B}/2$.
\end{theorem}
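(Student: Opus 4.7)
The strategy parallels the proof of Theorem~\ref{thm:lb-uncorrelated}: decouple the algorithm's expected loss from the comparator's in-hindsight loss by exploiting that $c_t$ is independent of $x_t$. Observe first that $\E[c_t] = p(p-1) + (1-p)p = 0$. Since the hint $h_t = 1$ is a fixed constant, the play $x_t$ is a (possibly randomized) function of $c_1, \ldots, c_{t-1}$ and the algorithm's internal randomness alone, hence independent of $c_t$. Therefore $\E[c_t x_t] = \E[c_t]\,\E[x_t] = 0$ for every $t$, so $\E\big[\sum_{t=1}^T c_t x_t\big] = 0$. Set $S := \sum_{t=1}^T c_t$. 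The best fixed point $u^* \in [-1,1]$ in hindsight is $u^* = -\text{sign}(S)$, with cumulative loss $-|S|$, so $\E[\regret] = \E[|S|]$.

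Next, write $c_t = p - Y_t$ with $Y_t \sim \text{Bernoulli}(p)$. Then $S = Tp - N = B - N$, where $N \sim \text{Binomial}(T,p)$, and $\E[N] = Tp = B$ confirms that the expected number of bad hints is $B$. It therefore suffices to prove $\E[|N-B|] \ge \sqrt{B}/2$ under the assumption $B \le T/4$ (equivalently $1-p \ge 3/4$).

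For this last step, I would use the classical mean-absolute-deviation identity for the binomial (valid because $Tp = B$ is an integer):
\[
\E[|N - B|] \;=\; 2B(1-p)\cdot P\big(\text{Binomial}(T-1, p) = B-1\big),
\]
and note that $B-1$ is a mode of $\text{Binomial}(T-1,p)$ (since $((T-1)+1)p = B$ is an integer and the adjacent-term ratio $P(\cdot = B)/P(\cdot = B-1) = (T-B)p/(B(1-p)) = 1$). A non-asymptotic Stirling estimate at the mode then gives a bound of the form $P(\text{Bin}(T-1,p) = B-1) \ge c/\sqrt{B(1-p)}$ for an explicit constant $c$ close to $1/\sqrt{2\pi}$, so substituting back yields $\E[|N-B|] \ge 2c\sqrt{B(1-p)} \ge 2c\sqrt{3B/4}$, which exceeds $\sqrt{B}/2$ once the constants are tracked carefully.

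The main obstacle is precisely this final quantitative step: the variance bound $\text{Var}(S) = B(1-B/T) \ge 3B/4$ gets the right order for free, but extracting the sharp $L_1$ constant $1/2$ needs more than moment inequalities. Purely variance-based methods (e.g., Paley--Zygmund on $|S|$ using $\E[S^4]$) recover only a weak constant; the explicit binomial identity combined with a local-CLT-type estimate at the mode is what supplies the constant strong enough to deliver exactly $\sqrt{B}/2$. A boundary check for small $B$ (e.g., $B=1$, where $N$ is approximately Poisson$(1)$ and $\E[|N-1|]\to 2/e > 1/2$) confirms that the bound is not vacuous at the low end.
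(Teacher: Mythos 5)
Your proof follows the same overall structure as the paper's: show $\E[c_t x_t] = 0$ (algorithm independent of $c_t$, $\E[c_t]=0$), identify the comparator loss as $-|S|$ where $S = c_{1:T}$, and reduce to a lower bound on $\E[|S|]$. Where you diverge — and here you have actually caught a real imprecision — is the final step. The paper simply asserts $\E[|S|] = \sqrt{p(1-p)T}$, but that quantity is $\sqrt{\E[S^2]}$, and by Jensen $\E[|S|] \le \sqrt{\E[S^2]}$, so the paper's chain of inequalities is going the wrong direction at exactly the step it needs. You correctly flag that variance alone only controls the second moment and that a genuine lower bound on the first absolute moment is needed.

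Your proposed repair via De Moivre's mean-absolute-deviation identity $\E[|N-B|] = 2B(1-p)\,P(\mathrm{Bin}(T-1,p)=B-1)$ is sound (the identity holds when $Tp=B\in\mathbb{Z}$, and you correctly check that $B-1$ is a mode of $\mathrm{Bin}(T-1,p)$), and combining it with a non-asymptotic local estimate at the mode does yield the right order with a workable constant. The only thing left undone is making the Stirling constant explicit, which you acknowledge; alternatively, a Hölder/fourth-moment argument ($\E[|S|] \ge (\E S^2)^{3/2}/\sqrt{\E S^4}$ with $\E S^4 \le 3(\mathrm{Var}\,S)^2 + T\,\E c_1^4$) is more elementary and, after using $\mathrm{Var}(S) = B(1-B/T) \ge 3B/4$, lands very close to the claimed $\sqrt{B}/2$ without any binomial-specific identities. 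Either route would make this step rigorous; the paper's proof as written does not.
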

\begin{proof}
Note that a hint is negatively correlated with the cost if the cost is negative, which happens with probability $p$. Thus the expected number of bad hints is $pT=B$. Now at each step, we have $\E[c_t]=0$.  Thus, whatever $x_t$ the algorithm plays, we have that $\E\left[\sum_{t=1}^T c_t x_t\right]=0$;  thus, the expected loss of the algorithm is $0$.  Finally, we have that the vector $z = c_{1:t}$ has norm $\E [ \norm{z} ]=\sqrt{p(1-p)T}\ge \sqrt{B}/2$. Therefore compared to the best vector in hindsight, namely $-\frac{z}{\norm{z}}$, the expected regret is at least $\sqrt{B}/2$.
\end{proof}

\subsection{A lower bound for the $\ell_q$ norm}

Next, we show that even when the hint is always $\Omega(1)$ correlated with the cost, our upper bound for general $q$ (which is $T^{(q-2)/(q-1)}$) is  optimal in the class of dimension-free bounds.

\begin{theorem}\label{thm:lb-lqnorm}
There exists a sequence of hints $h_1, h_2, \dots$ and costs $c_1, c_2, \dots$ in $\R^{T+1}$ such that (a)  $\iprod{c_t, h_t} \ge \Omega(1)$ for all $t$, and (b) any online learning algorithm that plays given the hints incurs an expected regret of $\Omega \left( T^{\frac{(q-2)}{(q-1)}} \right) $.
\end{theorem}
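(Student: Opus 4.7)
The plan is to exhibit a distribution over instances in $\R^{T+1}$ and invoke Yao's minimax lemma, following the template of Theorems~\ref{thm:lb-uncorrelated} and~\ref{thm:randomlowerbound}. Let $p = q/(q-1)$ be the dual exponent and fix constants $a = b = 2^{-1/p}$, so that $a^p + b^p = 1$. For every round set $h_t = e_0$ and $c_t = a\,e_0 + b\,\epsilon_t\,e_t$, where $\epsilon_t \in \{-1,+1\}$ is a uniform random sign, independent across rounds. One verifies immediately that $\|h_t\|_q = 1$, $\|c_t\|_p = (a^p + b^p)^{1/p} = 1$, and $\iprod{c_t, h_t} = a = \Omega(1)$, so the hypotheses of the theorem hold.

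Next I would bound the expected regret of an arbitrary deterministic algorithm. Since $x_t$ is measurable with respect to $(h_1,\dots,h_t,\epsilon_1,\dots,\epsilon_{t-1})$ and $\epsilon_t$ is independent of this filtration, $\E[\epsilon_t(x_t)_t] = 0$, and hence $\E[\iprod{c_t, x_t}] = a\,\E[(x_t)_0] \ge -a$ using $\|x_t\|_q \le 1$; summing gives an expected algorithm loss of at least $-aT$. For the comparator, note that $c_{1:T} = aT\,e_0 + b\sum_t \epsilon_t e_t$ has coordinates of fixed absolute values $(aT, b, \dots, b)$, so $\|c_{1:T}\|_p = ((aT)^p + b^p T)^{1/p}$ holds deterministically, and by $\ell_p$-$\ell_q$ duality the best $u \in \bK$ achieves loss exactly $-\|c_{1:T}\|_p$. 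Therefore
\[
\E[\regret] \;\ge\; \bigl((aT)^p + b^p T\bigr)^{1/p} - aT \;=\; aT\,\Bigl[\bigl(1 + b^p/(a^p T^{p-1})\bigr)^{1/p} - 1\Bigr].
\]

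To extract the rate, I would apply the chord bound $(1+x)^{1/p} - 1 \ge (2^{1/p} - 1)\,x$ for $x \in [0,1]$, which holds because $x \mapsto (1+x)^{1/p}$ is concave for $p > 1$ and agrees with its secant at $x = 0, 1$. With $x = b^p/(a^p T^{p-1}) = T^{-(p-1)} \le 1$, this yields $\E[\regret] \ge (1 - 2^{-1/p})\,T^{2-p}$, and the identity $2 - p = (q-2)/(q-1)$ closes the argument. Yao's minimax principle then transfers the bound to randomized algorithms. The main obstacle is this last calculation: the $(aT)^p$ term in $\|c_{1:T}\|_p^p$ dominates $b^p T$ by a factor of $T^{p-1}$, so the leading $aT$ part of $\|c_{1:T}\|_p$ cancels exactly against the algorithm's drift, and the regret emerges only as a genuinely second-order $T^{2-p}$ contribution that a naive linearization would erase; the chord inequality keeps this second-order piece visible in a clean, quantitative form.
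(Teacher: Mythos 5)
Your proof is correct and uses the same core construction as the paper (random-sign vectors $c_t$ living in the span of a fixed direction $e_0$ plus a fresh orthogonal direction $e_t$, with $h_t = e_0$), but the way you extract the comparator loss and the regret rate is genuinely different. The paper writes down an explicit comparator $u = \beta_0 e_0 + \sum_t \sigma_t \beta_t e_t$ with $\beta_0 = 1 - \frac{3}{2q}T^{-1/(q-1)}$, $\beta_t = T^{-1/(q-1)}$, verifies $\|u\|_q\le 1$ via the elementary bound $(1-\tfrac{3\gamma}{2q})^q \le e^{-3\gamma/2}\le 1-\gamma$, and then computes $\iprod{c_{1:T},u}$ directly. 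You instead normalize $c_t$ to have unit $\ell_p$-norm from the outset, invoke H\"older duality to observe that the best comparator loss is exactly $-\|c_{1:T}\|_p = -\bigl((aT)^p + b^p T\bigr)^{1/p}$ (deterministic here, since the coordinate magnitudes are fixed), and then isolate the second-order contribution with a secant-line lower bound for the concave function $x\mapsto(1+x)^{1/p}$. Both routes are sound; yours sidesteps the ad hoc choice of $u$ and the $(1-\tfrac{3\gamma}{2q})^q$ computation at the cost of introducing the chord inequality. You also explicitly handle the normalization the paper glosses over. The constants differ ($1-2^{-1/p}$ vs.\ $1-\tfrac{3}{2q}$), but both are $\Omega(1)$, so the asymptotic conclusion is identical, and your closing observation that the leading $aT$ term of $\|c_{1:T}\|_p$ cancels against the algorithm's unavoidable $e_0$-drift is exactly the phenomenon the paper is exploiting as well.
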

\begin{proof}
Let $e_0, e_1, e_2, \dots,e_T$ be orthogonal, unit length vectors in our space, and suppose that at time $t = 1, 2, 3, \dots$, we have $c_t = e_0 \pm e_t$, where the sign is chosen u.a.r. (to keep all the vectors of $\norm{\cdot} \le 1$, we can normalize $c_t$; this does not change the analysis, so we skip this step). 

Now, suppose the algorithm plays vectors $x_1, x_2, \dots $. We have the total expected loss to be exactly $\sum_t \iprod{e_0, x_t}$, which has magnitude at most $T$.  Let us construct a vector $u$ with $\norm{u}_q \le 1$ that has a higher magnitude for the inner product.  Let us denote $z = c_{1:t} = Te_0 + \sum_{t=1}^T \sigma_t e_t$, for some signs $\sigma_t$.  Define $u = \sum_{t=0}^T \beta_t u_t$, where:

\mymath{
\beta_0 = 1 - \frac{3}{2q} T^{-\frac{1}{q-1}}; \quad \beta_t =  \sigma_t T^{-\frac{1}{q-1}}.
}
We have $\sum_{t=1}^T \beta_t^q = T \cdot T^{-\frac{q}{q-1}} = T^{-\frac{1}{q-1}}$. Next, we make the simple observation that for any $\gamma < 1/2$, $(1- \frac{3\gamma}{2q}) ^q \le e^{-3\gamma/2} \le 1-\gamma$.  Using this, we have
$\beta_0^q \le 1 - T^{-\frac{1}{q-1}}$, and thus $\norm{u}_q \le 1$.  Next, we have
\begin{align*}
\iprod{ z, u } &= \left( 1 - \frac{3}{2q} T^{-\frac{1}{q-1}} \right)T + T \cdot T^{-\frac{1}{q-1}}  \\
&= T + \left( 1 - \frac{3}{2q} \right) T^{1 - \frac{1}{q-1}}.
\end{align*}
Thus, compared to the point $-u$, any algorithm has an expected regret $T^{(q-2)/(q-1)}$. This completes the proof.
\end{proof}

Indeed, even if we allow a dependence on dimension, obtaining a $\log T$ regret is impossible for $q > 2$.  We refer to Appendix~\ref{app:old-lb} for a regret lower bound (quite similar to the above) even in two dimensions.  In this case, the lower bound interpolates between $\log T$ (which we achieve for $q=2$) and $\sqrt{T}$ (which is achievable if we lose a factor linear in the dimension).



\section{Unconstrained  Learning with Hints}\label{sec:unconstrained}

We now consider the unconstrained setting where the online algorithm is allowed to output any $x \in \mathbb{B}$.  In this section, we show that the unconstrained setting is much simpler than the constrained version of the problem. 

Recall the definition of $B_{T, \alpha}$.  For the unconstrained setting, we work with a more relaxed notion of bad hints. Let $B^*_{T, \alpha}$ be the smallest set of indices such that $\sum_{t \in [T] \setminus B^*_{T, \alpha}} \inner{c_t}{h_t} \geq \alpha \cdot \sum_{t \in [T] \setminus B^*_{T, \alpha}} \|c_t\|^2$.
We observe that, by definition, for any $\alpha > 0$, we have $|B^*_{T, \alpha}| \leq |B_{T, \alpha}|$.

Our algorithm is essentially a black-box reduction to a standard parameter-free online linear optimization algorithm without any hints and follows the framework of adding independent online learning algorithms~\cite{DBLP:conf/colt/Cutkosky19a}. In fact, our algorithm is identical to the optimistic online learning algorithm of~\citet{DBLP:conf/colt/Cutkosky19a}. However, we are able to obtain better regret guarantees by a tighter analysis.

Denote $C_T = \sum_{t=1}^T\|c_t\|^2$.
\begin{lemma}
\label{lem:unconstrained-adding}
Let $\mathcal{A}$ be a parameter-free online linear optimization algorithm that guarantees a regret bound of:
\mymath{
\mathcal{R}_{\mathcal{A}}(u, \vec{c}, T) \leq f(\|u\|, C_T, \epsilon), \quad \forall \epsilon>0, 
}
for some function $f(\cdot, \cdot, \cdot)$ where $f(0, \cdot, \epsilon) \leq \epsilon$ and $f$ is monotone in all the three parameters.  
Then, there exists an algorithm $\CB$ for online learning with hints that guarantees the regret bound:
\ifdefined\isarxiv
\[
\regret_{\CB}(u, \vec{c}, T ~\mid~ \vec{h}) \leq \min \bigg \{ f(\|u\|, C_T, \epsilon) + \epsilon, \inf_{0 \leq y \leq \|u\|} \Big\{ 2f(\|u\|, C_T, \epsilon) - y\sum_{t=1}^T \inner{c_t}{h_t}\Big\} \bigg \}.
\]
\else
\begin{eqnarray*}
\lefteqn{
\regret_{\CB}(u, \vec{c}, T ~\mid~ \vec{h}) \leq \min \bigg \{ f(\|u\|, C_T, \epsilon) + \epsilon,} \\
& & \qquad \inf_{0 \leq y \leq \|u\|} \Big\{ 2f(\|u\|, C_T, \epsilon) - y\sum_{t=1}^T \inner{c_t}{h_t}\Big\} \bigg \}.
\end{eqnarray*}
\fi
\end{lemma}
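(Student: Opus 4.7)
The plan is to construct $\CB$ as a black-box combination of two independent parameter-free instances of $\cA$, following the ``adding learners'' framework referenced in the preamble. Let $\cA_1$ be an instance of $\cA$ run on $\bB$ and let $\cA_2$ be an instance of $\cA$ run on the one-dimensional Banach space $\reals$. At each round $t$, $\cA_1$ outputs an iterate $x_t^{(1)}\in\bB$ and $\cA_2$ outputs a scalar $y_t\in\reals$, and $\CB$ plays
\[
x_t \;=\; x_t^{(1)} + y_t\, h_t.
\]
After $c_t$ is revealed, I would feed $c_t$ back to $\cA_1$ and feed the scalar $\inner{c_t}{h_t}$ back to $\cA_2$.

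To analyze the regret against an arbitrary comparator $u\in\bB$, I would introduce a free scalar $y\in\reals$ and decompose
\[
\sum_{t=1}^T \inner{c_t}{x_t - u}
\;=\;
\sum_{t=1}^T \inner{c_t}{x_t^{(1)} - u}
\;+\;
\sum_{t=1}^T \inner{c_t}{h_t}\,(y_t - y)
\;+\;
y \sum_{t=1}^T \inner{c_t}{h_t}.
\]
The first sum is bounded by $f(\|u\|, C_T, \epsilon)$ by the hypothesis on $\cA_1$. The second sum is the one-dimensional regret of $\cA_2$ against the comparator $y$ on scalar costs $\inner{c_t}{h_t}$; since $\sum_t \inner{c_t}{h_t}^2 \le \sum_t \|c_t\|^2 \|h_t\|^2 \le C_T$ by $\|h_t\|\le 1$, the monotonicity of $f$ in its second argument gives a bound of $f(|y|, C_T, \epsilon)$. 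Thus for every $y\in\reals$,
\[
\regret_{\CB}(u, \vec{c}, T \mid \vec{h}) \;\le\; f(\|u\|, C_T, \epsilon) + f(|y|, C_T, \epsilon) + y\sum_{t=1}^T \inner{c_t}{h_t}.
\]

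To finish, I would specialize $y$ in two ways. Taking $y = 0$ and applying the assumption $f(0, \cdot, \epsilon)\le \epsilon$ yields the first bound $f(\|u\|, C_T, \epsilon) + \epsilon$. Taking $y = -y'$ with $y'\in[0,\|u\|]$ and using monotonicity of $f$ in its first argument to replace $f(y', C_T, \epsilon)$ by $f(\|u\|, C_T, \epsilon)$ yields $2f(\|u\|, C_T, \epsilon) - y'\sum_t\inner{c_t}{h_t}$. Since the inequality holds for every $y$, taking the infimum over the two families gives the minimum claimed in the lemma.

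The only subtlety I expect is the sign convention: feeding $\cA_2$ the raw scalar costs $\inner{c_t}{h_t}$ (rather than their negation) is what aligns the leftover term $y\sum_t\inner{c_t}{h_t}$ with the $-y\sum_t\inner{c_t}{h_t}$ appearing in the lemma, so that choosing $y<0$ lets a parameter-free $\cA_2$ harvest positive correlation between hints and costs. Everything else is routine bookkeeping on top of the two sub-learner guarantees and the monotonicity of $f$.
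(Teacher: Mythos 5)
Your construction is correct and is essentially the paper's own argument: run a parameter-free $\cA$ over $\bB$ for a base iterate, run a second parameter-free $\cA$ over $\reals$ on the scalar hint-cost correlations, and combine linearly, then bound by adding the two sub-regrets and optimizing over the auxiliary comparator. The only deviation is a sign convention (you play $x_t^{(1)} + y_t h_t$ and feed $\cA_2$ the costs $\inner{c_t}{h_t}$, whereas the paper plays $x_t - y_t h_t$ and feeds $-\inner{c_t}{h_t}$); these are equivalent, and you correctly account for it by evaluating the bound at $y = -y'$ with $y' \in [0, \|u\|]$.
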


\begin{proof}
We design an algorithm $\CB$ that utilizes the provided online learning algorithm $\CA$ in two distinct settings. First, let $x_t \in \bB$ be the output of algorithm $\mathcal{A}$ in response to loss vectors $c_1, \ldots, c_{t-1} \in \bB^*$. We also use algorithm $\mathcal{A}$ in the scalar (i.e., $\reals$) setting by providing $-\langle c_t, h_t\rangle$ as the losses. Let $y_t$ be the output of algorithm $\mathcal{A}$ in response to loss vectors $-\langle c_1, h_1\rangle, \ldots, -\langle c_t, h_t\rangle$.

On receiving hints $h_1, \ldots, h_t$, and losses for the previous time steps $c_1, \ldots, c_{t-1}$, our algorithm $\CB$ outputs

\mymath{
    z_t = x_t - y_t h_t.
}
Then for all $u \in \bB$, we have
\ifdefined\arxiv
\begin{align*}
\regret_\CB(u, \vec{c}, T \mid \vec{h}) &= 
    \sum_{t=1}^T \inner{c_t}{z_t - u} \\
    & =  \sum_{t=1}^T \inner{c_t}{x_t - u} - \sum_{t=1}^T y_t \inner{c_t}{h_t}\\
    & =  \inf_{y \in \mathbb{R}} \bigg\{\sum_{t=1}^T \inner{c_t}{x_t - u} + \sum_{t=1}^T \inner{c_t}{h_t}(y - y_t) - y \sum_{t=1}^T \inner{c_t}{h_t} \bigg\} \\
    &\leq  \inf_{y \in \mathbb{R}} \bigg\{ f(\|u\|, C_T, \epsilon) + f(|y|, \sum_{t=1}^T \inner{c_t}{h_t}^2, \epsilon) - y \sum_{t=1}^T \inner{c_t}{h_t} \bigg\},
\end{align*}
\else
\begin{eqnarray*}
\lefteqn{
\regret_\CB(u, \vec{c}, T \mid \vec{h}) = 
    \sum_{t=1}^T \inner{c_t}{z_t - u}} \\
    & = & \sum_{t=1}^T \inner{c_t}{x_t - u} - \sum_{t=1}^T y_t \inner{c_t}{h_t}\\
    & = & \inf_{y \in \mathbb{R}} \bigg\{\sum_{t=1}^T \inner{c_t}{x_t - u} + \sum_{t=1}^T \inner{c_t}{h_t}(y - y_t) \\
    & & \quad\quad\quad\quad - y \sum_{t=1}^T \inner{c_t}{h_t} \bigg\} \\
    &\leq & \inf_{y \in \mathbb{R}} \bigg\{ f(\|u\|, C_T, \epsilon) + f(|y|, \sum_{t=1}^T \inner{c_t}{h_t}^2, \epsilon) \\
    & & \quad\quad\quad\quad- y \sum_{t=1}^T \inner{c_t}{h_t} \bigg\},
\end{eqnarray*}
\fi
using the regret bounds guaranteed by algorithm $\mathcal{A}$.  Setting $y = 0$ is sufficient to obtain the first part of the regret bound.  To obtain the second part of the bound, we use $\inner{c_t}{h_t}^2 \leq \|c_t\|_*^2$ and the monotonicity of $f$ to obtain
\begin{align*}
&\regret_\CB(u, \vec{c}, T \mid \vec{h})
\leq \\
&\quad\quad\quad\quad\inf_{0 \leq y \leq \|u\|} \left\{ 2 f(\|u\|, C_T, \epsilon) - y\sum_{t=1}^T \inner{c_t}{h_t}\right\} .
\end{align*}
\end{proof}
We are now ready to present our main result for unconstrained online learning with hints.
\begin{theorem}
\label{thm:unconstrained}
For the unconstrained online linear optimization problem with hints, for any $\alpha > 0$, there exists an algorithm $\CB$ that guarantees for any $u \in \bB$ and $\epsilon > 0$, we have
\ifdefined\isarxiv
\[
\regret_{\CB}(u, \vec{c}, T \mid \vec{h}) \leq\tilde{O}\left(\epsilon + \dfrac{\|u\|\log\left(1 + \frac{T \|u\|}{\epsilon^2}\right)\left(1 + \sqrt{|B^*_{T,\alpha}|}\right)}{\alpha\mu}\right).
\]
\else
$\regret_{\CB}(u, \vec{c}, T \mid \vec{h}) \leq$

\mymath{
  \tilde{O}\left(\epsilon + \dfrac{\|u\|\log\left(1 + \frac{T \|u\|}{\epsilon^2}\right)\left(1 + \sqrt{|B^*_{T,\alpha}|}\right)}{\alpha\mu}\right).
}
\fi
\end{theorem}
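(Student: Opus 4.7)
The plan is to invoke Lemma \ref{lem:unconstrained-adding} as a black box, instantiated with a standard parameter-free online linear optimization algorithm $\CA$ for $\mu$-strongly convex Banach spaces (e.g.\ a direct adaptation of the algorithm from \cite{DBLP:conf/colt/Cutkosky19a}) whose regret satisfies $f(r, V, \epsilon) = \tilde{O}(r\sqrt{V/\mu} + \epsilon)$, absorbing the logarithmic factor $L := \log(1 + T\|u\|/\epsilon^2)$ into the $\tilde{O}$. Lemma \ref{lem:unconstrained-adding} then produces two upper bounds on $\regret_{\CB}$: the ``first bound'' $A := f(\|u\|, C_T, \epsilon) + \epsilon$, and the ``second bound'' $B(y) := 2 f(\|u\|, C_T, \epsilon) - y\sum_t \inner{c_t}{h_t}$ for any $y \in [0, \|u\|]$; the regret is at most the smaller of the two.

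The heart of the argument is a case analysis on $C_T$ that leverages the defining property of $B^*_{T,\alpha}$. Let $G = [T] \setminus B^*_{T,\alpha}$ and $S_G := \sum_{t \in G}\|c_t\|_*^2$; by definition $\sum_{t \in G}\inner{c_t}{h_t} \geq \alpha S_G$, while $|\sum_{t \in B^*_{T,\alpha}}\inner{c_t}{h_t}| \leq |B^*_{T,\alpha}|$ (using $\|c_t\|_*,\|h_t\|\leq 1$). In the regime $C_T \lesssim L|B^*_{T,\alpha}|/(\alpha^2 \mu)$ the first bound $A$ already yields $\tilde{O}(\|u\|L\sqrt{|B^*_{T,\alpha}|}/(\alpha\mu))$ directly, just by plugging the threshold into $f$. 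In the complementary regime, $|B^*_{T,\alpha}|$ is small enough that $S_G = C_T - S_{B^*} \gtrsim C_T$ and hence $\sum_t\inner{c_t}{h_t}\gtrsim \alpha C_T - |B^*_{T,\alpha}|\gtrsim \alpha C_T$; plugging this into $B(\|u\|)$ and maximizing $2\|u\|\sqrt{C_T L/\mu} - \Omega(\|u\|\alpha C_T)$ over $C_T$ (the maximum is achieved near $C_T \asymp L/(\alpha^2\mu)$) gives $\tilde{O}(\|u\|L/(\alpha\mu))$. Combining the two regimes yields the claimed $(1+\sqrt{|B^*_{T,\alpha}|})$ scaling.

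The main obstacle is mostly bookkeeping, but there is one conceptually important subtlety: the algorithm $\CB$ is oblivious both to $\alpha$ and to which regime $C_T$ falls into, yet the bound $\min(A, B(\|u\|))$ from Lemma \ref{lem:unconstrained-adding} automatically delivers the tighter of the two estimates in each regime. This is why the analysis can be tighter than that in \cite{DBLP:conf/colt/Cutkosky19a} for \emph{the same algorithm}. One also has to verify that the specific parameter-free subroutine used really achieves the logarithmic factor $\log(1 + T\|u\|/\epsilon^2)$ appearing in the theorem, and that the bad-portion bound $|\sum_{t \in B^*_{T,\alpha}}\inner{c_t}{h_t}| \leq |B^*_{T,\alpha}|$ is indeed absorbed by $\alpha C_T$ in the large-$C_T$ regime (which requires $L$ to be at least a constant times $\alpha^2\mu$, a mild condition automatically met for $T$ not too small).
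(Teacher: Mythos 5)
Your proof is correct and hinges on the same two ingredients as the paper's: the reduction via Lemma~\ref{lem:unconstrained-adding}, and the lower bound $\sum_t \inner{c_t}{h_t} \ge \alpha C_T - O(|B^*_{T,\alpha}|)$ that follows directly from the definition of $B^*_{T,\alpha}$. What differs is how the free parameter $y$ from that lemma is handled. The paper commits to a single interpolating choice $y = \|u\|/\sqrt{|B^*_{T,\alpha}|}$ and kills the $C_T$-dependence in one stroke by completing the square, using $Q_2\sqrt{C_T} - \tfrac{\alpha}{\sqrt{|B^*_{T,\alpha}|}}C_T \le \tfrac{Q_2^2\sqrt{|B^*_{T,\alpha}|}}{4\alpha}$, which holds for every $C_T$. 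You instead split on the size of $C_T$: when it is below roughly $|B^*_{T,\alpha}|/(\alpha^2\mu)$ (up to log factors) you take $y=0$ and the hint-free bound already yields the $\sqrt{|B^*_{T,\alpha}|}$ term; when it is larger you take $y=\|u\|$ and the subtracted hint term caps the $\sqrt{C_T}$ growth past $C_T \asymp 1/(\alpha^2\mu)$. The two routes give the same final bound. The paper's version is more compact and never needs the mild lower bound on $\log(1+T\|u\|/\epsilon^2)$ that your regime split requires for $\sum_t\inner{c_t}{h_t} \gtrsim \alpha C_T$ to hold; yours handles the degenerate case $|B^*_{T,\alpha}| = 0$ without fuss (the paper's $y$ is formally undefined there) and makes especially transparent the point you flag: the algorithm is oblivious to $\alpha$, yet the built-in $\min$ over $y$ in Lemma~\ref{lem:unconstrained-adding} automatically selects the better strategy in each regime. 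The remaining bookkeeping you mention---pinning down a concrete base algorithm whose log factor matches---is routine; the paper uses \cite{cutkosky2018black}.
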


\begin{proof}
An algorithm $\mathcal{A}$ that satisfies the properties of Lemma~\ref{lem:unconstrained-adding} is provided by~\citet{cutkosky2018black}. Their algorithm guarantees
\ifdefined\isarxiv
\begin{align*}
    f(\|u\|, C_T, \epsilon) =
    \epsilon + 8\|u\|\log\left(\frac{8\|u\|^2(1+4C_T)^{4.5}}{\epsilon^2}+1\right)+\frac{4\|u\|}{\sqrt{\mu}}\sqrt{C_T\left(2+\log\left(\frac{5\|u\|^2(2+8C_T)^{9}}{\epsilon^2}+1\right)\right)}.
\end{align*}
\else
\begin{align*}
    f(\|&u\|, C_T, \epsilon) =
    \epsilon + 8\|u\|\log\left(\frac{8\|u\|^2(1+4C_T)^{4.5}}{\epsilon^2}+1\right)\\ 
&\ +\frac{4\|u\|}{\sqrt{\mu}}\sqrt{C_T\left(2+\log\left(\frac{5\|u\|^2(2+8C_T)^{9}}{\epsilon^2}+1\right)\right)}.
\end{align*}
\fi
Similar algorithms with differing constants and dependencies on the $c_t$ are described in~\citet{jun2019parameter, orabona2016coin, cutkosky2019matrix, mcmahan2014unconstrained, foster2018online, foster2017parameter, kempka2019adaptive, van2019user}.

Applying Lemma~\ref{lem:unconstrained-adding} with this algorithm $\cA$, we get
\begin{align}
&\regret_{\CB}(u, \vec{c}, T \mid \vec{h}) \nonumber \\
& \leq \inf_{0\leq y \leq \|u\|} \left\{ 2 f(\|u\|, C_T, \epsilon) - y \sum_{t=1}^T \inner{c_t}{h_t} \right\} \nonumber\\
&=  \inf_{\substack{y \leq \|u\|\\y \geq 0}} \left\{ 2 \epsilon + Q_1 + \|u\| Q_2 \sqrt{C_T} - y \sum_{t=1}^T \inner{c_t}{h_t} \right\}, \label{eq:regretlemma}
\end{align}
where we let $Q_1 = 16\|u\|\log\left(\frac{8\|u\|^2(1+4C_T)^{4.5}}{\epsilon^2}+1\right)$ and $Q_2 = \frac{8}{\sqrt{\mu}}\sqrt{\left(2+\log\left(\frac{5\|u\|^2(2+8C_T)^{9}}{\epsilon^2}+1\right)\right)}$ for brevity. 

However, by definition of $B^*_{T, \alpha}$, we have

\ifdefined\isarxiv
\begin{align}
    \sum_{t=1}^T  \inner{c_t}{h_t} &= \sum_{t \in [T] \setminus B^*_{T,\alpha}} \inner{c_t}{h_t} + \sum_{t \in B^*_{T,\alpha}}
    \inner{c_t}{h_t} \nonumber\\
    &\geq \alpha \sum_{t=1}^T \|c_t\|^2 + \sum_{t \in B^*_{T,\alpha}} \left(\inner{c_t}{h_t} - \alpha \|c_t\|^2\right)\nonumber\\
    &\geq \alpha \sum_{t=1}^T \|c_t\|^2 - 2|B^*_{T,\alpha}|. \nonumber
    \intertext{Substituting back into \eqref{eq:regretlemma} and using $y = \nicefrac{\|u\|}{\sqrt{|B^*_{T,\alpha}|}}$, }
    \regret_{\CB}(u, \vec{c}, T \mid \vec{h}) &\leq 2 \epsilon + Q_1 + \|u\| Q_2 \sqrt{C_T} -\dfrac{\alpha\|u\|}{\sqrt{|B^*_{T,\alpha}|}} C_T + 2\|u\|\sqrt{|B^*_{T,\alpha}|.} \label{eq:regret_expr}
\end{align}
\else
\begin{align}
    \sum_{t=1}^T  \inner{c_t}{h_t} &= \sum_{t \in [T] \setminus B^*_{T,\alpha}} \inner{c_t}{h_t} + \sum_{t \in B^*_{T,\alpha}}
    \inner{c_t}{h_t} \nonumber\\
    &\geq \alpha \sum_{t=1}^T \|c_t\|^2 + \sum_{t \in B^*_{T,\alpha}} \left(\inner{c_t}{h_t} - \alpha \|c_t\|^2\right)\nonumber\\
    &\geq \alpha \sum_{t=1}^T \|c_t\|^2 - 2|B^*_{T,\alpha}|. \nonumber
    \intertext{Substituting back into \eqref{eq:regretlemma} and using $y = \nicefrac{\|u\|}{\sqrt{|B^*_{T,\alpha}|}}$, }
    \regret_{\CB}(u, \vec{c}, T \mid \vec{h}) &\leq 2 \epsilon + Q_1 + \|u\| Q_2 \sqrt{C_T}\nonumber\\ 
    & \quad -\dfrac{\alpha\|u\|}{\sqrt{|B^*_{T,\alpha}|}} C_T + 2\|u\|\sqrt{|B^*_{T,\alpha}|.} \label{eq:regret_expr}
\end{align}
\fi
However for any $C_T$, we have

\mymath{
Q_2\sqrt{C_T} - \dfrac{\alpha}{\sqrt{|B^*_{T,\alpha}|}} C_T \leq \dfrac{Q_2^2\sqrt{|B^*_{T,\alpha}|}}{4\alpha};
}
indeed, this follows since $\left(Q_2 - 2\dfrac{\alpha}{\sqrt{|B^*_{T,\alpha}|}}\sqrt{C_T}\right)^2 \geq 0$.  And thus (\ref{eq:regret_expr}) yields that $\regret_{\CB}(u, \vec{c}, T \mid \vec{h})$ is at most:
\ifdefined\isarxiv
\begin{align*}
    & \leq 2 \epsilon + Q_1 + \|u\|\dfrac{Q_2^2\sqrt{|B^*_{T,\alpha}|}}{4\alpha} +  2\|u\|\sqrt{|B^*_{T,\alpha}|}\\
    &= 2 \epsilon + 16\|u\|\log\left(\frac{8\|u\|^2(1+4C_T)^{4.5}}{\epsilon^2}+1\right)\\
    &\qquad+\dfrac{64\|u\|\left(2+\log\left(\frac{5\|u\|^2(2+8C_T)^{9}}{\epsilon^2}+1\right)\right)\sqrt{|B^*_{T,\alpha}|}}{4\alpha\mu}+ 2\|u\|\sqrt{|B^*_{T,\alpha}|}\\
    &= \tilde{O}\left(\epsilon + \dfrac{\|u\|\log\left(1 + \frac{T \|u\|}{\epsilon^2}\right)\left(1 + \sqrt{|B^*_{T,\alpha}|}\right)}{\alpha\mu}\right).
\qedhere
\end{align*}
\else
\begin{align*}
    & \leq 2 \epsilon + Q_1 + \|u\|\dfrac{Q_2^2\sqrt{|B^*_{T,\alpha}|}}{4\alpha} +  2\|u\|\sqrt{|B^*_{T,\alpha}|}\\
    &= 2 \epsilon + 16\|u\|\log\left(\frac{8\|u\|^2(1+4C_T)^{4.5}}{\epsilon^2}+1\right)\\
    &\quad\quad +\dfrac{64\|u\|\left(2+\log\left(\frac{5\|u\|^2(2+8C_T)^{9}}{\epsilon^2}+1\right)\right)\sqrt{|B^*_{T,\alpha}|}}{4\alpha\mu}\\
    &\quad\quad+ 2\|u\|\sqrt{|B^*_{T,\alpha}|}\\
    &= \tilde{O}\left(\epsilon + \dfrac{\|u\|\log\left(1 + \frac{T \|u\|}{\epsilon^2}\right)\left(1 + \sqrt{|B^*_{T,\alpha}|}\right)}{\alpha\mu}\right).
\qedhere
\end{align*}
\fi
\end{proof}

The bound of Theorem \ref{thm:unconstrained} is similar to our results in the constrained setting, but now we have replaced $B_{T,\alpha}$ with the relaxed quantity $B^*_{T,\alpha}$. The unconstrained algorithms requires the good hints to be good only \emph{on average}, while the constrained algorithm required each individual good hint to be good. This is a significant relaxation: consider our lower bound argument of Theorem~\ref{thm:lb-uncorrelated}, in which $\langle c_t, h_t\rangle$ is $0$ for the first $\tfrac{T}{2}$ rounds and 1 afterwards. A constrained algorithm must suffer $O(\sqrt{T})$ regret in this setting, but in the unconstrained case the hints are $\tfrac{1}{2}$-correlated on average, and so the algorithm will suffer only $O(\log T)$ regret. It is \emph{strictly easier} to take advantage of hints in the unconstrained setting than in the constrained setting.


\section{Conclusions}

In this work we obtained an algorithm for online linear optimization in the presence of imperfect hints.  Our algorithm generalizes previous results that used hints in online optimization to get improved regret bounds, but were not robust against hints that were not guaranteed to be good.  By tolerating bad hints while getting optimal regret bounds, our work thus makes it possible for the hints to be derived from a learning oracle.  


\balance

\bibliographystyle{icml2020}
\bibliography{hints}

\clearpage
\appendix

\section*{Appendix Organization}\label{app:organization}
This appendix is organized as follows:
\begin{enumerate}
    \item In Section \ref{sec:constrained_q}, we provide our results generalizing the constrained hints algorithm of Section \ref{sec:constrained} to the general $q$-uniform convex case, where $q\ge 2$.
    \item In Section \ref{sec:ftrlbanach}, we provide some background on the FTRL framework and extend the literature on adaptive FTRL to Banach spaces.
    \item In Section \ref{app:old-lb}, we provide a lower bound for the $q$-uniformly convex case showing that even if a regret bound is allowed to be non-dimension free and \emph{all} of the hints are good, it is not possible to achieve logarithmic regret for general $q>2$.
\end{enumerate}
\section{Constrained Online Learning in $q$-Uniformly Convex Space}
\label{sec:constrained_q}

\subsection{Preliminaries and notation}
We first establish some notation about convex functions and spaces. We say that $f$ is $(q,\mu)$-strongly convex with respect to the norm $\|\cdot\|$ if for all $x,y$ and $g\in \partial f(x)$, we have $f(y)\ge f(x) + \langle g, y-x\rangle + \frac{\mu}{q}\|x-y\|^q$. We say that the Banach space $\bB$ is $q$-uniformly convex if the function $\frac{1}{q}\|x\|^q$ is $(q,\mu)$-strongly convex for some $\mu$. We note this notion is equivalent to the definition of $q$-uniform convexity of a space used in~\citet{DBLP:conf/nips/DekelFHJ17}  (e.g., see the discussion after Definition 4.16 in~\citet{pisier2011martingales}). Throughout this section, we assume that $\bB$ is reflexive and $q$-uniformly convex with $q\ge 2$. We define $p$ such that $\frac{1}{p}+\frac{1}{q}=1$.

We also slightly modify the definitions of $G_{T,\alpha}$ and $B_{T,\alpha}$:
\begin{align*}
    G_{T, \alpha} &= \{t \leq T : \inner{c_t}{h_t} \geq \alpha \cdot \|c_t\|_*^p\}, \quad \text{ and }\\
    B_{T, \alpha} &= \{t \leq T : \inner{c_t}{h_t} < \alpha \cdot \|c_t\|_*^p\}
\end{align*}

\subsection{General $q\ge 2$ algorithm and analysis}
Our approach for for this general $(q,\mu)$-strongly convex case is essentially the same as in the case when $q=2$: we use a base algorithm $\cA$ to produce points $\bar x_t$, and then we augment these points with the hint $h_t$ to play $x_t= \bar x_t -\delta_r(x_t) h_t$. However, we require a slightly different definition of $\delta_r$, that generalizes the previous analysis for $q=2$:
\begin{align*}
    \delta_r(x) &= \frac{1}{qr}(1- \|x\|^q).
\end{align*}
We show that $x-\delta_r(x)h_t\in \bK$ for all $x\in \bK$, just as we did for the $q=2$ case in the main text:
\begin{lemma}\label{thm:staysinside_q}
For any $r\ge 1$, $\|x\|\le 1$, and $\|h\|\le 1$ we have
\begin{align*}
    \|x-\delta_r(x) h_t\|\le 1.
\end{align*}
\end{lemma}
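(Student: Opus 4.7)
The plan is to reduce the claim to a one-variable scalar inequality by applying the triangle inequality, and then to verify that inequality with an elementary calculus argument.

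First, I would use $\|h\| \le 1$, $r \ge 1$, and the nonnegativity of $\delta_r(x) = \tfrac{1}{qr}(1 - \|x\|^q)$ for $\|x\| \le 1$ to write
\begin{equation*}
\|x - \delta_r(x) h\| \;\le\; \|x\| + \delta_r(x)\|h\| \;\le\; \|x\| + \tfrac{1}{q}\bigl(1 - \|x\|^q\bigr).
\end{equation*}
Setting $s = \|x\| \in [0,1]$, it thus suffices to establish the scalar inequality
\begin{equation*}
s + \tfrac{1}{q}\bigl(1 - s^q\bigr) \;\le\; 1 \qquad \text{for all } s \in [0,1],\ q \ge 2,
\end{equation*}
which is equivalent to $1 - s^q \le q(1 - s)$.

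The main (and only non-routine) step is verifying this last inequality, and I would do it by monotonicity: define $f(s) = q(1 - s) - (1 - s^q)$. Then $f(1) = 0$ and $f'(s) = -q + q s^{q-1} = q(s^{q-1} - 1) \le 0$ for $s \in [0,1]$ and $q \ge 1$, so $f$ is nonincreasing on $[0,1]$ and therefore $f(s) \ge f(1) = 0$. Substituting back yields $\|x - \delta_r(x)h\| \le 1$, completing the proof.

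I do not anticipate any real obstacle here: the geometry is entirely absorbed into the triangle inequality (so the argument is norm-agnostic and does not use uniform convexity), and the residual scalar inequality is a standard consequence of the convexity of $s \mapsto s^q$ for $q \ge 1$. The one subtlety worth flagging in the write-up is that $q$ need not be an integer, which is why I prefer the derivative-based argument over the factorization $1 - s^q = (1-s)(1 + s + \cdots + s^{q-1})$.
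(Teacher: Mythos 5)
Your proposal is correct and follows essentially the same route as the paper: triangle inequality to reduce to a scalar bound, then an elementary derivative check on $[0,1]$ (the paper differentiates $z + \tfrac{1-z^q}{q}$ directly and shows it is nondecreasing so the supremum is at $z=1$, which is the same calculation as your monotonicity argument for $f$). No gaps.
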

\begin{proof}
We proceed by triangle inequality:
\begin{align*}
    \|x-\delta_r(x) h_t\|&\le \|x\| + |\delta_r(x)|\|h\|\\
    &\le \|x\| + \frac{1-\|x\|^q}{qr}\\
    &\le \|x\| + \frac{1-\|x\|^q}{q}\\
    &\le \sup_{z\in [0,1]} z + \frac{1-z^q}{q}.
\end{align*}
Now observe that the derivative of $z + \frac{1-z^q}{q}$ is $1 -z^{q-1}$, which is positive for all $z\in[0,1]$ and $q\ge 1$. Therefore, the supremum occurs at $z=1$, for which the value is 1.
\end{proof}

Next, we introduce our expression for the surrogate loss $\ell$, which is identical to its previous form:
\begin{align*}
    \ell_{h,r,c}(x)=\langle c, x\rangle - |\langle c, h\rangle|\delta_r(x).
\end{align*}
We can verify the following properties of the surrogate loss, again using essentially the same arguments as for the $q=2$ case:
\begin{lemma}\label{thm:surrogate_q}
Suppose $\bB$ is $q$-uniformly convex for some $q\ge 1$. Let $\|h\|\le 1$, $\|c\|_{*}\le 1$, and $r\ge 1$. If $\langle c,h\rangle \ge 0$, then for all $x$ and $u$ in $\bK$, we have
    \begin{align*}
        \langle c, x-\delta_r(x)h-u\rangle \le \ell_{h,r,c}(x) - \ell_{h,r,c}(u).
    \end{align*}
Next, even if $\langle c,h\rangle < 0$, then for all $x$ and $u$ in $\bK$, we still have
    \begin{align*}
        \langle c, x-\delta_r(x)h-u\rangle \le \ell_{h,r,c}(x) - \ell_{h,r,c}(u)+\frac{2|\langle c, h\rangle|}{qr}.
    \end{align*}
\end{lemma}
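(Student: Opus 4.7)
The plan is to prove both statements by direct algebraic manipulation of the definition of $\ell_{h,r,c}$. The $q$-uniform convexity hypothesis is not actually needed in the argument itself (it is only present because it governs the whole section); the proof generalizes the $q=2$ analogue (Lemma \ref{lem:surrogate-properties}) essentially verbatim. Expanding definitions gives
\[
\ell_{h,r,c}(x) - \ell_{h,r,c}(u) = \langle c, x-u\rangle - |\langle c,h\rangle|\bigl(\delta_r(x) - \delta_r(u)\bigr),
\]
while the left-hand side of the claimed inequality is $\langle c, x-u\rangle - \delta_r(x)\langle c,h\rangle$. Subtracting, the task reduces to bounding from above the single scalar
\[
D \;:=\; -\delta_r(x)\langle c,h\rangle \;+\; |\langle c,h\rangle|\bigl(\delta_r(x) - \delta_r(u)\bigr).
\]
Before splitting into cases I would note the two elementary facts that drive everything: $\|u\|\le 1$ forces $\delta_r(u) = \tfrac{1}{qr}(1-\|u\|^q)\ge 0$, and likewise $\delta_r(x)\le \tfrac{1}{qr}$.

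For the first bullet, when $\langle c,h\rangle \ge 0$ we have $|\langle c,h\rangle|=\langle c,h\rangle$, so the two $\delta_r(x)\langle c,h\rangle$ terms in $D$ cancel exactly, leaving $D = -\langle c,h\rangle\,\delta_r(u)$. Since both factors are nonnegative, $D\le 0$, which is precisely the claim.

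For the second bullet, when $\langle c,h\rangle<0$ we have $|\langle c,h\rangle| = -\langle c,h\rangle$, so the two $\delta_r(x)\langle c,h\rangle$ terms reinforce rather than cancel, giving
\[
D \;=\; 2|\langle c,h\rangle|\,\delta_r(x) \;-\; |\langle c,h\rangle|\,\delta_r(u).
\]
Dropping the nonpositive second term and applying $\delta_r(x)\le \tfrac{1}{qr}$ yields $D\le \tfrac{2|\langle c,h\rangle|}{qr}$, exactly the slack appearing in the lemma. There is no real obstacle: every step is arithmetic bookkeeping, and the only nontrivial inputs are the one-sided bounds on $\delta_r$ that come straight from $x,u\in\bK$ and Lemma \ref{thm:staysinside_q}-style observations about $\delta_r$.
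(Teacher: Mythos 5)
Your proof is correct and follows essentially the same route as the paper's: the paper also reduces to the observations that $\delta_r(u)\ge 0$ (so $\ell_{h,r,c}(u)\le\langle c,u\rangle$) and $\delta_r(x)\le \tfrac{1}{qr}$, then splits on the sign of $\langle c,h\rangle$ exactly as you do (with the negative case contributing the factor $2\delta_r(x)|\langle c,h\rangle|$). Your remark that $q$-uniform convexity is not needed for these two inequalities is accurate; the paper invokes it only for the strong-convexity part of the lemma, not the parts you address.
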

Finally, $\ell_{h,r,c}(x)$ is $\left(q, \tfrac{|\langle c, h\rangle|\mu}{r}\right)$-strongly convex and $2\|c\|_{*}$-Lipschitz on $\bK$, regardless of the value of $\langle c,h\rangle$. 
\begin{proof}
First, we notice that since $\|x\|\le 1$ and $\delta\ge 0$, we must have $\ell_{h,r,c}(u)\le \langle c, u\rangle$ regardless of the value of $\langle c,h\rangle$. Next, we consider two cases, either $\langle c,h\rangle \ge 0$ or not. 

In the former case, $\langle c,h\rangle=|\langle c,h\rangle |$ so that by definition $\ell_{h,r,c}(x) = \langle c, x-\delta_r(x)h\rangle$. Combined with $\ell_{h,r,c}(u)\le \langle c, u\rangle$, this implies the desired inequality.

In the latter case, $\ell_{h,r,c}(x) = \langle c, x+\delta_r(x)h\rangle = \langle c, x-\delta_r(x)h\rangle + 2\langle c,h\rangle \delta_r(x)$. To conclude, notice that $\delta_r(x)\le \frac{1}{qr}$ because $\|x\|\le 1$, so that $- 2\langle c,h\rangle \delta_r(x)\le\tfrac{2|\langle c, h\rangle|}{qr}$.

Next, we address strong convexity. By definition of $\ell_{h,r,c}$ and $\delta_r$, we have
\begin{align*}
    \ell_{h,r,c}(x)=\langle c, x\rangle + \frac{|\langle c, h\rangle|}{qr}(\|x\|^q-1).
\end{align*}
Then since $\bB$ is $q$-uniformly convex, $\dfrac{1}{q}\|x\|^q$ is $(q,\mu)$-strongly convex with respect to $\|\cdot\|$. Since adding a convex function to a strongly convex function maintains the strong convexity, the strong convexity of $\ell_{h,r,c}$ follows.

Finally, for Lipschitzness, notice that the the function $z\mapsto \frac{z^q}{q}$ is 1-Lipschitz on $[-1,1]$ for all $q\ge 1$. Therefore $\ell_{h,r,c}$ is $\|c\|_{*} + \frac{|\langle c, h\rangle|}{r}$-Lipschitz. Then since $\|h\|\le 1$ and $r\ge 1$, $\frac{|\langle c, h\rangle|}{r}\le \|c\|_{*}$ and so we are done.
\end{proof}

\begin{algorithm}
   \caption{Constrained Imperfect Hints in $(q,\mu)$-Uniformly Convex Space}
   \label{alg:constrained_q}
\begin{algorithmic}
   \STATE{\bfseries Input:} Strong convexity parameters $q,\mu$, norm $\|\cdot\|$, scalar $\eta$
   \STATE Define $\lambda_0 = \frac{2}{\mu^{1/p} p^{1/p}}$
   \STATE Define $\bar x_1 =0$
   \STATE Define $r_1 = 1$
   \FOR{$t=1\dots T$}
   \STATE Get hint $h_t$
   \STATE Set $x_t = \bar x_t-\delta_{r_t}(\bar x_t)$
   \STATE Play $x_t$, receive cost $c_t$
   \IF{$\langle c_t, h_t\rangle <0$}
   \STATE Set $r_{t+1} = \left(r^p_t + |\langle c_t, h_t\rangle|\frac{1}{\eta^p}\right)^{1/p}$
   \ELSE
   \STATE Set $r_{t+1} = r_t$
   \ENDIF
   \STATE Define $\ell_t(x) = \ell_{h_t, r_t, c_t}(x)$
   \STATE Define $\sigma_t=\frac{|\langle c_t, h_t\rangle|\mu}{r_t}$
   \STATE Define $\lambda_t$ as the solution to:
   \begin{align*}
        \lambda_t = \frac{2^p}{p}\frac{\|c_t\|_{*}^p}{(\sigma_{1:t} + \mu\lambda_{1:t})^{p/q}}
   \end{align*}
   \STATE Set $\bar x_{t+1} = \argmin_{\|x\|\le 1} \ell_{1:t}(x) + \frac{\lambda_{0:t}}{q} \|x\|^q$
   \ENDFOR
\end{algorithmic}
\end{algorithm}

\begin{theorem}\label{thm:constrained_q}
Suppose $\eta \ge 1$. Recall that $B_{T}$ is set of indices of the ``bad hints'' such that $\langle c_t, h_t\rangle <0$. Define
\begin{align*}
S = \int_1^{1+\sum_{t\in G_{T,\alpha}}\|c_t\|_{*}^p} z^{-p/q}\ dz.
\end{align*}
Then Algorithm \ref{alg:constrained_q} guarantees:
\ifdefined\isarxiv
\begin{align*}
    \regret_\CA(u, \vec{c}, T)&\le  2+\frac{2}{(\mu p)^{1/p}}+\frac{2^{p+1}}{p (\alpha \mu)^{p/q}}S+ \tfrac{8}{p^{1/p}}\left(\sum_{t\in B_{T,\alpha}}\|c_t\|_{*}^p\right)^{1/p}+2\left(\eta +\frac{2^{p}S}{p (\eta \alpha \mu)^{p/q}}\right)\left(\sum_{t\in B_T}|\langle c_t, h_t\rangle|\right)^{1/q}.
\end{align*}
\else
\begin{align*}
    \regret_\CA(u, \vec{c}, T)&\le  \frac{2}{(\mu p)^{1/p}}+\frac{2^{p+1}}{p (\alpha \mu)^{p/q}}S\\
    &\quad +2+ \tfrac{8}{p^{1/p}}\left(\sum_{t\in B_{T,\alpha}}\|c_t\|_{*}^p\right)^{1/p}\\
    &\quad+2\left(\eta +\frac{2^{p}S}{p (\eta \alpha \mu)^{p/q}}\right)\left(\sum_{t\in B_T}|\langle c_t, h_t\rangle|\right)^{1/q}.
\end{align*}
\fi
\end{theorem}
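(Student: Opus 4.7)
\textbf{Proof proposal for Theorem \ref{thm:constrained_q}.}
The plan is to mirror the three-step structure of the $q=2$ proof (Lemmas \ref{lemma:alg-regret}, \ref{lem:alg-regret-second}, \ref{lemma:alg-regret-first}), replacing each step with its $(q,\mu)$-uniformly convex analogue. First, I would apply Lemma \ref{thm:surrogate_q} termwise: on good-hint rounds we have $\langle c_t, x_t-u\rangle \le \ell_t(\bar x_t)-\ell_t(u)$, and on $t\in B_T$ we pick up an extra $\tfrac{2|\langle c_t,h_t\rangle|}{qr_t}$. Summing yields
\[
\regret_\CA(u,\vec c,T)\;\le\;\regret_\cA(u,\vec\ell,T)\;+\;\sum_{t\in B_T}\frac{2|\langle c_t,h_t\rangle|}{q\,r_t}.
\]

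Second, I would bound the bad-hint error term. Setting $R_t=r_t^p=1+\tfrac{1}{\eta^p}\sum_{\tau<t,\tau\in B_T}|\langle c_\tau,h_\tau\rangle|$, so that $R_{t+1}-R_t=|\langle c_t,h_t\rangle|/\eta^p\le 1/\eta^p\le 1\le R_t$, one gets $R_{t+1}\le 2R_t$, hence $R_t^{-1/p}\le 2^{1/p}R_{t+1}^{-1/p}$. Then, using that $z^{-1/p}$ is decreasing,
\[
\sum_{t\in B_T}\frac{|\langle c_t,h_t\rangle|/\eta^p}{R_t^{1/p}} \;\le\; 2^{1/p}\int_1^{R_{T+1}} z^{-1/p}\,dz \;\le\; 2^{1/p}q\bigl(R_{T+1}^{1/q}-1\bigr),
\]
and the subadditivity of $x\mapsto x^{1/q}$ together with $p-p/q=1$ yields the clean bound $\sum_{t\in B_T}\tfrac{|\langle c_t,h_t\rangle|}{r_t}\le 2^{1/p}q\,\eta\bigl(\sum_{t\in B_T}|\langle c_t,h_t\rangle|\bigr)^{1/q}$. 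Multiplying by $2/q$ produces the isolated $2\eta\bigl(\sum_{B_T}|\langle c_t,h_t\rangle|\bigr)^{1/q}$ term that appears in the theorem.

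Third, I would bound $\regret_\cA(u,\vec\ell,T)$ using the adaptive FTRL analysis for $q$-uniformly convex Banach spaces developed in Section~\ref{sec:ftrlbanach}. Since $\ell_t$ is $(q,\sigma_t)$-strongly convex (Lemma \ref{thm:surrogate_q}) and $2\|c_t\|_*$-Lipschitz, and $\tfrac{1}{q}\|x\|^q$ contributes $\mu\lambda_{0:t-1}$ to the strong convexity of the regularized problem, the standard bound (with subgradient norms $\|g_t\|_*\le 2\|c_t\|_*$) takes the form
\[
\regret_\cA(u,\vec\ell,T)\;\le\;\frac{\lambda_{0:T}}{q}\|u\|^q+\frac{1}{p}\sum_{t=1}^T\frac{(2\|c_t\|_*)^p}{(\sigma_{1:t}+\mu\lambda_{0:t-1})^{p/q}}.
\]
The algorithm's choice $\lambda_t=\tfrac{2^p}{p}\|c_t\|_*^p/(\sigma_{1:t}+\mu\lambda_{1:t})^{p/q}$ is exactly what the generalized Lemma 3.1 of \cite{hazan2008adaptive} (reproduced as Lemma~\ref{thm:hazan31}) is tailored to, so we may replace the $\lambda_t$'s by \emph{any} comparator $\lambda^*_t$ at the cost of a constant factor.

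Finally, I would take $\lambda^*_1=(Q_\alpha/\mu)^{1/p}/p^{1/p}$ with $Q_\alpha=\sum_{t\in B_{T,\alpha}}\|c_t\|_*^p$ and $\lambda^*_t=0$ for $t>1$, and split the inner sum over $B_{T,\alpha}$ and $G_{T,\alpha}$. The $B_{T,\alpha}$ piece is controlled by $\mu\lambda^*_1$ in the denominator and combines with $\lambda^*_1$ to give $O(Q_\alpha^{1/p}/p^{1/p})$. For the $G_{T,\alpha}$ piece, the key observation is $\sigma_t\ge\alpha\mu\|c_t\|_*^p/r_t\ge\alpha\mu\|c_t\|_*^p/r_T$; enumerating $G_{T,\alpha}$ as $w_1,w_2,\dots$, the sum $\sum_i w_i/(1+\tfrac{\alpha\mu}{r_T}w_{1:i})^{p/q}$ is upper bounded by the Riemann-sum-to-integral estimate $(r_T/(\alpha\mu))^{p/q}\!\int_1^{1+w_{1:m}}\!z^{-p/q}\,dz=(r_T/(\alpha\mu))^{p/q}S$. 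Substituting $r_T\le(1+\sum_{B_T}|\langle c_t,h_t\rangle|/\eta^p)^{1/p}$ and again using subadditivity of $x\mapsto x^{1/q}$ splits $r_T^{p/q}$ into a constant piece plus $\eta^{-p/q}(\sum_{B_T}|\langle c_t,h_t\rangle|)^{1/q}$, producing exactly the two good-hint terms in the claim. Adding the initial-regularizer contribution $\lambda_0\|u\|^q/q\le 2/((\mu p)^{1/p}q)$ and the bad-hint error term finishes the proof.

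The main obstacle I anticipate is the bookkeeping in the last paragraph: balancing $\lambda^*_1$ against $Q_\alpha$, and in particular tracking how $r_T^{p/q}$ couples the good-hint logarithm-like integral $S$ with the bad-hint sum $\sum_{B_T}|\langle c_t,h_t\rangle|$ so as to reproduce the cross term $\frac{2^p S}{p(\eta\alpha\mu)^{p/q}}(\sum_{B_T}|\langle c_t,h_t\rangle|)^{1/q}$. The Banach-space FTRL bound used in Step~3 must be verified in Section~\ref{sec:ftrlbanach} in a form that admits the exact adaptive comparator inequality above; once that is in place, the rest is careful but routine generalization of the $q=2$ argument.
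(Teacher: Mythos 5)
Your proposal follows essentially the same route as the paper: Lemma \ref{thm:surrogate_q} to pass to surrogate losses plus a bad-hint error term, Theorem \ref{thm:ftrl} for the FTRL regret in $q$-uniformly convex Banach spaces, Lemma \ref{thm:hazan31} (the generalized Hazan--Rakhlin comparator trick) to replace $\lambda_t$ by a fixed comparator, and a split into $B_{T,\alpha}$ and $G_{T,\alpha}$ with $r_T^{p/q}$ subadditivity to separate the cross term. Two small bookkeeping points worth flagging: in Step 2 the paper avoids your $R_{t+1}\le 2R_t$ doubling (and the resulting extra $2^{1/p}$) by simply noting $r_t^p \ge \tfrac{1}{\eta^p}\sum_{\tau\in B_t}|\langle c_\tau,h_\tau\rangle|$ directly and applying Corollary \ref{thm:polysum}; and in Step 4 your explicit choice $\lambda^*_1=(Q_\alpha/\mu)^{1/p}/p^{1/p}$ must additionally be forced to be at least $\alpha$ (or $1$), since the $\mu\alpha$ appearing in the $G_{T,\alpha}$ denominator comes from $\mu\lambda^*_1\ge\mu\alpha$ and the $+1$ in the integral comes from $r_T\ge 1$ --- the paper packages this via the constraint $\lambda\ge 1$ in Lemma \ref{thm:optlambda} rather than an explicit formula for $\lambda^*_1$.
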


Before providing the proof of this Theorem, we take a moment to discuss settings for $\eta$ and more concrete instantiations of the bound. To gain intuition, we will ignore constants and factors of $p$ or $q$. Thus, the Theorem says:
\begin{align*}
    \regret_{\CA}(u,\vec{c}, T)&=O\left(\frac{S}{(\alpha \mu)^{p/q}} + \left(\sum_{t\in B_{T,\alpha}} \|c_t\|_*^p\right)^{1/p} \right.\\
    &\quad\quad\left.+ \left(\eta+\frac{S}{(\eta \alpha \mu)^{p/q}}\right)\left(\sum_{t\in B_T} |\langle c_t, h_t\rangle|\right)^{1/q}\right)\\
    &\leq O\left(\frac{S}{(\alpha \mu)^{p/q}} + |B_{T,\alpha}|^{1/p} \right.\\
    &\quad\quad\left.+ \left(\eta+\frac{S}{(\eta\alpha\mu)^{p/q}}\right)|B_T|^{1/q}\right).
\end{align*}

Next, let us bound $S$.  Notice that since $\|c_t\|_*\le 1$, we have 
\begin{align*}
    S&= \int_1^{1+\sum_{t\in G_{T,\alpha}}\|c_t\|_{*}^p}z^{-p/q}\ dz\\
    &\le \left\{\begin{array}{lr} \log(1+\sum_{t\in G_{T,\alpha}}\|c_t\|_{*}^p)&\text{ if }q=2\\
    \frac{q-1}{q-2}(1+\sum_{t\in G_{T,\alpha}}\|c_t\|_{*}^p)^{\frac{q-2}{q-1}}&\text{ if }q>2\end{array}\right.\\
    &\le \left\{\begin{array}{lr} \log(1+T)&\text{ if }q=2\\
    \frac{q-1}{q-2}(1+T)^{\frac{q-2}{q-1}}&\text{ if }q>2.\end{array}\right.
\end{align*}

In the special case that $|B_T|=0$, this recovers the results of \cite{DBLP:conf/nips/DekelFHJ17} in the $q\ge 2$ setting, but allowing for varying hints. In general when $|B_T|\ne 0$, one would like to set $\eta=O(S^{1/p}/(\mu\alpha)^{1/q})$ to obtain:
\ifdefined\isarxiv
\begin{align*}
    \regret_{\CA}(u,\vec{c}, T)&=O\left(\frac{S}{(\alpha \mu)^{p/q}} + |B_{T,\alpha}|^{1/p} + \frac{S^{1/p}}{(\mu\alpha)^{1/q}}|B_T|^{1/q}\right).
\end{align*}
\else
\begin{align*}
    \regret_{\CA}(u,\vec{c}, T)&=O\left(\frac{S}{(\alpha \mu)^{p/q}} + |B_{T,\alpha}|^{1/p} \right.\\
    &\quad\quad\left.+ \frac{S^{1/p}}{(\mu\alpha)^{1/q}}|B_T|^{1/q}\right).
\end{align*}
\fi
Although the final value of $S$ is unknown at the beginning of the game, we can use a doubling-trick based approach to estimate it on-the-fly. Note that this approach however does require fixing a value of $\alpha$, which is not required by our previous algorithms.

\begin{proof}[Proof of Theorem \ref{thm:constrained_q}]
Notice that $r_t\ge 1$ for all $t$. Thus by Lemma \ref{thm:surrogate_q}, we have
\ifdefined\isarxiv
\begin{align*}
    \sum_{t=1}^T \langle c_t, \bar x_t - \delta_{r_t}(\bar x_t)h_t - u\rangle &\le \sum_{t=1}^T \ell_t(\bar x_t) - \ell_t(u)+\sum_{t\in B_{T}}\frac{2|\langle c_t, h_t\rangle|}{qr_t}.
\end{align*}
\else
\begin{align*}
    \sum_{t=1}^T \langle c_t, \bar x_t - \delta_{r_t}(\bar x_t)h_t - u\rangle &\le \sum_{t=1}^T \ell_t(\bar x_t) - \ell_t(u)\\
    &\quad+\sum_{t\in B_{T}}\frac{2|\langle c_t, h_t\rangle|}{qr_t}.
\end{align*}
\fi
First, we will control the last sum in this expression. Observe that by definition, and since $\eta \ge 1$ and $|\langle c_t, h_t\rangle|\le 1$ for all $t$, we have
\begin{align*}
    r_t &= \left(1+ \frac{1}{\eta^p}\sum_{\tau\in B_{t-1}}|\langle c_\tau, h_\tau\rangle|\right)^{1/p}\\
    &\ge \frac{1}{\eta}\left(\sum_{\tau\in B_{t}}|\langle c_\tau, h_\tau\rangle|\right)^{1/p}.
\end{align*}
Let $B_T = \{t_1,\dots,t_N\}$. Then using Corollary \ref{thm:polysum} we have
\begin{align*}
    \sum_{t\in B_{T}}\frac{|\langle c_t, h_t\rangle|}{r_t}&\le \eta\sum_{i=1}^N\frac{|\langle c_{t_i}, h_{t_i}\rangle|}{\left(\sum_{j=i}^t |\langle c_{t_j}, h_{t_j}\rangle|\right)^{1/p}}\\
    &\le \eta q\left(\sum_{t\in B_T}|\langle c_{t}, h_{t}\rangle|\right)^{1/q}.
\end{align*}
So putting this together we have
\begin{align*}
    \sum_{t\in B_{T}}\frac{2|\langle c_t, h_t\rangle|}{qr_t}&\le 2\eta \left(\sum_{t\in B_T}|\langle c_{t}, h_{t}\rangle|\right)^{1/q}.
\end{align*}

Now we turn to bounding $\sum_{t=1}^T \ell_t(\bar x_t) - \ell_t(u)$. Observe that by Lemma \ref{thm:surrogate_q}, we have $\ell_t$ is $(q,\sigma_t)$-strongly convex. Therefore, by Theorem \ref{thm:ftrl}, we have
\begin{align*}
    \sum_{t=1}^T \ell_t(\bar x_t) - \ell_t(u)&\le \lambda_{0:T}\|u\|^q + \frac{1}{p}\sum_{t=1}^T \frac{\|g_t\|_{*}^p}{(\sigma_{1:t} + \mu \lambda_{0:t-1})^{p/q}},
\end{align*}
where $g_t\in\partial \ell_t(\bar x_t)$.
Then, again by Lemma \ref{thm:surrogate_q}, $\ell_t$ is $2\|c_t\|_{*}$-Lipschitz, so that $\|g_t\|_{*}\le 2\|c_t\|_{*}\le 2$. Using this fact and $\|u\|\le 1$, we can write
\begin{align*}
    \sum_{t=1}^T \ell_t(\bar x_t)-\ell_t(u)\le \lambda_{0:T} + \frac{2^p}{p}\sum_{t=1}^T \frac{\|c_t\|_{*}^p}{(\sigma_{1:t} + \mu \lambda_{0:t-1})^{p/q}}.
\end{align*}

Next, by Corollary \ref{thm:lambdas}, we have
\ifdefined\isarxiv
\begin{align*}
    \sum_{t=1}^T \ell_t(\bar x_t) - \ell_t(u)&\le2\inf_{\{\lambda_t^\star\}}\lambda^\star_{1:T} + \frac{2^p}{p}\sum_{t=1}^T \frac{\|c_t\|_{*}^p}{(\sigma_{1:t} + \mu \lambda^\star_{1:t})^{p/q}}+\frac{2}{(\mu p)^{1/p}}.
\intertext{We upper bound the infimum of $\lambda^*_t$ by considering only settings where $\lambda^*_t=0$ for $t>1$ and $\lambda^*_1\ge \alpha$. Further, we split the sum in the second term into two parts: the indices in $B_{T,\alpha}$ those in $G_{T,\alpha}$. For the indices in $B_{T,\alpha}$, we ignore the influence of the $\sigma_t$. For those in $G_{T,\alpha}$, we use the bound $\lambda^*_1\ge \alpha$. This yields:}
    &\le 2\inf_{\lambda\ge \alpha}\lambda +\frac{2^p}{p\lambda^{p/q}}\sum_{t\in B_{T,\alpha}}\|c_t\|_{*}^p +\frac{2}{p^{1/p}}+\frac{2^{p+1}}{p}\sum_{t\in G_{T,\alpha}}\frac{\|c_t\|_{*}^p}{(\mu\alpha+\sigma_{1:t})^{p/q}}.
\end{align*}
\else
\begin{align*}
    \sum_{t=1}^T \ell_t(\bar x_t) - \ell_t(u)&\le2\inf_{\{\lambda_t^\star\}}\lambda^\star_{1:T} + \frac{2^p}{p}\sum_{t=1}^T \frac{\|c_t\|_{*}^p}{(\sigma_{1:t} + \mu \lambda^\star_{1:t})^{p/q}}\\
    &\quad+\frac{2}{(\mu p)^{1/p}}.
\intertext{We upper bound the infimum of $\lambda^*_t$ by considering only settings where $\lambda^*_t=0$ for $t>1$ and $\lambda^*_1\ge \alpha$. Further, we split the sum in the second term into two parts: the indices in $B_{T,\alpha}$ those in $G_{T,\alpha}$. For the indices in $B_{T,\alpha}$, we ignore the influence of the $\sigma_t$. For those in $G_{T,\alpha}$, we use the bound $\lambda^*_1\ge \alpha$. This yields:}
    &\le 2\inf_{\lambda\ge \alpha}\lambda +\frac{2^p}{p\lambda^{p/q}}\sum_{t\in B_{T,\alpha}}\|c_t\|_{*}^p \\
    &\quad+\frac{2}{p^{1/p}}+\frac{2^{p+1}}{p}\sum_{t\in G_{T,\alpha}}\frac{\|c_t\|_{*}^p}{(\mu\alpha+\sigma_{1:t})^{p/q}}.
\end{align*}
\fi
Now by Lemma \ref{thm:optlambda}, we obtain:
\begin{align*}
    \inf_{\lambda\ge 1}\lambda +\frac{2^p}{p\lambda^{p/q}}\sum_{t\in B_{T,\alpha}}\|c_t\|_{*}^p&\le 1+ \tfrac{4}{p^{1/p}}\left(\sum_{t\in B_{T,\alpha}}\|c_t\|_{*}^p\right)^{1/p}.
\end{align*}

Next, we observe that since $r_t$ is non-decreasing, we have $\sigma_t \ge \frac{|\langle c_t, h_t\rangle|\mu }{r_T}$. Further, for any $t\in G_{T,\alpha}$, we have by definition $\langle c_t, h_t\rangle \ge \alpha \|c_t\|_{*}^p$ so that $\sigma_t \ge \frac{\alpha \mu \|c_t\|_{*}^p}{r_T}$, all of which implies:
\ifdefined\isarxiv
\begin{align*}
    &\sum_{t\in G_{T,\alpha}}\frac{\|c_t\|_{*}^p}{(\mu\alpha+\sigma_{1:t})^{p/q}}\le \frac{1}{(\alpha \mu)^{p/q}}\sum_{t\in G_{T,\alpha}}\frac{\|c_t\|_{*}^pr_T^{p/q}}{(1+\sum_{\tau\in G_{t,\alpha}}\|c_\tau\|^p_{*})^{p/q}}
\end{align*}
\else
\begin{align*}
    &\sum_{t\in G_{T,\alpha}}\frac{\|c_t\|_{*}^p}{(\mu\alpha+\sigma_{1:t})^{p/q}}\\
    &\le \frac{1}{(\alpha \mu)^{p/q}}\sum_{t\in G_{T,\alpha}}\frac{\|c_t\|_{*}^pr_T^{p/q}}{(1+\sum_{\tau\in G_{t,\alpha}}\|c_\tau\|^p_{*})^{p/q}}.
\end{align*}
\fi
Now invoke Lemma \ref{thm:sumformula} with $h(z) = z^{-p/q}$ to bound:
\begin{align*}
    \sum_{t\in G_{T,\alpha}}\frac{\|c_t\|_{*}^p}{(1+\sum_{\tau\notin B_t}\|c_\tau\|^p_{*})^{p/q}}&\le \int_1^{1+\sum_{t\in G_{T,\alpha}}\|c_t\|_{*}^p}z^{-p/q}\ dz\\
    &=S.
\end{align*}

Next, recall that we have
\begin{align*}
    r_T^{p/q} &= \left(1+ \frac{ 1}{\eta^p}\sum_{\tau\in B_{T-1}}|\langle c_\tau, h_\tau\rangle|\right)^{1/q}\\
    &\le 1+\frac{ 1}{\eta^{p/q}}\left(\sum_{\tau\in B_{T}}|\langle c_\tau, h_\tau\rangle|\right)^{1/q},
\end{align*}
so that we have
\ifdefined\isarxiv
\begin{align*}
    \frac{2^{p+1}}{p}\sum_{t\in G_{T,\alpha}}\frac{\|c_t\|_{*}^p}{(\mu\alpha+\sigma_{1:t})^{p/q}}&\le \frac{2^{p+1}}{p(\alpha \mu)^{p/q}}Sr_T^{p/q}\\
    &\le \frac{2^{p+1}}{p(\alpha \mu)^{p/q}}S+\frac{2^{p+1}S}{\eta^{p/q} p (\alpha \mu)^{p/q}}\left(\sum_{t\in B_T}|\langle c_t, h_t\rangle|\right)^{1/q}.
\end{align*}
\else
\begin{align*}
    \frac{2^{p+1}}{p}\sum_{t\in G_{T,\alpha}}\frac{\|c_t\|_{*}^p}{(\mu\alpha+\sigma_{1:t})^{p/q}}\le \frac{2^{p+1}}{p(\alpha \mu)^{p/q}}Sr_T^{p/q}\\
    \le \frac{2^{p+1}}{p(\alpha \mu)^{p/q}}S+\frac{2^{p+1}S}{\eta^{p/q} p (\alpha \mu)^{p/q}}\left(\sum_{t\in B_T}|\langle c_t, h_t\rangle|\right)^{1/q}.
\end{align*}
\fi
Putting everything we have together so far, we obtain the proof.
\end{proof}

\section{Follow-the-Regularized-Leader in Banach Spaces}\label{sec:ftrlbanach}
In this section, we provide some formal definitions and facts in Banach spaces, and generalize prior work on adaptive FTRL algorithms \cite{mcmahan2017survey} to the more general $q$-strongly convex spaces.

\begin{defn}\label{dfn:conjugate}
Given a convex function $f:\bB\to \R$, the \emph{Fenchel conjugate} $f^\star:\bB^*\to \R$ is defined by $f^*(\theta) = \sup_{x} \langle \theta, x\rangle - f(x)$.
\end{defn}

\begin{defn}\label{dfn:reflexive}
A Banach space $\bB$ is \emph{reflexive} if the map $i:\bB \mapsto \bB^{**}$ defined by $\langle i(x), \alpha\rangle = \langle \alpha, x\rangle$ is an isomorphism of Banach spaces. When $\bB$ is reflexive, we will identify $\bB^{**}$ with $\bB$ using this isomorphism.
\end{defn}

By the Fenchel--Moreau theorem, $f^**=f$ whenever $f:\bB\to \R$ is convex and lower-semicontinuous and $\bB$ is a reflexive Banach space.

\begin{proposition}\label{thm:fenchelfacts}
Let $\bB$ be a reflexive Banach space. Suppose $f:\bB\to \R$ is a lower-semicontinuous convex function.
\begin{enumerate}
    \item $f^*(\alpha) = \langle \alpha ,a\rangle - f(a)$ if and only if $\alpha \in \partial f(a)$.
    \item $\alpha \in \partial f(a)$ if and only if $a \in \partial f^*(\alpha)$.
\end{enumerate}
\end{proposition}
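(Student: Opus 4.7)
The plan is to derive both statements directly from the definition of the Fenchel conjugate together with the Fenchel--Moreau theorem, which in a reflexive Banach space guarantees $f^{**}=f$ for any convex lower-semicontinuous $f$.

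For part (1), I would argue via the Fenchel--Young inequality and its equality case. For the ($\Leftarrow$) direction, assume $\alpha\in\partial f(a)$; by definition of subgradient, $f(y)\ge f(a)+\langle \alpha, y-a\rangle$ for every $y\in\bB$, which rearranges to $\langle \alpha,y\rangle-f(y)\le \langle \alpha,a\rangle-f(a)$. Taking the supremum over $y$ on the left gives $f^*(\alpha)\le \langle \alpha,a\rangle - f(a)$, while the reverse inequality is immediate from the definition of $f^*$ as a supremum. For the ($\Rightarrow$) direction, if $f^*(\alpha)=\langle \alpha,a\rangle - f(a)$, then for every $y\in\bB$ we have $\langle \alpha,y\rangle - f(y)\le f^*(\alpha)=\langle \alpha,a\rangle - f(a)$, which rearranges to the subgradient inequality $f(y)\ge f(a)+\langle \alpha,y-a\rangle$.

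For part (2), the strategy is to apply part (1) twice. By part (1), $\alpha\in\partial f(a)$ is equivalent to the equality $f(a)+f^*(\alpha)=\langle \alpha,a\rangle$. Now $f^*$ is itself convex and lower-semicontinuous (being a pointwise supremum of affine continuous functions on $\bB^*$), so part (1) applied to $f^*$ yields that $a\in\partial f^*(\alpha)$ is equivalent to $f^*(\alpha)+f^{**}(a)=\langle \alpha,a\rangle$, where here I use reflexivity of $\bB$ to identify $\bB^{**}$ with $\bB$ so that ``$a$'' makes sense as an element of the dual of $\bB^*$. Finally, invoking Fenchel--Moreau in the reflexive setting gives $f^{**}=f$, which matches the two equalities and closes the chain.

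The only delicate point will be justifying the use of Fenchel--Moreau: I need $f^*$ to be lower-semicontinuous in the appropriate (weak-$*$) sense and $\bB$ to be reflexive so that $f^{**}$ is well-defined and agrees with $f$. Both are in hand by hypothesis (reflexivity is assumed, and $f^*$ is automatically convex and weakly-$*$ lower-semicontinuous as a supremum of continuous linear functionals plus a constant). No routine computation is involved beyond the two direct rearrangements above.
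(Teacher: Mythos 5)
Your proposal is correct and follows essentially the same route as the paper: both establish part~(1) via the equality case of Fenchel--Young (the paper phrases this through the auxiliary function $h(x)=f(x)-\langle\alpha,x\rangle$ and the observation that $0\in\partial h(a)$ iff $a$ minimizes $h$, which is just your supremum manipulation in disguise), and both obtain part~(2) by applying part~(1) to $f^*$ and invoking $f^{**}=f$ from reflexivity and Fenchel--Moreau.
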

\begin{proof}
\begin{enumerate}
    \item Let $h(x)$ be the function defined by $h(x) = f(x) - \langle \alpha , x\rangle$. Notice that $0\in \partial h(a)$ if and only if $a$ is a minimizer of $h$, so that $0\in h(a)$ if and only if $f^*(\alpha) = -h(a)$. Further, $0\in \partial h(a)$ if and only if $\alpha \in \partial f(a)$. The statement follows.
    
    \item Since $f^{**}=f$, this follows from part 1.
\qedhere
\end{enumerate}
\end{proof}
\begin{defn}\label{dfn:strongconvex}
A convex function $f$ is \emph{$(q, \sigma)$-strongly convex} with respect to a norm $\|\cdot\|$ if for all $x,y$ and $g\in \partial f(x)$, $f(y)\ge f(x) + \langle g, y-x\rangle + \frac{\sigma}{q}\|x-y\|^q$.
\end{defn}

\begin{defn}\label{dfn:strongsmooth}
A convex function $f$ is \emph{$(q, \sigma)$-strongly smooth} with respect to a norm $\|\cdot\|$ if for all $x,y$ and $g\in \partial f(x)$, $f(y)\le f(x) + \langle g, y-x\rangle + \frac{\sigma}{q}\|x-y\|^q$.
\end{defn}

\begin{proposition}\label{thm:duality}
Suppose $\bB$ is a reflexive Banach space.
Let $\frac{1}{p}+\frac{1}{q}=1$. If $f:\bB\to \R$ is $(q,\sigma^q)$-strongly convex with respect to a norm $\|\cdot\|$, then $f^*:\bB\to \R$ is $(p, \sigma^{-p})$-strongly smooth with respect to the dual norm $\|\cdot\|_\star$,.
\end{proposition}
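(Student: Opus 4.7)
The plan is to establish the strong smoothness inequality directly from the definitions. Fix $\theta,\phi\in\bB^*$ and $a\in\partial f^*(\theta)$; our goal is to show
$$f^*(\phi) \le f^*(\theta) + \langle \phi-\theta,\, a\rangle + \frac{\sigma^{-p}}{p}\|\phi-\theta\|_*^p.$$
First I would invoke Proposition \ref{thm:fenchelfacts} to translate $a\in\partial f^*(\theta)$ into the two facts $\theta\in\partial f(a)$ and $f^*(\theta)=\langle\theta,a\rangle-f(a)$; this is precisely where reflexivity of $\bB$ enters, since Proposition \ref{thm:fenchelfacts} relies on $f^{**}=f$.

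Next I would expand $f^*(\phi)=\sup_x\{\langle\phi,x\rangle-f(x)\}$ by splitting $\langle\phi,x\rangle=\langle\theta,x\rangle+\langle\phi-\theta,x\rangle$. Applying the $(q,\sigma^q)$-strong convexity of $f$ at $a$ with subgradient $\theta$ gives
$$\langle\theta,x\rangle - f(x) \;\le\; \langle\theta,a\rangle - f(a) - \frac{\sigma^q}{q}\|x-a\|^q \;=\; f^*(\theta) - \frac{\sigma^q}{q}\|x-a\|^q.$$
Substituting $y=x-a$ then yields
$$f^*(\phi) \;\le\; f^*(\theta) + \langle\phi-\theta,a\rangle + \sup_{y\in\bB}\Bigl\{\langle\phi-\theta,y\rangle - \tfrac{\sigma^q}{q}\|y\|^q\Bigr\}.$$

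The final step is the conjugate calculation: for $g(y)=\tfrac{\sigma^q}{q}\|y\|^q$ on a Banach space, show $g^*(\psi)=\tfrac{\sigma^{-p}}{p}\|\psi\|_*^p$. I would parametrize $y=t\hat y$ with $\|\hat y\|=1$, reducing the supremum to $\sup_t\{t\langle\psi,\hat y\rangle-\tfrac{\sigma^q}{q}t^q\}$; differentiating in $t$ and using the conjugate-exponent identity $pq=p+q$ (equivalently $q/(q-1)=p$ and $q(1-p)=-p$) produces the optimum value $\tfrac{\sigma^{-p}}{p}\langle\psi,\hat y\rangle^p$. Taking the sup over $\hat y$ with $\|\hat y\|=1$ gives $\tfrac{\sigma^{-p}}{p}\|\psi\|_*^p$, and applying this with $\psi=\phi-\theta$ closes the argument.

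The main obstacle is bookkeeping rather than conceptual: one must carefully track the powers of $\sigma$ through the scalar optimization, where the conjugate-exponent identity collapses several awkward exponents into clean ones. Once that calculation is done, the overall proof is a standard application of Fenchel--Moreau duality combined with strong convexity, and no further subtlety about subdifferentiability is needed because strong convexity of $f$ (together with reflexivity and lower semicontinuity, which are implicit in our setting) guarantees that $f^*$ is differentiable with $\partial f^*$ nonempty wherever it is needed.
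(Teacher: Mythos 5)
Your proof is correct and arrives at the same key scalar computation as the paper's, but it takes a cleaner route. The paper picks subgradient points at \emph{both} $\alpha$ and $\beta$ (i.e., $a\in\partial f^*(\alpha)$, $b\in\partial f^*(\beta)$), applies strong convexity of $f$ between $a$ and $b$, then finishes with H\"older plus the one-dimensional Young-type optimization $\sup_r \|\alpha-\beta\|_*\, r - \tfrac{\sigma^q}{q}r^q = \tfrac{1}{p\sigma^p}\|\alpha-\beta\|_*^p$. Your version uses only the single hypothesized $a\in\partial f^*(\theta)$ and replaces the second subgradient point by a genuine supremum $\sup_y\{\langle\phi-\theta,y\rangle - \tfrac{\sigma^q}{q}\|y\|^q\}$, i.e., the Banach-space Fenchel conjugate of $\tfrac{\sigma^q}{q}\|\cdot\|^q$, which reduces to the same scalar optimization by the $y=t\hat y$ parametrization. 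This spares you the need to assert nonemptiness of $\partial f^*$ at a second point (which does hold here, since strong convexity of $f$ makes $f^*$ differentiable, but would otherwise require a remark). Both proofs invoke Proposition~\ref{thm:fenchelfacts} in the same way to get $\theta\in\partial f(a)$ and $f^*(\theta)=\langle\theta,a\rangle - f(a)$, and both use the conjugate-exponent identities $q/(q-1)=p$ and $1-1/q=1/p$ to close the calculation. So the route is genuinely different in its decomposition (one subgradient vs.\ two) but equivalent in substance; yours is marginally more self-contained.
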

\begin{proof}
Let $\alpha, \beta\in \bB^*$ and let $b \in \partial f^*(\beta)$. Define
\begin{align*}
    D^* = f^*(\alpha) -f^*(\beta) - \langle \alpha -\beta, b\rangle.
\end{align*}
It suffices to prove $D^*\le \frac{1}{p\sigma^p}\|\alpha -\beta\|_\star^p$.
By Proposition \ref{thm:fenchelfacts}, we have $\beta \in \partial f(b)$. Let $a \in \partial f^*(\alpha)$ so that $\alpha \in \partial f(a)$. In particular, this implies:
\begin{align*}
    f(a) - f(b) - \langle \beta, a-b\rangle \ge \frac{\sigma^q}{q}\|a-b\|^q.
\end{align*}
We also have:
\begin{align*}
    f^*(\alpha) = \langle \alpha, a\rangle - f(a)\\
    f^*(\beta) = \langle \beta, b\rangle - f(b).
\end{align*}

Then
\begin{align*}
    D^* &= \langle \alpha, a\rangle - f(a) - \langle \beta, b\rangle + f(b)- \langle \alpha -\beta, b\rangle \\
    &= \langle \alpha, a-b\rangle +f(b) -f(a)\\
    &= \langle \alpha -\beta , a-b\rangle + f(b) - f(a) +\langle \beta, a-b\rangle\\
    &\le \langle \alpha -\beta , a-b\rangle - \frac{\sigma^q}{q}\|a-b\|^q\\
    &\le \|\alpha - \beta\|_\star \|a-b\|- \frac{\sigma^q}{q}\|a-b\|^q\\
    &\le \sup_{r} \|\alpha - \beta\|_\star r- \frac{\sigma^q}{q}r^q\\
    &= \frac{1}{p\sigma^p}\|\alpha -\beta\|_\star^{p}.
\qedhere
\end{align*}
\end{proof}

Next, we prove an analog of \citet{mcmahan2017survey} Lemma 16. The proof is identical, but we use the more general Proposition \ref{thm:fenchelfacts} and \ref{thm:duality} to verify that it continues to hold in our more general setting.

\begin{lemma}\label{thm:lemma16}
Suppose $\phi_1:\bB \to \R$ is $(q, \sigma^q)$ strongly convex with respect to $\bB$'s norm $\|\cdot\|$ and let $x_1=\argmin \phi_1$. Let $\phi_2(x) = \phi_1(x) + \langle \beta, x\rangle$ for some $\beta \in \bB^*$. Then if $x_2= \argmin \phi_2$, we have 
\begin{align*}
    \phi_2(x_1)-\phi_2(x_2) \le \frac{1}{p\sigma^p}\|\beta\|_\star^p.
\end{align*}
\end{lemma}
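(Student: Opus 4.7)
The plan is to reduce the bound to an application of Proposition \ref{thm:duality} (duality between strong convexity and strong smoothness) followed by the standard upper bound from strong smoothness. The key observation is that the minimum value of a convex function equals the negative of the Fenchel conjugate evaluated at $0$, so the gap $\phi_2(x_1) - \phi_2(x_2)$ can be rewritten entirely in terms of $\phi_1^*$.

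First I would use Proposition \ref{thm:fenchelfacts}: since $x_1 = \argmin \phi_1$ we have $0 \in \partial \phi_1(x_1)$, so $\phi_1^*(0) = \langle 0, x_1\rangle - \phi_1(x_1) = -\phi_1(x_1)$ and $x_1 \in \partial \phi_1^*(0)$. Similarly, writing $\phi_2(x) = \phi_1(x) + \langle \beta, x\rangle$, an immediate calculation from the definition of Fenchel conjugate gives $\phi_2^*(\theta) = \phi_1^*(\theta - \beta)$, and so $\min \phi_2 = -\phi_2^*(0) = -\phi_1^*(-\beta)$. Combining these,
\begin{align*}
\phi_2(x_1) - \phi_2(x_2) &= \phi_1(x_1) + \langle \beta, x_1\rangle + \phi_1^*(-\beta)\\
&= \phi_1^*(-\beta) - \phi_1^*(0) - \langle -\beta, x_1\rangle.
\end{align*}

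Next, by Proposition \ref{thm:duality}, since $\phi_1$ is $(q, \sigma^q)$-strongly convex, the conjugate $\phi_1^*$ is $(p, \sigma^{-p})$-strongly smooth with respect to $\|\cdot\|_*$. Applying the strong-smoothness inequality (Definition \ref{dfn:strongsmooth}) at the point $0$ with subgradient $x_1 \in \partial \phi_1^*(0)$, evaluated at $-\beta$, yields
\[
\phi_1^*(-\beta) \le \phi_1^*(0) + \langle -\beta, x_1\rangle + \frac{1}{p\sigma^p}\|-\beta\|_*^p.
\]
Rearranging gives exactly $\phi_2(x_1) - \phi_2(x_2) \le \frac{1}{p\sigma^p}\|\beta\|_*^p$, which is the claimed bound.

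The main (mild) obstacle is bookkeeping: being careful with the sign of $\beta$ when translating $\phi_2^*$ to $\phi_1^*$, and making sure that the subdifferential relation $x_1 \in \partial \phi_1^*(0)$ used for strong smoothness is justified (which is where reflexivity of $\bB$ and part 2 of Proposition \ref{thm:fenchelfacts} enter, guaranteeing that $0 \in \partial \phi_1(x_1)$ is equivalent to $x_1 \in \partial \phi_1^*(0)$ even in the infinite-dimensional setting). Once these small points are handled, the argument is essentially a one-line application of smoothness to the dual.
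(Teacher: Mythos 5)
Your proof is correct and follows essentially the same route as the paper's: express $\phi_1(x_1)=-\phi_1^*(0)$ and $\phi_2(x_2)=-\phi_1^*(-\beta)$ via the Fenchel conjugate, then apply the $(p,\sigma^{-p})$-strong smoothness of $\phi_1^*$ (Proposition~\ref{thm:duality}) at $0$ with subgradient $x_1\in\partial\phi_1^*(0)$ (Proposition~\ref{thm:fenchelfacts}). The bookkeeping and sign handling all check out, so there is no gap.
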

\begin{proof}
By definition,
\begin{align*}
    -\phi_1^*(0)&=\inf \phi_1(x) = \phi_1(x_1)\\
    -\phi_`^*(-\beta) &= -\sup \langle -\beta, x\rangle - \phi_1(x) = \inf \phi_2(x) = \phi_2(x_2).
\end{align*}
Now by Proposition \ref{thm:fenchelfacts} we have $x_1\in \partial \phi^*_1(0)$ and by Proposition \ref{thm:duality}, $\phi_1^*$ is $(p, \sigma^{-p})$-strongly smooth. Therefore:
\begin{align*}
    \phi_1^*(-\beta )\le \phi_1^*(0) - \langle \beta  , x_1\rangle + \frac{1}{p\sigma^{p}}\|\beta\|_\star^p.
\end{align*}
Then putting all this together we have
\begin{align*}
    \phi_2(x_1)-\phi_2(x_2)&=\phi_1(x_1) +\langle \beta , x_1\rangle - \phi_2(x_2)\\
    &=\phi_1*(-\beta)-\phi_1^*(0) + \langle \beta, x_1\rangle \\
    &\le \frac{1}{p\sigma^{p}}\|\beta\|_\star^p.
\qedhere
\end{align*}
\end{proof}
Finally, we have an analog of~\citet{mcmahan2017survey} Lemma 7:
\begin{lemma}\label{thm:lemma7}
Let $\phi_1:\bB\to \R$ be a proper convex function such that $x_1=\argmin \phi_1(x)$ exists. Let $\psi$ be a convex function such that $\phi_2(x) = \phi_1(x) + \psi_x$ is $(q,\sigma^q)$-strongly convex with respect to the norm $\|\cdot\|$. Then for any $\beta \in \partial \psi(x_1)$ and any $x_2$, we have
\begin{align*}
    \phi_2(x_1)-\phi_2(x_2)\le \frac{1}{p\sigma^{p}}\|\beta\|_\star^p.
\end{align*}
\end{lemma}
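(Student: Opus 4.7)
\textbf{Proof plan for Lemma \ref{thm:lemma7}.} The plan is to reduce this statement directly to Lemma \ref{thm:lemma16} by absorbing the subgradient $\beta$ into a linear shift. First, since $\phi_2(x_2) \ge \inf_x \phi_2(x)$ for every $x_2$, it suffices to prove the bound for $x_2 = \argmin \phi_2$ (provided it exists; otherwise one can chase infima). So from here on I assume $x_2$ is the minimizer of $\phi_2$.

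The key construction is to introduce
\[
\tilde\phi_1(x) \;:=\; \phi_2(x) - \langle \beta, x \rangle \;=\; \phi_1(x) + \psi(x) - \langle \beta, x\rangle.
\]
Since $\phi_2$ is $(q,\sigma^q)$-strongly convex and subtracting a linear function preserves strong convexity, $\tilde\phi_1$ is also $(q,\sigma^q)$-strongly convex with respect to $\|\cdot\|$. I next claim that $x_1$ is a minimizer of $\tilde\phi_1$. Using the subdifferential sum rule, $\partial \tilde\phi_1(x_1) \supseteq \partial\phi_1(x_1) + \partial\psi(x_1) - \beta$. Optimality of $x_1$ for $\phi_1$ gives $0\in \partial\phi_1(x_1)$, and by hypothesis $\beta \in \partial\psi(x_1)$, so $0 = 0 + \beta - \beta \in \partial\tilde\phi_1(x_1)$. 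Because $\tilde\phi_1$ is convex, this implies $x_1 \in \argmin \tilde\phi_1$.

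Now apply Lemma \ref{thm:lemma16} with the roles $\phi_1 \leftrightarrow \tilde\phi_1$, $\phi_2 \leftrightarrow \phi_2 = \tilde\phi_1 + \langle \beta, \cdot\rangle$, and with the same perturbation vector $\beta$. The hypotheses are met: $\tilde\phi_1$ is $(q,\sigma^q)$-strongly convex, $x_1$ minimizes $\tilde\phi_1$, and $\phi_2$ differs from $\tilde\phi_1$ by the linear functional $\langle \beta, \cdot\rangle$. The conclusion of Lemma \ref{thm:lemma16} reads
\[
\phi_2(x_1) - \phi_2(x_2) \;\le\; \tfrac{1}{p\sigma^p}\|\beta\|_\star^p,
\]
which is exactly what we want.

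The main obstacle I expect is the small technical point about the subdifferential calculus: strictly speaking, $\partial(\phi_1 + \psi - \langle \beta,\cdot\rangle)(x_1)$ may be larger than $\partial\phi_1(x_1) + \partial\psi(x_1) - \beta$, but the inclusion $\supseteq$ — which is all we need to deduce $0 \in \partial\tilde\phi_1(x_1)$ — holds without any qualification conditions for proper convex functions. Aside from this, the argument is essentially bookkeeping: the entire content is the observation that once $\beta\in\partial\psi(x_1)$, the function $\tilde\phi_1$ is a strongly convex function whose minimizer is $x_1$, reducing the situation to the already-proven linear-perturbation lemma.
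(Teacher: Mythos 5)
Your proof is correct and follows essentially the same route as the paper's: define $\tilde\phi_1 = \phi_2 - \langle\beta,\cdot\rangle$ (the paper calls it $\phi_1'$), observe that $0\in\partial\phi_1(x_1)$ together with $\beta\in\partial\psi(x_1)$ gives $0\in\partial\tilde\phi_1(x_1)$, and reduce to Lemma~\ref{thm:lemma16}. Your added remarks about the subdifferential sum inclusion and about $\tilde\phi_1$ inheriting strong convexity make explicit two small points the paper leaves implicit, but the argument is the same.
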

\begin{proof}
It clearly suffices to prove the result for $x_2=\argmin \phi_2(x)$.
Consider the function $\phi_1'(x) = \phi_2(x) - \langle \beta, x\rangle$. Since $\beta \in \partial \psi(x_1)$, we have $0\in \partial \phi_1'(x_1)$ so that $x_1=\argmin \phi_1'(x)$. Further, we clearly have $\phi_2(x) = \phi_1'(x) + \langle \beta, x\rangle$. Therefore by Lemma \ref{thm:lemma16}, we have
\begin{align*}
    \phi_2(x_1) - \phi_2(x_2) & \le \frac{1}{p\sigma^p}\|\beta\|_\star^p.
    \qedhere
\end{align*}
\end{proof}

Now we are ready to state the bound on FTRL, which is an analog of~\citet{mcmahan2017survey} Theorem 1 in the more general $q\ge 2$ case.
\begin{theorem}\label{thm:ftrl}
Suppose $\ell_1,\dots,\ell_T$ are convex functions from $W\to \R$ where $W\subset \bB$. Suppose $\ell_t$ is $(q,\sigma_t)$-strongly convex with respect to $\|\cdot\|$. Suppose $\frac{1}{q}\|\cdot\|^q$ is $(q, \mu)$-strongly convex with respect to $\|\cdot\|$. Given arbitrary numbers $\lambda_0,\dots,\lambda_{T-1}>0$, Define:
\begin{align*}
r_t(x) &=  \frac{\lambda_t}{q} \|x\|^q\\
    x_{t+1} &= \argmin_{x} \ell_{1:t}(x) +  r_{0:t}(x).
\end{align*}
Let $g_t \in \partial \ell_t(x_t)$. Then we have
\ifdefined\isarxiv
\begin{align*}
    \sum_{t=1}^T \ell_t(x_t) -\ell_t(u)&\le \sum_{t=1}^{T-1}\lambda _t \|u\|^q+ \frac{1}{p}\sum_{t=1}^T \frac{ \|g_t\|_\star^p}{(\sigma_{1:t} + \mu \lambda_{0:t-1})^{p/q}}.
\end{align*}
\else
\begin{align*}
    \sum_{t=1}^T \ell_t(x_t) -\ell_t(u)&\le \sum_{t=1}^{T-1}\lambda _t \|u\|^q\\
    &\quad+ \frac{1}{p}\sum_{t=1}^T \frac{ \|g_t\|_\star^p}{(\sigma_{1:t} + \mu \lambda_{0:t-1})^{p/q}}.
\end{align*}
\fi
\end{theorem}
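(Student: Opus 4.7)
\textbf{Proof plan for Theorem~\ref{thm:ftrl}.}
The strategy is a Strong FTRL analysis, extended to $(q,\sigma)$-strongly convex losses via the Banach-space Lemma~\ref{thm:lemma7}. The central trick is to introduce a ``ghost'' iterate
$\hat x_{t+1} := \argmin_x \ell_{1:t}(x) + r_{0:t-1}(x),$
namely the FTRL point at the end of round $t$ \emph{without} the freshly added regularizer $r_t$. I will split
\[
\sum_{t=1}^T \ell_t(x_t) - \ell_t(u) = \underbrace{\sum_{t=1}^T\bigl[\ell_t(x_t) - \ell_t(\hat x_{t+1})\bigr]}_{\text{stability}} + \underbrace{\sum_{t=1}^T\bigl[\ell_t(\hat x_{t+1}) - \ell_t(u)\bigr]}_{\text{be-the-leader gap}},
\]
and bound the two pieces separately. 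Define $\tilde F_t := \ell_{1:t} + r_{0:t-1}$, whose minimizer is precisely $\hat x_{t+1}$; since each $\ell_s$ is $(q,\sigma_s)$-strongly convex and each $r_s = \tfrac{\lambda_s}{q}\|\cdot\|^q$ is $(q,\mu\lambda_s)$-strongly convex by hypothesis, $\tilde F_t$ is $(q,\, \sigma_{1:t} + \mu\lambda_{0:t-1})$-strongly convex.

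For the stability term, I will apply Lemma~\ref{thm:lemma7} with $\phi_1 = F_{t-1} := \ell_{1:t-1} + r_{0:t-1}$ (whose minimizer is $x_t$), $\psi = \ell_t$, and $\phi_2 = \phi_1 + \psi = \tilde F_t$. Taking $\beta = g_t \in \partial\ell_t(x_t) = \partial \psi(x_t)$ and $y_2 = \hat x_{t+1}$ yields
\[
\tilde F_t(x_t) - \tilde F_t(\hat x_{t+1}) \;\le\; \frac{\|g_t\|_*^p}{p\,\bigl(\sigma_{1:t} + \mu\lambda_{0:t-1}\bigr)^{p/q}}.
\]
Since $x_t$ minimizes $F_{t-1}$, we have $F_{t-1}(x_t) - F_{t-1}(\hat x_{t+1}) \le 0$, hence $\ell_t(x_t) - \ell_t(\hat x_{t+1}) \le \tilde F_t(x_t) - \tilde F_t(\hat x_{t+1})$. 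Summing over $t$ gives the second term in the theorem's bound.

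For the be-the-leader term, I will prove by induction on $T$ that $\sum_{t=1}^T \ell_t(\hat x_{t+1}) \le \ell_{1:T}(u) + r_{0:T-1}(u)$ for every $u$. The base case holds because $\hat x_2$ minimizes $\ell_1 + r_0$, so $\ell_1(\hat x_2) \le \ell_1(u) + r_0(u) - r_0(\hat x_2) \le \ell_1(u)+r_0(u)$. For the inductive step, apply the hypothesis with $u$ replaced by $\hat x_{T+1}$, add $\ell_T(\hat x_{T+1})$ to both sides to get $\sum_{t=1}^T \ell_t(\hat x_{t+1}) \le \tilde F_T(\hat x_{T+1}) - r_{T-1}(\hat x_{T+1})$, then drop $r_{T-1}(\hat x_{T+1})\ge 0$ and invoke minimality of $\hat x_{T+1}$ for $\tilde F_T$ to conclude. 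Rearranging gives $\sum_t[\ell_t(\hat x_{t+1}) - \ell_t(u)] \le r_{0:T-1}(u) = \tfrac{1}{q}\lambda_{0:T-1}\|u\|^q$, which matches the first term of the theorem (the factor $1/q$ and the fixed $\lambda_0$ term being absorbed into constants in the stated form).

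The main obstacle is identifying the correct decomposition. If one tried to bound $\ell_t(x_t) - \ell_t(x_{t+1})$ directly against the \emph{true} next iterate $x_{t+1}$, the strong convexity in the denominator would inevitably be $\sigma_{1:t} + \mu\lambda_{0:t}$ (including $r_t$) and the relevant subgradient would lie in $\partial(\ell_t + r_t)(x_t)$ rather than just equal $g_t$ --- neither matches the theorem. Introducing the ghost minimizer $\hat x_{t+1}$, which differs from $x_{t+1}$ only in dropping the freshest regularizer, is precisely what produces the sharp adaptive denominator $(\sigma_{1:t} + \mu\lambda_{0:t-1})^{p/q}$ needed downstream in the $(q,\mu)$-uniformly convex regret analysis of Section~\ref{sec:constrained_q}.
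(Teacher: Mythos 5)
Your proof is correct and reaches the theorem's conclusion, but by a genuinely different and more self-contained route than the paper's. The paper's own proof is essentially two lines: it cites the ``Strong FTRL Lemma'' (Lemma~5 of \cite{mcmahan2017survey}) as a black box to produce the per-round decomposition in terms of $\ell_{1:t}(x_t) + r_{0:t-1}(x_t) - \ell_{1:t}(x_{t+1}) - r_{0:t}(x_{t+1})$, drops $r_t(x_{t+1}) \ge 0$, and then applies Lemma~\ref{thm:lemma7}. You instead derive the decomposition from scratch via a be-the-leader/stability split built around the ghost iterate $\hat x_{t+1} = \argmin_x \ell_{1:t}(x) + r_{0:t-1}(x)$: you prove $\sum_{t} \ell_t(\hat x_{t+1}) \le \ell_{1:T}(u) + r_{0:T-1}(u)$ by induction, and separately bound each stability term $\ell_t(x_t) - \ell_t(\hat x_{t+1})$ by invoking Lemma~\ref{thm:lemma7} with $\phi_1 = \ell_{1:t-1} + r_{0:t-1}$, $\psi = \ell_t$, $\beta = g_t$, together with minimality of $x_t$ for $\phi_1$. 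Both routes converge on the same application of Lemma~\ref{thm:lemma7} that yields the crucial $(\sigma_{1:t} + \mu\lambda_{0:t-1})^{p/q}$ denominator; yours buys self-containedness and makes explicit where the freshest regularizer is discarded (through the choice of $\hat x_{t+1}$), whereas the paper's version outsources the combinatorics to an external lemma. One minor caveat, which is arguably pre-existing sloppiness in the paper as well: your be-the-leader step produces the regularizer term $r_{0:T-1}(u) = \tfrac{1}{q}\lambda_{0:T-1}\|u\|^q$, whereas the theorem states $\sum_{t=1}^{T-1}\lambda_t\|u\|^q$; these differ both by the $1/q$ factor and by the inclusion of $\lambda_0$, and strictly speaking neither dominates the other. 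You flag this as ``absorbed into constants,'' and that is indeed harmless here --- the paper's own downstream use in Theorem~\ref{thm:constrained_q} invokes a looser $\lambda_{0:T}\|u\|^q$, and all instantiations enforce $\lambda_t \le \lambda_0$ --- but it would be cleaner to carry $r_{0:T-1}(u)$ or $\lambda_{0:T-1}\|u\|^q$ verbatim rather than gesture at absorption.
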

\begin{proof}
The proof is a nearly immediate consequence of the ``Strong FTRL Lemma'', \cite{mcmahan2017survey} Lemma 5. This result tells us that:
\begin{align*}
    &\sum_{t=1}^T \ell_t(x_t) -\ell_t(u)\le \sum_{t=1}^{T-1}\lambda _t \|u\|^q\\
    &\quad\quad+\sum_{t=1}^T \ell_{1:t}(x_t) + r_{1:t-1}(x_t) - \ell_{1:t}(x_{t+1}) - r_{1:t}(x_{t+1}).
\end{align*}
Notice that $x_t = \argmin \ell_{1:t-1}(x) +r_{1:t-1}(x)$. Then observe that $r_t(x_{t+1})\ge 0$ so that
\ifdefined\isarxiv
\begin{align*}
    \ell_{1:t}(x_t) + r_{1:t-1}(x_t) - \ell_{1:t}(x_{t+1}) - r_{1:t}(x_{t+1})\\le \ell_{1:t}(x_t) + r_{1:t-1}(x_t) - \ell_{1:t}(x_{t+1}) - r_{1:t-1}(x_{t+1})
\end{align*}
\else
\begin{align*}
    \ell_{1:t}(x_t) + r_{1:t-1}(x_t) - \ell_{1:t}(x_{t+1}) - r_{1:t}(x_{t+1})\\
    \le \ell_{1:t}(x_t) + r_{1:t-1}(x_t) - \ell_{1:t}(x_{t+1}) - r_{1:t-1}(x_{t+1})
\end{align*}
\fi
Finally, we have $\ell_{1:t}(x)+r_{1:t-1}(x)$ is $(q,\sigma_{1:t}+\mu\lambda_{1:t-1})$-strongly convex with respect to $\|\cdot\|$. 
Therefore applying Lemma \ref{thm:lemma7} with $\phi_1(x) = \ell_{1:t-1}(x)+r_{1:t-1}(x)$ and $\psi_t(x) = \partial  (\ell_t(x_t) + r_{1:t-1}(x_t))$ yields the desired result.
\end{proof}
Next, we need a generalization of~\citet{hazan2008adaptive}, Lemma 3.1:

\begin{lemma}\label{thm:hazan31}
Suppose $\lambda_1,\dots,\lambda_T$ is such that
\begin{align*}
    \lambda_t = \frac{G_t}{(\sigma_{1:t} + \mu\lambda_{1:t})^{a}},
\end{align*}
for all $t$ for some positive numbers $G_1,\dots,G_T$, $\sigma_1,\dots,\sigma_T$ and $a$ and $\mu$. Then:
\begin{align*}
     \sum_{t=1}^T \lambda_t + \frac{G_t}{(\sigma_{1:t} + \mu\lambda_{1:t})^{a}}\le 2\inf_{\{\lambda_t^\star\}} \sum_{t=1}^T \lambda^\star_t + \frac{G_t}{(\sigma_{1:t} + \mu\lambda^\star_{1:t})^{a}}.
\end{align*}
\end{lemma}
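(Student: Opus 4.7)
The key observation is that the defining equation $\lambda_t(\sigma_{1:t}+\mu\lambda_{1:t})^a = G_t$ rearranges to $G_t/(\sigma_{1:t}+\mu\lambda_{1:t})^a = \lambda_t$, so the left-hand side of the claimed inequality collapses to $2\sum_{t=1}^T \lambda_t$. It therefore suffices to prove the apparently weaker statement
\[
\sum_{t=1}^T \lambda_t \;\le\; \sum_{t=1}^T \lambda_t^* + \frac{G_t}{(\sigma_{1:t} + \mu \lambda_{1:t}^*)^a}
\]
for \emph{every} comparator sequence $\lambda_1^*,\dots,\lambda_T^* \ge 0$; taking the infimum over $\{\lambda_t^*\}$ and doubling then yields the lemma.

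The main tool is a per-step monotonicity observation: whenever $\lambda_{1:t} > \lambda_{1:t}^*$ at some index $t$, we have $\sigma_{1:t}+\mu\lambda_{1:t} > \sigma_{1:t}+\mu\lambda_{1:t}^*$, and since $z \mapsto G_t/z^a$ is decreasing on $(0,\infty)$ (as $G_t, a > 0$),
\[
\lambda_t \;=\; \frac{G_t}{(\sigma_{1:t}+\mu\lambda_{1:t})^a} \;\le\; \frac{G_t}{(\sigma_{1:t}+\mu\lambda_{1:t}^*)^a}.
\]
That is, on any round where the algorithm's cumulative $\lambda$ already outpaces the comparator's, the excess $\lambda_t$ for that round is itself dominated by the comparator's second term.

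To turn this into a global bound, I plan to split on the largest index $t_0 \in \{0,1,\dots,T\}$ with $\lambda_{1:t_0} \le \lambda_{1:t_0}^*$ (taking $t_0 = 0$ under the convention $\lambda_{1:0} = \lambda_{1:0}^* = 0$ if no positive such index exists). If $t_0 = T$, then $\sum_t \lambda_t \le \sum_t \lambda_t^*$ trivially. Otherwise, for every $t \in \{t_0+1,\dots,T\}$ the maximality of $t_0$ forces $\lambda_{1:t} > \lambda_{1:t}^*$, so the monotonicity observation gives $\lambda_t \le G_t/(\sigma_{1:t}+\mu\lambda_{1:t}^*)^a$. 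Summing these and adding $\lambda_{1:t_0} \le \lambda_{1:t_0}^*$,
\[
\sum_{t=1}^T \lambda_t \;\le\; \lambda_{1:t_0}^* + \sum_{t=t_0+1}^T \frac{G_t}{(\sigma_{1:t}+\mu\lambda_{1:t}^*)^a} \;\le\; \sum_{t=1}^T \lambda_t^* + \frac{G_t}{(\sigma_{1:t}+\mu\lambda_{1:t}^*)^a},
\]
which completes the reduction.

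I do not anticipate a real obstacle: once the left-hand side is rewritten as $2\sum_t \lambda_t$ via the fixed-point equation, the result becomes an essentially combinatorial statement. The only subtlety is that the per-term inequality $\lambda_t \le \lambda_t^* + G_t/(\sigma_{1:t}+\mu\lambda_{1:t}^*)^a$ can fail when $\lambda_{1:t} \le \lambda_{1:t}^*$, so a naive induction on $t$ does not go through; the splitting index $t_0$ is what lets us pay for those ``bad'' rounds with cumulative slack from the comparator. The argument is agnostic to the specific values of $a,\mu > 0$ and the magnitudes of $\sigma_t, G_t$.
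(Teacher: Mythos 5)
Your proof is correct. You and the paper are circling the same core observation — the fixed-point equation makes the left-hand side equal $2\sum_t\lambda_t$, and whenever the cumulative $\lambda_{1:t}$ overtakes $\lambda_{1:t}^\star$, monotonicity of $z\mapsto G_t/z^a$ gives $\lambda_t\le G_t/(\sigma_{1:t}+\mu\lambda_{1:t}^\star)^a$ — but you package it differently. The paper (following Lemma~3.1 of Hazan et al.) runs an induction on $\tau$ with a case split on whether $\lambda_{1:\tau+1}\le\lambda^\star_{1:\tau+1}$; you instead pick a single threshold index $t_0$, the last prefix at which the cumulative $\lambda$ has not yet passed the comparator, and sum the per-round monotonicity bound only over $t>t_0$. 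Your argument is self-contained and sidesteps a subtlety in the induction route: in the case $\lambda_{1:\tau+1}>\lambda^\star_{1:\tau+1}$, the displayed step in the paper compares $\sum_{t\le\tau+1}G_t/(\sigma_{1:t}+\mu\lambda_{1:t})^a$ with $\sum_{t\le\tau+1}G_t/(\sigma_{1:t}+\mu\lambda^\star_{1:t})^a$ term by term, which really needs $\lambda_{1:t}\ge\lambda^\star_{1:t}$ at every $t\le\tau+1$ rather than only at $t=\tau+1$; the induction can be repaired (apply the hypothesis on the first $\tau$ rounds and use monotonicity only for the $(\tau+1)$-st term), but your threshold argument avoids that care entirely. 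One minor remark: you should state explicitly that the comparators $\lambda_t^\star$ range over non-negative reals, which is implicit in the FTRL application and is what makes $\lambda_{1:t_0}^\star\le\sum_t\lambda_t^\star$ and $\sigma_{1:t}+\mu\lambda_{1:t}^\star>0$ valid.
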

\begin{proof}
The proof is essentially the same as the proof of Lemma 3.1 in~\citet{hazan2008adaptive}. We proceed by induction. For the base step, consider two cases, either $\lambda_1\le \lambda_1^\star$ or not. If $\lambda_1\le \lambda_1^\star$, then we have
\begin{align*}
    \lambda_1 + \frac{G_1}{(\sigma_1 + \mu \lambda_1)^{a}} = 2\lambda_1 \le 2\lambda_1^\star \le 2\lambda_1^\star + 2\frac{G_1}{(\sigma_1+\mu \lambda_1^\star)^{a}}.
\end{align*}
For the other case, when $\lambda_1>\lambda_1^\star$ we have
\begin{align*}
    \lambda_1 + \frac{G_1}{(\sigma_1 + \mu \lambda_1)^{a}} &= 2 \frac{G_1}{(\sigma_1 + \mu \lambda_1)^{a}} \\
    &\le 2\frac{G_1}{(\sigma_1+\mu \lambda_1^\star)^{a}}\\
    &\le 2\lambda_1^\star + 2\frac{G_1}{(\sigma_1+\mu \lambda_1^\star)^{a}}.
\end{align*}

Now the induction step proceeds in almost exactly the same manner as the base step. Suppose we have
\begin{align*}
     \sum_{t=1}^{\tau} \lambda_t + \frac{G_t}{(\sigma_{1:t} + \mu \lambda_{1:t})^{a}}\le 2\inf_{\{\lambda_t^\star\}} \sum_{t=1}^{\tau} \lambda^\star_t + \frac{G_t}{(\sigma_{1:t} + \mu \lambda^\star_{1:t})^{a}}.
\end{align*}
Then consider two cases, either $\lambda_{1:\tau+1}\le \lambda^\star_{1:\tau+1}$ or not. In the first case, we have
\begin{align*}
    \sum_{t=1}^{\tau+1} \lambda_t + \frac{G_t}{(\sigma_{1:t} + \lambda_{1:t})^{a}}&= 2\lambda_{1:\tau+1}\\
    &\le 2\lambda^\star_{\tau+1}\\
    &\le 2 \sum_{t=1}^{\tau+1} \lambda^\star_t + \frac{G_t}{(\sigma_{1:t} + \mu \lambda^\star_{1:t})^{a}}.
\end{align*}
In the other case when $\lambda_{1:\tau+1}> \lambda^\star_{1:\tau+1}$, we have
\begin{align*}
    \sum_{t=1}^{\tau+1} \lambda_t + \frac{G_t}{(\sigma_{1:t} + \mu \lambda_{1:t})^{a}}&= 2\sum_{t=1}^{\tau+1}\frac{G_t}{(\sigma_{1:t} + \mu \lambda_{1:t})^{a}}\\
    &\le 2\sum_{t=1}^{\tau+1}\frac{G_t}{(\sigma_{1:t} + \mu \lambda^\star_{1:t})^{a}}\\
    &\le 2 \sum_{t=1}^{\tau+1} \lambda^\star_t + \frac{G_t}{(\sigma_{1:t} + \mu \lambda^\star_{1:t})^{a}}.
\qedhere
\end{align*}
\end{proof}

Finally, a simple corollary of Lemma \ref{thm:hazan31}:
\begin{corr}\label{thm:lambdas}
Suppose $\lambda_0,\lambda_1,\dots,\lambda_T$ is such that $\lambda_0=(M/\mu)^{\tfrac{1}{a+1}}$ and
\begin{align*}
    \lambda_t = \frac{G_t}{(\sigma_{1:t} + \mu \lambda_{1:t})^{a}},
\end{align*}
for $t\ge 1$ for some positive numbers $G_1,\dots,G_T$, $\sigma_1,\dots,\sigma_T$ and $a$. Then if $G_t\le M$ for all $t$, we have:
\ifdefined\isarxiv
\begin{align*}
     \lambda_0+\sum_{t=1}^T \lambda_t + \frac{G_t}{(\sigma_{1:t} + \mu \lambda_{0:t-1})^{a}}\le \lambda_0+2\inf_{\{\lambda_t^\star\}} \sum_{t=1}^T \lambda^\star_t + \frac{G_t}{(\sigma_{1:t} + \mu \lambda^\star_{1:t})^{a}}.
\end{align*}
\else
\begin{align*}
     \lambda_0+\sum_{t=1}^T \lambda_t + \frac{G_t}{(\sigma_{1:t} + \mu \lambda_{0:t-1})^{a}}\\
     \le \lambda_0+2\inf_{\{\lambda_t^\star\}} \sum_{t=1}^T \lambda^\star_t + \frac{G_t}{(\sigma_{1:t} + \mu \lambda^\star_{1:t})^{a}}.
\end{align*}
\fi
\end{corr}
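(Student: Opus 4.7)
The plan is to reduce the corollary directly to Lemma \ref{thm:hazan31}, exploiting the fixed-point definition of $\lambda_t$ together with a uniform upper bound derived from $G_t\le M$.

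First, I would note that because $\lambda_t$ satisfies $\lambda_t=G_t/(\sigma_{1:t}+\mu\lambda_{1:t})^a$, the two summands $\lambda_t$ and $G_t/(\sigma_{1:t}+\mu\lambda_{1:t})^a$ appearing on the left of Lemma \ref{thm:hazan31} are identical. Applying that lemma therefore collapses its LHS to $2\sum_t \lambda_t$ and yields
\[
\sum_{t=1}^T \lambda_t \;\le\; \inf_{\{\lambda_t^\star\}}\sum_{t=1}^T \lambda_t^\star+\frac{G_t}{(\sigma_{1:t}+\mu\lambda_{1:t}^\star)^a}.
\]

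Second, I would establish the pointwise bound $\lambda_0\ge\lambda_t$ for all $t$. Using $\lambda_{1:t}\ge\lambda_t>0$ in the fixed-point equation gives $\lambda_t(\mu\lambda_t)^a\le G_t\le M$, whence $\lambda_t\le (M/\mu^a)^{1/(a+1)}$. Comparing with the prescribed $\lambda_0=(M/\mu)^{1/(a+1)}$, the desired inequality $\lambda_0\ge\lambda_t$ reduces to the mild condition $\mu^{a-1}\ge 1$, which holds in the regime the paper invokes (namely $a=p/q\le 1$ with Banach strong-convexity constant $\mu\le 1$). Consequently $\lambda_{0:t-1}\ge\lambda_{1:t}$, and so
\[
\frac{G_t}{(\sigma_{1:t}+\mu\lambda_{0:t-1})^a}\;\le\;\frac{G_t}{(\sigma_{1:t}+\mu\lambda_{1:t})^a}\;=\;\lambda_t.
\]

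Combining the two steps: after cancelling the additive $\lambda_0$ from both sides of the target inequality, the LHS becomes $\sum_t \lambda_t + G_t/(\sigma_{1:t}+\mu\lambda_{0:t-1})^a \le 2\sum_t \lambda_t$, which by the first step is at most $2\inf_{\{\lambda_t^\star\}}\sum_t \lambda_t^\star+G_t/(\sigma_{1:t}+\mu\lambda_{1:t}^\star)^a$. Restoring $\lambda_0$ on both sides gives the claimed inequality.

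The main obstacle is the uniform comparison $\lambda_0\ge\lambda_t$: the stated value of $\lambda_0$ is tight, and one must check that the hypothesis on $(\mu,a)$ is indeed satisfied in the invoking context (if not, $\lambda_0$ would need to be enlarged to $(M/\mu^a)^{1/(a+1)}$, which would only worsen the bound by a constant factor). Once that bound is secured, the remainder is bookkeeping and a single appeal to Lemma \ref{thm:hazan31}.
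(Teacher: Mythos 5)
Your proof is correct and follows the same route as the paper: establish $\lambda_t\le\lambda_0$ (so that $\lambda_{0:t-1}\ge\lambda_{1:t}$, which makes the term involving $\lambda_{0:t-1}$ smaller than $\lambda_t$), and then apply Lemma~\ref{thm:hazan31}. One point worth noting: the condition $\mu^{a-1}\ge 1$ that you flag as the potential obstacle is in fact already latent in the paper's own derivation, which passes from $\frac{M}{(\sigma_{1:t}+\mu\lambda_{1:t})^a}$ to $\frac{M}{\mu\lambda_t^a}$; that step uses $(\mu\lambda_t)^a\ge\mu\lambda_t^a$, i.e.\ $\mu^a\ge\mu$, without comment. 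You are right that this holds in the invoking regime ($a=p/q\le 1$ and $\mu\le 1$), and your observation that one could always safely enlarge $\lambda_0$ to $(M/\mu^a)^{1/(a+1)}$ is a sound fallback. So your argument is not merely correct but slightly more careful than the paper's.
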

\begin{proof}
The proof is immediate from Lemma \ref{thm:hazan31}, so long as we can establish that $\lambda_t \le \lambda_0$ for all $t$. To see this, note that $G_t\le M$, so 
\begin{align*}
    \frac{G_t}{(\sigma_{1:t} + \mu \lambda_{1:t})^{a}}&\le \frac{M}{(\sigma_{1:t} + \mu \lambda_{1:t})^{a}}\\
    &\le \frac{M}{\mu \lambda_t^a}
\end{align*}
From this we have $\lambda_t^{a+1} \le \frac{M}{\mu}$, so that $\lambda_t\le (M/\mu)^{\tfrac{1}{a+1}}=\lambda_0$ as desired.
\end{proof}

We also need the following technical Lemma from~\citet{li2019convergence}:
\begin{lemma}\label{thm:sumformula}
Suppose $a_0,\dots,a_T$ are non-negative numbers and $h:[0,\infty)\to [0,\infty)$ is any non-increasing integrable function. Then:
\begin{align*}
    \sum_{t=1}^T a_t h(a_{0:t})\le \int_{a_0}^{a_{0:T}}h(t)\ dt.
\end{align*}
\end{lemma}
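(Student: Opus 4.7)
The plan is to use a standard discrete-to-integral comparison, exploiting the fact that $h$ is non-increasing. First I would note that $a_t$ is the length of the interval $[a_{0:t-1}, a_{0:t}]$ (since $a_{0:t} - a_{0:t-1} = a_t \ge 0$), so the terms on the left-hand side have a natural interpretation as lower Riemann sums of $h$ over a partition of $[a_0, a_{0:T}]$.

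Next, for each $t \ge 1$, since $h$ is non-increasing and $z \le a_{0:t}$ for every $z \in [a_{0:t-1}, a_{0:t}]$, we have $h(a_{0:t}) \le h(z)$ on that interval. Therefore
\[
a_t \, h(a_{0:t}) \;=\; \int_{a_{0:t-1}}^{a_{0:t}} h(a_{0:t})\, dz \;\le\; \int_{a_{0:t-1}}^{a_{0:t}} h(z)\, dz.
\]
Summing over $t = 1, \dots, T$ and telescoping the integration endpoints gives
\[
\sum_{t=1}^T a_t \, h(a_{0:t}) \;\le\; \sum_{t=1}^T \int_{a_{0:t-1}}^{a_{0:t}} h(z)\, dz \;=\; \int_{a_0}^{a_{0:T}} h(z)\, dz,
\]
which is the desired bound.

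There is essentially no obstacle here; the only minor care needed is to handle the case where some $a_t = 0$ (in which case the corresponding integral and summand are both zero, so the inequality is trivial) and to make sure the telescoping is legitimate when some intervals have zero length. Both are immediate from the monotonicity of the partial sums $a_{0:t}$.
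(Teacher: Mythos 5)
Your proof is correct: it is the standard Riemann-sum comparison, bounding each term $a_t\, h(a_{0:t})$ by $\int_{a_{0:t-1}}^{a_{0:t}} h(z)\,dz$ via the monotonicity of $h$, and then telescoping. The paper does not give a proof of this lemma at all (it simply cites \cite{li2019convergence}), so there is nothing to compare against; your argument is the natural one and fills the gap cleanly, including the correct handling of zero-length intervals.
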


As special cases of this Lemma, we have:
\begin{corr}\label{thm:polysum}
For any $p>1$,
\begin{align*}
\sum_{t=1}^T \frac{a_t}{(a_{1:t})^{1/p}}\le  q(a_{1:T})^{1/q}.
\end{align*}
\end{corr}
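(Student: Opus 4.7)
The plan is to deduce this directly from Lemma \ref{thm:sumformula} by choosing $h(z) = z^{-1/p}$ and $a_0 = 0$. Concretely, I would observe that on $(0,\infty)$ the function $h(z)=z^{-1/p}$ is non-negative and non-increasing, and since $p>1$ we have $1/p<1$ so $h$ is integrable on any interval of the form $[0,M]$. Setting $a_0=0$ in Lemma \ref{thm:sumformula} gives $a_{0:t}=a_{1:t}$, and the lemma's conclusion becomes
\[
\sum_{t=1}^T \frac{a_t}{(a_{1:t})^{1/p}} \;=\; \sum_{t=1}^T a_t\, h(a_{0:t}) \;\le\; \int_{0}^{a_{1:T}} z^{-1/p}\,dz.
\]

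Next I would evaluate the right-hand integral. Since $1 - 1/p = 1/q$, an antiderivative of $z^{-1/p}$ is $\tfrac{z^{1-1/p}}{1-1/p} = q\, z^{1/q}$, so
\[
\int_{0}^{a_{1:T}} z^{-1/p}\,dz \;=\; q\,(a_{1:T})^{1/q},
\]
which is exactly the claimed bound.

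The only minor subtlety is that Lemma \ref{thm:sumformula} is stated with $a_0$ a non-negative number and $h$ assumed integrable, and here the integrand blows up at $0$; however, since $p>1$ the improper integral still converges, and the lemma's proof (a standard comparison of the sum to the integral of a non-increasing function) goes through with $a_0=0$ without modification. Alternatively, one can apply the lemma with $a_0=\epsilon>0$ and let $\epsilon\downarrow 0$, using monotone convergence on the right-hand side. There is no real obstacle; the corollary is essentially a restatement of Lemma \ref{thm:sumformula} with an explicit choice of $h$ and computation of the resulting integral.
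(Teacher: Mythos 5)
Your proposal is correct and matches the paper's own proof exactly: the paper also applies Lemma~\ref{thm:sumformula} with $a_0=0$ and $h(z)=z^{-1/p}$ and evaluates $\int_0^{a_{1:T}} z^{-1/p}\,dz = q(a_{1:T})^{1/q}$. Your extra remark about the integrability of $z^{-1/p}$ near $0$ for $p>1$ is a sound (if minor) justification that the paper leaves implicit.
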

\begin{proof}
Set $a_0=0$ and $h(z) = \frac{1}{z^{1/p}}$ in Lemma \ref{thm:sumformula}.  Hence,
\begin{align*}
\sum_{t=1}^T \frac{a_t}{(a_{1:t})^{1/p}}&\le \int_0^{a_{1:T}}\frac{dz}{z^{1/p}}
=q(a_{1:T})^{1/q}.
\qedhere
\end{align*}
\end{proof}

Finally, we need another technical Lemma:
\begin{lemma}\label{thm:optlambda}
For all positive real numbers $z$, $A$ and $B$ and $\frac{1}{p}+\frac{1}{q}=1$,
\begin{align*}
    \inf_{\lambda\ge z} A\lambda + \frac{B}{\lambda^{p/q}}&\le Az+p^{1/p}q^{1/q}A^{1/q}B^{1/p}\\
    &\le Az+ 2A^{1/q}B^{1/p}.
\end{align*}
\end{lemma}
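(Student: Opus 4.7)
My plan is to reduce the constrained infimum to the unconstrained one by a clever shift, then compute the unconstrained minimum exactly using calculus, and finally bound the arithmetic factor $p^{1/p}q^{1/q}$ by weighted AM-GM.

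First, I would compute the unconstrained minimizer of $g(\lambda) = A\lambda + B/\lambda^{p/q}$. Setting $g'(\lambda) = A - (p/q) B \lambda^{-p/q-1} = 0$ and using the identity $p/q + 1 = p$ (which follows from $1/p + 1/q = 1$), I get $\lambda^\star = (pB/(qA))^{1/p}$. Plugging back in:
\begin{align*}
A\lambda^\star + \frac{B}{(\lambda^\star)^{p/q}} &= A^{1/q}B^{1/p}\left[\left(\tfrac{p}{q}\right)^{1/p} + \left(\tfrac{q}{p}\right)^{1/q}\right].
\end{align*}
The bracket simplifies nicely: writing $(p/q)^{1/p} = p^{1/p} q^{1/q}/q$ and $(q/p)^{1/q} = p^{1/p} q^{1/q}/p$ (using $-1/p = 1/q - 1$ and $-1/q = 1/p - 1$), the bracket equals $p^{1/p} q^{1/q}\cdot(1/p + 1/q) = p^{1/p} q^{1/q}$. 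Hence the unconstrained infimum equals $p^{1/p}q^{1/q}A^{1/q}B^{1/p}$.

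Next, to handle the constraint $\lambda \ge z$, I would simply test $\lambda = z + \lambda^\star$, which is always feasible. Since the first term is linear and the second is decreasing in $\lambda$, we have
\begin{align*}
\inf_{\lambda \ge z} A\lambda + \frac{B}{\lambda^{p/q}} \;\le\; A(z+\lambda^\star) + \frac{B}{(z+\lambda^\star)^{p/q}} \;\le\; Az + A\lambda^\star + \frac{B}{(\lambda^\star)^{p/q}},
\end{align*}
where in the last step I used $z+\lambda^\star \ge \lambda^\star$ so that $B/(z+\lambda^\star)^{p/q} \le B/(\lambda^\star)^{p/q}$. Substituting the value computed above yields the first inequality of the lemma.

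For the second inequality, it suffices to prove $p^{1/p}q^{1/q} \le 2$, for which I would invoke weighted AM-GM with weights $1/p$ and $1/q$ (which sum to $1$): $p^{1/p} q^{1/q} \le (1/p)\cdot p + (1/q)\cdot q = 2$. There are no real obstacles here; the only mildly delicate point is remembering the identity $p/q + 1 = p$ so that the exponent computation in the minimizer works out cleanly.
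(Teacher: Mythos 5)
Your proof is correct and follows essentially the same route as the paper's: solve for the unconstrained minimizer $\lambda^\star = (pB/(qA))^{1/p}$ via calculus and the identity $p/q+1 = p$, test the feasible point $z+\lambda^\star$, simplify the resulting bracket to $p^{1/p}q^{1/q}$, and bound that factor by $2$ via weighted AM--GM (which the paper calls Young's inequality). The only cosmetic difference is that you explicitly justify the inequality $B/(z+\lambda^\star)^{p/q}\le B/(\lambda^\star)^{p/q}$ by monotonicity, a step the paper leaves implicit.
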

\begin{proof}
Differentiating to solve for the optimal unconstrained $\lambda$, we have
\begin{align*}
    A - \frac{pB}{q\lambda^{p/q+1}}=0.
\end{align*}
Notice that $1+p/q=p$. Then solving for $\lambda$ yields:
\begin{align*}
    \lambda_\star = \frac{(pB)^{1/p}}{(qA)^{1/p}}.
\end{align*}
Let us set $\lambda = z+\lambda_\star\ge 1$. Substituting, we have:
\begin{align*}
    A\lambda + \frac{B}{\lambda^{p/q}}&\le Az+A\lambda_\star + \frac{B}{\lambda_\star^{p/q}}\\
    &= Az+\frac{p^{1/p}B^{1/p}A^{1/q}}{q^{1/p}}+\frac{q^{1/q}A^{1/q}B^{1/p}}{p^{1/q}}\\
    &=Az+p^{1/p}q^{1/q}A^{1/q}B^{1/p}.
\end{align*}
Finally, notice from Young's inequality that $p^{1/p}q^{1/q}\le \frac{p}{p}+\frac{q}{q}=2$.
\end{proof}
\section{Lower bounds for dimension-dependent regret}\label{app:old-lb}
We now show that a lower bound in the $L_q$ setting even if we allow a dependence on the dimension.  Once again, at {\em every} step, the hints are $\Omega(1)$ correlated with the corresponding cost vectors.  In what follows, let $q > 2$ be any real number. 

\begin{theorem}\label{thm:lb-lqnorm-finite}
There exists a sequence of hint vectors $h_1, h_2, \dots$ and cost vectors $c_1, c_2, \dots$ in $\R^2$ such that (a)  $\iprod{c_t, h_t} \ge \Omega(1)$ for all $t$, and (b) any online learning algorithm that plays given the hints incurs an expected regret of $\Omega \left( T^{\frac{(q-2)}{2(q-1)}} \right) $.
\end{theorem}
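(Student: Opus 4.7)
The plan is to mimic the construction of Theorem~\ref{thm:lb-lqnorm} but collapse everything into two dimensions. Since the ``noise'' direction is now shared across all rounds instead of being a fresh coordinate per round, any hindsight-optimal vector will need to spend its $\ell_q$-budget on a single coordinate whose typical magnitude is only $\sqrt{T}$ rather than $T$; I expect this to shave the exponent exactly in half and give the claimed bound.

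Concretely, with basis $e_0,e_1$ and primal norm $\|\cdot\|_q$, I would draw i.i.d.\ uniform signs $\sigma_1,\dots,\sigma_T\in\{\pm 1\}$ and set
\[
h_t = e_0,\qquad c_t = e_0+\sigma_t e_1
\]
up to a $1/2^{1/p}$ normalization to enforce $\|c_t\|_p\le 1$, which only alters constants. Then $\langle c_t,h_t\rangle=\Omega(1)$ at every step, so condition (a) holds. Because $h_t$ is identical at every round, it carries no information about $\sigma_t$; consequently, for any algorithm $x_{t,1}$ is independent of $\sigma_t$, so $\E[\sigma_t x_{t,1}]=0$ and the expected total loss reduces to $\sum_t\E[x_{t,0}]\ge -T$, using $|x_{t,0}|\le\|x_t\|_q\le 1$.

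Then I would evaluate a benchmark of the form $u=-\beta_0 e_0-\beta_1\operatorname{sign}(S)\,e_1$, where $S=\sum_t\sigma_t$ and $\beta_0^q+\beta_1^q\le 1$. Its loss is $-\beta_0 T-\beta_1|S|$, and Khintchine (or a direct binomial calculation) gives $\E[|S|]\ge c\sqrt{T}$ for an absolute constant $c>0$. Subtracting yields
\[
\E[\regret]\ \ge\ (\beta_0-1)T+c\beta_1\sqrt{T}.
\]
Using $(1-x)^{1/q}\ge 1-x/q$ and taking $\beta_0=(1-\beta_1^q)^{1/q}$, the first term is at worst $-T\beta_1^q/q$. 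Balancing by $\beta_1^{q-1}=c/\sqrt{T}$, i.e.\ $\beta_1=c^{1/(q-1)}T^{-1/(2(q-1))}$, makes both contributions of order $\beta_1\sqrt{T}=c^{1/(q-1)}T^{(q-2)/(2(q-1))}$, with the positive term dominating by a factor of $(q-1)/q$. A final application of Yao's principle lifts the bound to randomized algorithms.

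The main obstacle is the balancing step: one must verify that the optimal $\beta_1$ lies in $[0,1]$ (which holds as soon as $T$ exceeds a constant depending on $q$) and that, after invoking the Bernoulli-style inequality $(1-x)^{1/q}\ge 1-x/q$, the positive $c\beta_1\sqrt{T}$ term really dominates the loss from $\beta_0<1$. Conceptually, the new point compared to Theorem~\ref{thm:lb-lqnorm} is that placing all the randomness in a single coordinate changes the typical deviation from $T$ to $\sqrt{T}$, and this is exactly what halves the exponent from $(q-2)/(q-1)$ to $(q-2)/(2(q-1))$.
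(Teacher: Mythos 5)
Your construction and benchmark are exactly the ones the paper uses, and the high-level plan (put all the randomness on a single shared coordinate, so the hindsight optimum only picks up a $\sqrt{T}$-size correlation) is correct. The gap is in the one inequality you flagged as the ``main obstacle,'' and it does in fact fail: Bernoulli's inequality gives $(1-x)^{1/q}\le 1-x/q$ for $q\ge1$ and $x\in[0,1]$ (concavity of $x\mapsto(1-x)^{1/q}$ puts it below its tangent at $0$), not $\ge$ as you wrote. So from $\beta_0=(1-\beta_1^q)^{1/q}$ you can only conclude $\beta_0\le 1-\beta_1^q/q$, which goes the wrong way for lower-bounding $(\beta_0-1)T$; as written your balancing step is not justified.

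This is fixable without changing the exponent. Two clean repairs: (i) use the correct-sided but weaker bound $(1-x)^{1/q}\ge 1-x$ (valid since $1-x\le1$ and $1/q\le1$), which gives $\beta_0\ge 1-\beta_1^q$; then set $\beta_1^{q-1}=c/(2\sqrt T)$ rather than $c/\sqrt T$ so that $-T\beta_1^q + c\beta_1\sqrt T\ge \tfrac{c}{2}\beta_1\sqrt T = \Omega\bigl(T^{(q-2)/(2(q-1))}\bigr)$; or (ii) do what the paper does and prescribe $\beta_0=1-\tfrac{3}{2q}\beta_1^q$ explicitly, then verify feasibility $\beta_0^q+\beta_1^q\le1$ via $(1-\tfrac{3\gamma}{2q})^q\le e^{-3\gamma/2}\le1-\gamma$ for $\gamma<1/2$, so that the inequality is invoked in the direction it actually holds. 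Either way the rest of your argument (information-theoretic step giving $\E[\sigma_t x_{t,1}]=0$, Khintchine for $\E|S|=\Omega(\sqrt T)$, Yao's lemma) is fine and the exponent $T^{(q-2)/(2(q-1))}$ comes out as claimed.
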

\begin{proof}
Again, let $e_1, e_2$ be an orthonormal basis for $\R^2$, and let $h_t = e_1$ for all $t$.  Let $c_t = e_1 \pm e_2$, where the choice of sign is uniformly random (if $c_t$ are needed to be in the unit ball, we can normalize the vectors; we skip this step as the analysis is identical). Thus for any $t$, we have $\iprod{c_t, h_t} = 1 \ge \Omega(1) \norm{c_t}$, for a constant depending only on $q$ (and always $\ge 1/2$).

Now consider any algorithm that plays $\{x_t\}$ within the unit $L_q$ ball. The expected loss is $\sum_t \iprod{e_1, x_t}$.  This is clearly at most $T$ in magnitude.  Now, let us consider the best vector in hindsight.  Let $z = c_{1:T}$, as before.  We have $z = T e_1 + w e_2$, for some $w$ of expected magnitude $\sqrt{T}$.  We can compute the vector in the $L_q$ ball with the ``best'' inner product with $z$.  One good choice turns out to be
\[ u = \left( 1- \frac{3}{2q} \cdot T^{-\frac{q}{2(q-1)}} \right) e_1 + \text{sign}(w) \cdot T^{-\frac{1}{2(q-1)}} e_2. \]
The fact that $\norm{u}_q \le 1$ follows using the inequality $(1-\frac{3\gamma}{2q})^q \le e^{-3\gamma/2} \le 1-\gamma$ for any $\gamma < 1/2$.

For this choice, using the expected magnitude of $w$,
\begin{align*}
\E[ \iprod{z, u} ] &= T - \frac{3}{2q} \cdot T^{1 - \frac{q}{2(q-1)}} + T^{\frac{1}{2} - \frac{1}{2(q-1)}} \\
&= T + \left(1 - \frac{3}{2q} \right) T^{\frac{(q-2)}{2(q-1)}}.
\end{align*}
Thus for any $q > 2$, the regret is $\Omega( T^{\frac{(q-2)}{2(q-1)}})$, as desired.
\end{proof}

\end{document}